\pgfplotsset{compat=1.14}%1.15
\newcommand{\calB}{{\cal B}}
\newcommand{\calD}{{\cal D}}
\newcommand{\calE}{{\cal E}}
\newcommand{\calF}{{\cal F}}
\newcommand{\calG}{{\cal G}}
\newcommand{\calN}{{\cal N}}
\newcommand{\calO}{{\cal O}}
\newcommand{\calP}{{\cal P}}
\newcommand{\calQ}{{\cal Q}}
\newcommand{\calS}{{\cal S}}
\newcommand{\calT}{{\cal T}}
\newcommand{\calV}{{\cal V}}
\newcommand{\calW}{{\cal W}}
\newcommand{\calX}{{\cal X}}
\newcommand{\setG}{\textsf{G}}
\newcommand{\bfe}{\mathbf{e}}
\newcommand{\bff}{\mathbf{f}}
\newcommand{\bfm}{\mathbf{m}}
\newcommand{\bfp}{\mathbf{p}}
\newcommand{\bfq}{\mathbf{q}}
\newcommand{\bfx}{\mathbf{x}}
\newcommand{\bfy}{\mathbf{y}}
\newcommand{\bfgamma}{\boldsymbol{\gamma}}
\newcommand{\bfzeta}{\boldsymbol{\zeta}}
\newcommand{\bfeta}{\boldsymbol{\eta}}
\newcommand{\bfmu}{\boldsymbol{\mu}}
\newcommand{\bfpi}{\boldsymbol{\pi}}
\newcommand{\bfomega}{\boldsymbol{\omega}}
\newcommand{\bfE}{\mathbf{E}}
\newcommand{\bfR}{\mathbf{R}}
\newcommand{\bbR}{\mathbb{R}}
\newcommand{\env}{\calW} % All points in the environment
\newcommand{\allPs}{\calP_{\#}} %The set of all pseudo points 
\newcommand{\cP}[2][]{\calP_{#2}^{#1}} %Cumulated pseudo points 
\renewcommand{\P}[2][]{\tilde{\calP}_{#2}^{#1}} %Pseudo points with classes 
\newcommand{\D}[2][]{\tilde\calD_{#2}^{#1}} %Inc data
\newcommand{\X}[2][]{\tilde\calX_{#2}^{#1}} %Inc data location
\newcommand{\Y}[2][]{\tilde\bby_{#2}^{#1}} %Inc data value
\newcommand{\Z}[2][]{Z_{#2}^{#1}} %Inc data value
\def\weight{W} %Weight matrix
\def\rb{i} %robot counter 1
\def\rbsec{j} %robot counter 2
\def\cls{l}%class counter
\def\msr{k}%measure counter
\def\classN{{\ensuremath{\mathcal C}}} % Number of classes
\newtheorem{proposition}{Proposition}
\newtheorem{lemma}{Lemma}
\theoremstyle{definition}
\newtheorem{definition}{Definition}
\theoremstyle{remark}
\newtheorem{remark}{Remark}
\newcommand{\scaleMathLine}[2][1]{\resizebox{#1\linewidth}{!}{$\displaystyle{#2}$}}
\newcommand{\prl}[1]{\left(#1\right)}
\newcommand{\brl}[1]{\left[#1\right]}
\newcommand{\crl}[1]{\left\{#1\right\}}
\DeclareMathOperator*{\diag}{diag}
\def \sign    {\text{\normalfont sign} }
\title{\LARGE \bf
Dense Incremental Metric-Semantic Mapping for Multi-Agent Systems via Sparse Gaussian Process Regression}
\author{Ehsan Zobeidi$^1$ \and Alec Koppel$^2$ \and Nikolay Atanasov$^1$% <-this % stops a space
\thanks{We gratefully acknowledge support from ARL DCIST CRA W911NF-17-2-0181, NSF NRI CNS-1830399, and ONR SAI N00014-18-1-2828.}%
\thanks{$^{1}$Department of Electrical and Computer Engineering, University of California, San Diego, La Jolla, CA 92093, USA
        {\tt\small \{ezobeidi,natanasov\}@ucsd.edu}}%
\thanks{$^{2}$Computational and Information Sciences Directorate, U.S. Army Research Laboratory, Adelphi, MD 20783, USA
        {\tt\small alec.e.koppel.civ@mail.mil}}%
}
\begin{document}

\maketitle
\thispagestyle{empty}
\pagestyle{empty}

%%%%%%%%%%%%%%%%%%%%%%%%%%%%%%%%%%%%%%%%%%%%%%%%%%%%%%%%%%%%%%%%%%%%%%%%%%%%%%%%

%%%%%%%%%%%%%%%%%%%%%%%%%%%%%%%%%%%%%%%%%%%%%%%%%%%%%%%%%%%%%%%%%%%%%%%%%%%%%%%%
%!TEX root = ../main.tex
%%%%%%%%%%%%%%%%%%%%%%%%%%%%%%%
%%% SECTION : Introduction   %%%
%%%%%%%%%%%%%%%%%%%%%%%%%%%%%%%

% varitional free energy 
%  which reduces posterior representation to a finite set of pseudo points. 
% high-dimensional domains or 

\begin{abstract}
We develop an online probabilistic metric-semantic mapping approach for mobile robot teams relying on streaming RGB-D observations. The generated maps contain full continuous distributional information about the geometric surfaces and semantic labels (e.g., chair, table, wall). Our approach is based on online Gaussian Process (GP) training and inference, and avoids the complexity of GP classification by regressing a truncated signed distance function (TSDF) of the regions occupied by different semantic classes. Online regression is enabled through a sparse pseudo-point approximation of the GP posterior. To scale to large environments, we further consider spatial domain partitioning via an octree data structure with overlapping leaves. An extension to the multi-robot setting is developed by having each robot execute its own online measurement update and then combine its posterior parameters via local weighted geometric averaging with those of its neighbors. This yields a distributed information processing architecture in which the GP map estimates of all robots converge to a common map of the environment while relying only on local one-hop communication. Our experiments demonstrate the effectiveness of the probabilistic metric-semantic mapping technique in 2-D and 3-D environments in both single and multi-robot settings.
\end{abstract}

\section{Introduction}
\label{sec:introduction}
%In the near future, robots may assist in many transportation, construction, security, and environmental monitoring services. 
Autonomous systems navigating and executing complex tasks in real-world environments require an understanding of the 3-D geometry and semantic context of the environment. This paper develops a probabilistic metric-semantic mapping algorithm, using streaming distance and semantic category observations onboard a robot, to reconstruct geometric surfaces and their semantic identity (e.g., chairs, tables, doors) via sparse online GP regression. In addition to a multi-modal environment abstraction, probabilistic metric-semantic mapping provides uncertainty estimates that can aid safe navigation and active mapping algorithms. To support collaboration among multiple robots operating in the same environment, we also consider a distributed setting in which each robot observes the environment locally, with its onboard sensors, and communicates its local map with one-hop neighbor robots to arrive at a common map of the environment observed across the whole robot network.

%The merit of focusing on probabilistic mapping is to permit it to be incorporated into, e.g., safe or uncertainty-aware planning.
%
%Additionally, employing multiple robots may be advantageous due the ability to cover a geographic area more expeditiously, for which we develop extensions upon the principle of decentralized maximum a posteriori updates\green{"decentralized maximum a posteriori updates" is it true?}. The overall approach, then, inherits the state of the art approaches to streaming probabilistic Gaussian Process approximations, synthesizes it with data structures that facilitate scaling to large domains, and contributes a new extension to decentralized settings.

%In this regard we need to incorporate the aggregated information from several robots or a shared information on a cloud. In this work, we  This paper extends the solution to the distributed case, such that the distributed solution is online, practically very fast, with very fast convergence, and it keeps all the merits of central case. Specifically, it keeps the problem a non-parametric learning problem via geometric averaging of the Gaussian Processes, which allows the idea to apply to large-scale environments with a lot of different robots investigating it.   % 
%We specifically focus on a probabilistic approach so that confidence and quantile information may be incorporated into decision-making in pursuit of certifiable safety and autonomous uncertainty reduction.

\begin{figure}[t]
\centering
\includegraphics[width=0.33\linewidth]{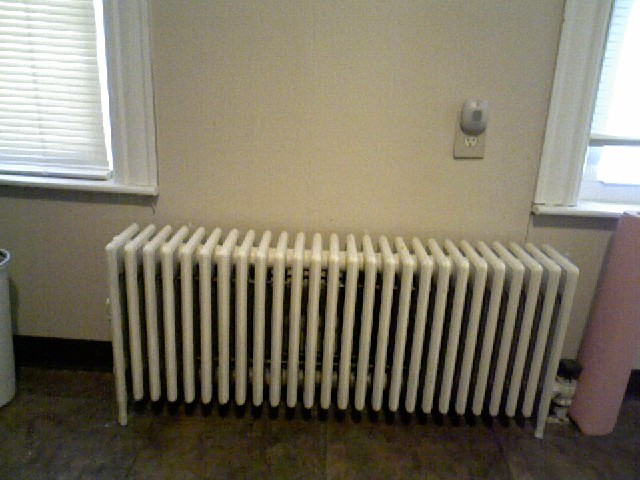}%
\includegraphics[width=0.33\linewidth]{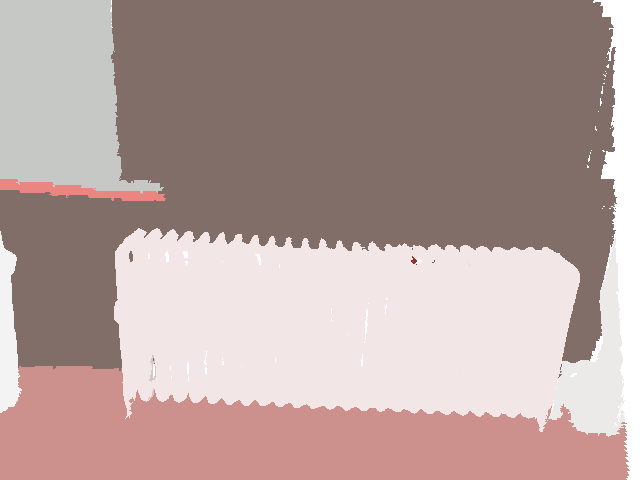}%
\includegraphics[width=0.33\linewidth]{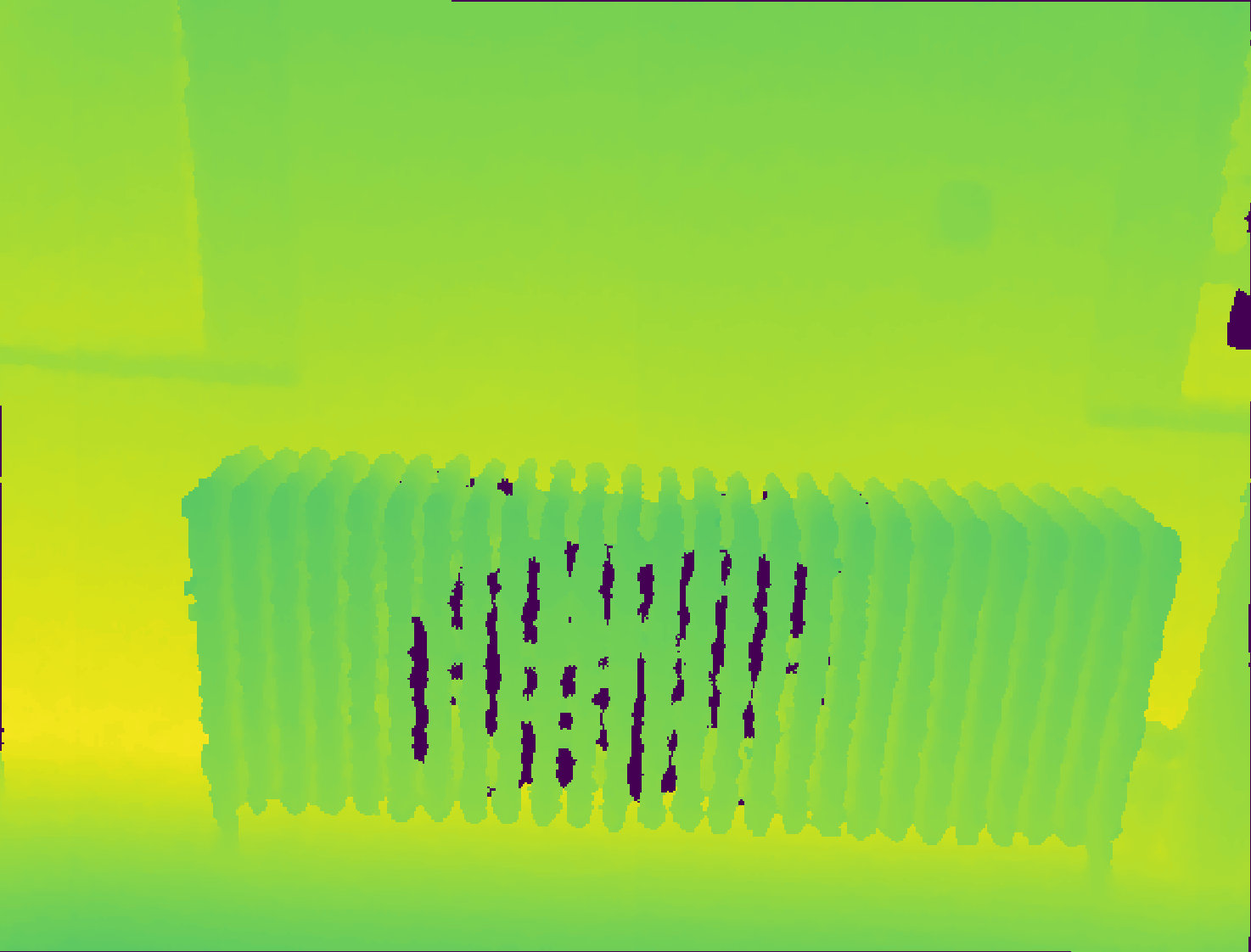}\\
\includegraphics[width=0.33\linewidth]{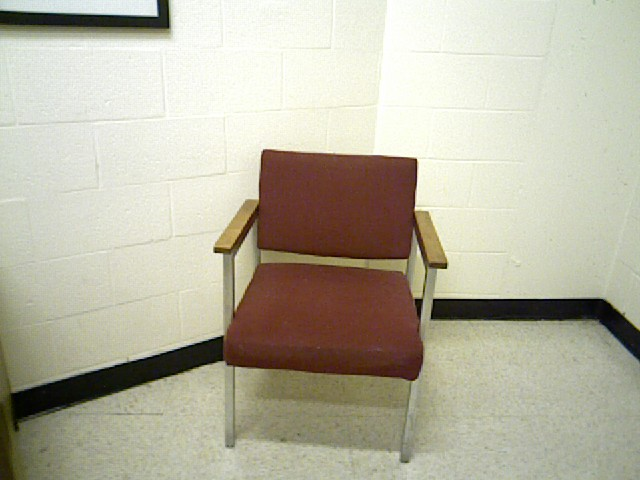}%
\includegraphics[width=0.33\linewidth]{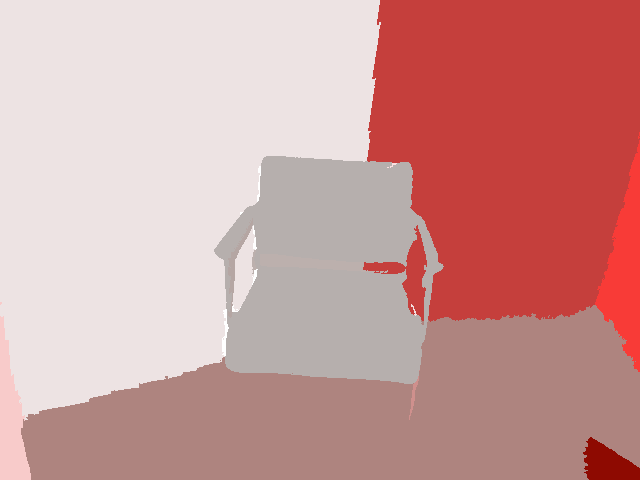}%
\includegraphics[width=0.33\linewidth]{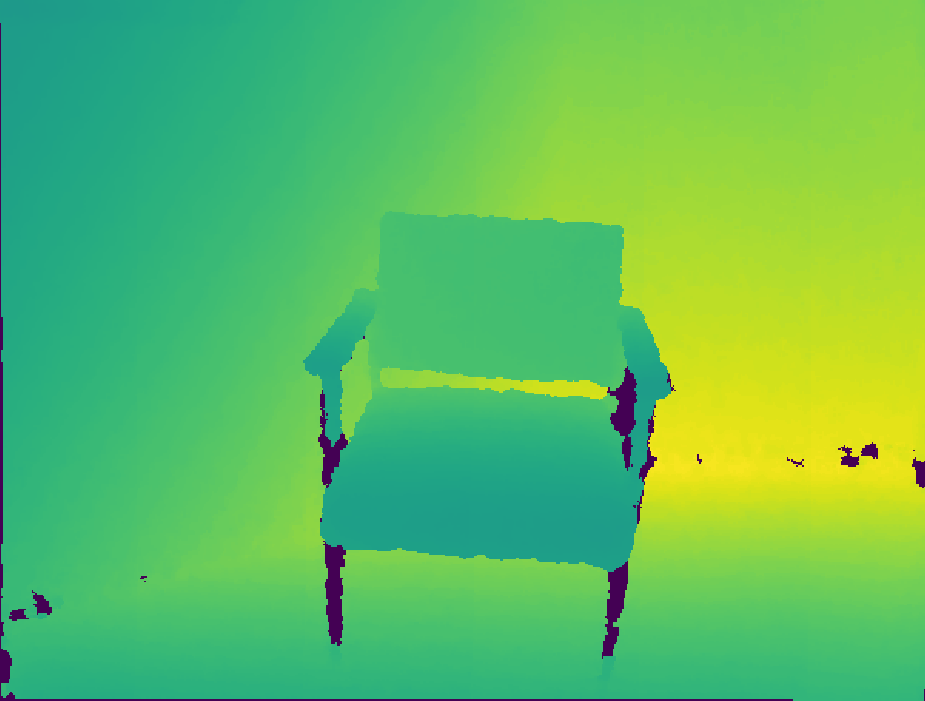}
\caption{RGB images (first column), segmented images (second column), and depth images (third column) used by the proposed approach for online construction of dense metric-semantic maps.}
\label{RGBDex}
\end{figure}

We focus on a TSDF representation \cite{curless1996volumetric,kazhdan2006poisson} which defines geometric surfaces implicitly, as the zero level-set of a TSDF function. TSDF surface representations have gained popularity due to their high accuracy (compared to regular, adaptive, or sparse grid representations \cite{hornung2013octomap, niessner2013real}) and ability to directly provide distance and gradient information (compared to explicit mesh representations \cite{kimera}) useful to specification of safety and visibility constraints. Classification of the geometric surfaces into semantic categories is crucial for context understanding and specification of complex robot tasks \cite{mccormac2017semanticfusion, hermans2014dense,kundu2014joint}. Many classification techniques, however, provide maximum likelihood, instead of Bayesian, estimates because efficient probabilistic classification remains an open problem in machine learning \cite{GP-classification-1,GP-classification-2}. The challenge is that discrete data likelihoods are not conjugate with a continuous map prior. While one may employ Laplace approximations to partially mitigate this challenge \cite{jadidi2017gaussian}), we propose a multi-class TSDF inference approach based on Bayesian \emph{regression}.

%To build an understanding of the environment, robot systems must discern semantic categories and distances in every sensor measurement. While the amount of measured data keeps growing over time, the underlying geometric and semantic structure of the scene may not. Hence, an important consideration for metric-semantic mapping is to build maps whose memory and computation requirements are designated by the structure of the scene, and not growing arbitrarily with time. 

% Unfortunately, GP training scales cubically with the number training examples. There are various ways to address this bottleneck of non-parametric inference techniques \cite{snelson2006sparse, vff, koppel2019consistent, koppel2019optimally}. In our setting, multiple distance and category observations are obtained from the same scene and the growing data is not necessarily representative of the underlying environment complexity. 

%its computational efficiency, its amenability to incorporating sensor noise, and that it yields human-readable meshes discerned from surfaces.

We employ GP regression \cite{rasmussen2003gaussian} to incorporate spatial correlation into a probabilistic resolution-free TSDF map of the 3-D environment. GP inference has been successfully used to obtain continuous map representations \cite{o2012gaussian, kim2013occupancy, jadidi2014exploration} but existing formulations are binary (instead of multi-class) and model occupancy (instead of a distance field). Range sensors, such as Lidars and depth cameras, do not provide direct TSDF observations because they measure distance in a specific viewing direction rather than to the nearest obstacle surface. To obtain TSDF training examples, we triangulate each depth image into a local mesh surface and measure the distance to it from a set of 3-D locations. 

Onboard sensors provide repeated observations of the same scene. While this redundancy is important for mitigating measurement noise, the amount of training data keeps growing over time. Hence, an important consideration for metric-semantic mapping is to build maps whose memory and computation requirements are designated by the underlying structure of the environment, rather than the number of distance and category observations. Unfortunately, GP training scales cubically with the number training examples but there are various ways to address this bottleneck \cite{snelson2006sparse, vff, koppel2019consistent, koppel2019optimally}. We observe that, in our setting, the data can be compressed significantly through averaging before GP training and, notably, this does not affect the posterior TSDF distribution. The remaining training pairs are used as \emph{pseudo points} \cite{snelson2006sparse} to support the continuous GP representation with a finite set of parameters. To reduce the complexity in large maps further, one might consider local kriging, decomposing the spatial domain into subdomains and making predictions at a test location using only the pseudo points contained within the subdomain. Choosing independent subdomains, however, leads to discontinuities of the predicted TSDF function at the subdomain boundaries. Ensemble methods that construct multiple local estimators and use a weighted combination of their predictions include Bayesian committee machines \cite{tresp2000bayesian,kim2014recursive}, sparse probabilistic regression \cite{bauer2016understanding}, or infinite mixtures Gaussian process experts \cite{rasmussen2002infinite}. These techniques avoid the discontinuities of local kriging but their computation cost is still significant for online training. Inspired by the adaptive occupancy representation of Octomap \cite{hornung2013octomap}, we propose an efficient approach that decomposes the environment into an Octree of overlapping subdomains, while preventing discontinuities in the GP posterior. Combining these ideas yields a hierarchical pseudo-point parameterization of the GP, which may be updated online to achieve incremental probabilistic mapping. Our method generates dense metric-semantic surfaces and, yet, remains efficient even in large environments.

Finally, we provide a distributed formulation of our TSDF GP regression, enabling multiple robots to collaboratively build a common metric-semantic map of the environment. Each robot updates a local GP Octree pseudo-point approximation but synchronizes its pseudo-point statistics by averaging with its one-hop communication neighbors. Our distributed inference approach is inspired by probabilistic consensus techniques \cite{nedic2017distributed, jadbabaie2012non}, but we generalize those from using a fixed parameter dimension to a changing number of pseudo-point parameters, resulting from robots observing new environment regions online. We prove that the local GP estimates of each individual robot converge in \emph{finite time} to the same GP posterior that would have been obtained by a central server using all observations obtained from all robots.

A preliminary version of this work was presented in \cite{Zobeidi_GPMapping_IROS20}. This version improves the theoretical development for the centralized single-robot setting and extends the approach to a decentralized multi-robot setting by introducing a novel approach for distributed incremental sparse GP regression with theoretical guarantees for consistent estimation. Additionally, this paper demonstrates the effectiveness of our decentralized approach via evaluations in 2-D simulation and 3-D real data sets. The main \textbf{contributions} of this work are to:
\begin{itemize}

\item develop an online GP training and inference algorithm for TSDF regression that enables 3-D semantic segmentation of the environment from streaming sensor data,

\item ensure controllable computation and memory complexity while providing a continuous-space probabilistic representations of the environment,

\item provide a distributed formulation of the TSDF GP regression, which enables a robot team to collaboratively build a common metric-semantic map from local observations and one-hop communication with provably equivalent quality to batch centralized estimation.
%\item extend this framework to multi-agent networks, which retains the merits of centralized case, avoiding echo of inference in the network, and inherits theoretical guarantees of fast convergence, in the limited time;
%
%\item demonstrate the proposed metric-semantic mapping approach in simulated and real-world datasets.
%
\end{itemize}
Our metric-semantic mapping approach is demonstrated in simulated and real-world datasets and may be used either offline, with all sensory data provided in advance, or online, processing distance and semantic category observations incrementally as they arrive.

%
%Our algorithm retains full distributional information and may be used either offline, with all sensory data provided in advance, or online, processing RGB-D observations incrementally as they arrive. Before shifting towards its derivation, we first provide an expanded discussion of the technical context of our approach.

%!TEX root = ../main.tex
%%%%%%%%%%%%%%%%%%%%%%%%%%%%%%%
%%% SECTION : Introduction   %%%
%%%%%%%%%%%%%%%%%%%%%%%%%%%%%%%
\section{Related Work}
\label{sec:relatedworks}

Various representations have been proposed for occupancy or geometric surface estimation from range or depth measurements. Occupancy grid mapping \cite{elfes1989using} discretizes the environment into a regular voxel grid and estimates the occupancy probability of each voxel independently. A dense voxel representation quickly becomes infeasible for large domains and adaptive resolution data structures, such as an octree, are necessary \cite{hornung2013octomap,supereight}. While accurate maps may also be constructed using point cloud \cite{engel2014lsd,segmap} or surfel \cite{henry2012rgb,surfel-mapping} representations, such sparse maps do not easily support collision and visibility checking for motion and manipulation planning. Recent work is considering explicit polygonal mesh \cite{teixeira2016real,piazza2018real,kimera} and implicit signed distance function \cite{newcombe2011kinectfusion, whelan2016elasticfusion,oleynikova2017voxblox,han2019fiesta} models. We focus our review on TSDF techniques as they are most closely related to our work.

% TSDF Mapping 
The seminal work of Curless et al. \cite{curless1996volumetric} emphasized the representation power of TSDF and showed that dense surface modeling can be done incrementally using range images. KinectFusion \cite{newcombe2011kinectfusion} achieved online TSDF mapping and RGB-D camera pose estimation by storing weighted TSDF values in a voxel grid and performing multi-scale iterative closest point (ICP) alignment between the predicted surface and the depth images. Niessner et al.~\cite{niessner2013real} demonstrated that TSDF mapping can be achieved without regular or hierarchical grid data structures by hashing TSDF values only at voxels near the surfaces. These three works inspired a lot of subsequent research, allowing mapping of large environments \cite{kintinuous}, real-time operation without GPU acceleration \cite{klingensmith2015chisel,flashfusion}, map correction upon loop closure \cite{InfiniTAM,voxgraph}, and semantic category inference \cite{grinvald2019volumetric}. Bylow et al.~\cite{bylow2013real} propose a direct minimization of TSDF projective depth error instead of relaying on explicit data association or downsampling as in ICP. TSDF maps are accurate and collision checking in them is essentially a look-up operation, prompting their use as an alternative to occupancy grids for robot motion planning and collision checking \cite{Oleynikova2016ContinuoustimeTO,han2019fiesta}. Voxblox \cite{oleynikova2017voxblox} incrementally builds a (non-truncated) Euclidean signed distance field (ESDF), applying a wavefront algorithm to the hashed TSDF values. Fiesta \cite{han2019fiesta} improves the ESDF construction by introducing two independent queues for inserting and deleting obstacles. Saulnier et al.~\cite{Saulnier_ActiveMapping_ICRA20} show that weights of the TSDF values arise as the variance of a Kalman filter and may be used as an uncertainty measure for autonomous exploration and active TSDF mapping.

% stores the occupied points and points on the rays (as free points) in a kd-tree structure. Then employs a Gaussian Process regression to learn the latent function. To address the cubic complexity they used 
% Probabilistic
Most TSDF mapping techniques, however, forgo probabilistic representations in the interest of scalability. Gaussian process (GP) inference has been used to capture correlation in binary occupancy mapping. O'Callaghan et al. \cite{o2012gaussian} is among the first works to apply GP regression to infer a latent occupancy function using data from a range sensor. The GP posterior is squashed to a binary observation model a posteriori to recover occupancy likelihood. The resulting probabilistic least-squares method is more efficient than GP classification but still scales cubically with the amount of training data. To address this, several works \cite{kim2014recursive,kim2013occupancy,Kim2015,Wang_GPRegressionMapping} rely on sparse kernels to perform separate GP regressions with small subsets of the training data and Bayesian Committee Machines (BCM) to fuse the separate estimates into a full probabilistic occupancy map. Ramos et al. \cite{ramos2016hilbert} proposed fast kernel approximations to project the occupancy data into a Hilbert space where a logistic regression classifier can distinguish occupied and free space. This idea has been extended to dynamic maps \cite{bayesian_hilbert} as well as into a variational autoencoder formulation \cite{Guizilini-RSS-17} that compresses the local spatial information into a latent low-dimensional feature representation and then decodes it to infer the occupancy of a scene. Guo and Atanasov \cite{guo2019information} showed that using a regular grid discretization of the latent function and a decomposable radial kernel leads to special structure of the kernel matrix (kronecker product of Toeplitz matrices) that allows linear time and memory representation of the occupancy distribution.

% Semantic
% Semantic information is another interesting specification for the researchers to be included in the map representation.
% CRFs also require observing each map sub-region completely, making their use for incremental mapping challenging. 

Augmenting occupancy representations with object and surface category information is an important extension, allowing improved situational awareness and complex mission specification for robots. Several works \cite{hermans2014dense, vineet2015incremental, sengupta2015semantic, yang2017semantic,zhao2016building} employ conditional random fields (CRFs) to capture semantic information. Vineet et al. \cite{vineet2015incremental} provide incremental reconstruction and semantic segmentation of outdoor environments using a hash-based voxel map and a mean-field inference algorithm for densely-connected CRFs. These techniques are accurate but also computationally expensive because they operate over each map element. Zheng et al. \cite{zheng2018pixels} incorporate spatial information across multiple levels of abstraction and form a probability distribution over semantic attributes and geometric representations of places using TopoNet, a deep sum-product neural network. Grinvald et al. \cite{grinvald2019volumetric} reconstruct individual object shapes from multi-view segmented images and assemble the estimates in a voxelized TSDF map. Gan et al. \cite{gan2020bayesian} propose a continuous-space multi-class mapping approach, which relies on a Dirichlet class prior, a Categorical observation likelihood, and Bayesian kernel inference to extrapolate the class likelihoods to continuous space. Rosinol et al. \cite{kimera}, provides a modern perception library by combining the state of the art in geometric and semantic understanding.

In many applications, metric-semantic mapping may be performed by a team of collaborating robots. Relying on centralized estimation has numerous limitations related to the communication, computation, and storage requirements of collecting all robot measurements and map estimates at a central server. It is important to develop distributed techniques that allow local inference and storage at each robot, communication over few-hop neighborhoods, and consensus among the robot estimates. Techniques extending network consensus \cite{consensus} to distributed probabilistic estimation \cite{rad2010distributed,nbsl,atanasov2014joint,nedic2016distributed,nedic2017distributed} are closely related. These works show that distributed estimation of a finite-dimensional parameter is consistent when the probability density functions maintained by different nodes are averaged over one-hop neighborhoods in a strongly connected, potentially time-varying graph. Our work extends these techniques to distributed probabilistic estimation functions relying on local averaging of sparse (pseudo-point) GP distributions. Specific to cooperative semantic mapping, Choudhary et al. \cite{choudhary2017distributed} develop distributed pose-graph optimization algorithms based on successive and Jacobi over-relaxation to split the computation among the robots. Koch et al. \cite{koch2016multi} develop a parallel multi-threaded implementation for cooperative 2-D SDF mapping. Lajoie et al. \cite{door-slam} propose a distributed SLAM approach with peer-to-peer communication that rejects spurious inter-robot loop closures using pairwise consistent measurement sets.

\section{Problem Formulation}
\label{sec:problem_formulation}
%\def\dim{3}
%\def\class{{\ensuremath{ c}}}
%Until now we had a sensor, hereafter we investigated the multi-agent case, i.e. when we have several sensors. 

%
% The communication network interconnecting the robots, is represented by an undirected graph $G = (V, E)$ with vertices $V := \{1,...,n\}$ corresponding to the robots and their edges to be $|E|$.

%{\bf \noindent Preliminaries.} 

Consider a team of $n$ robots, communicating over a network represented as an undirected graph $G = (\calV, \calE)$ with vertices $\calV := \{1,...,n\}$ and edges $\calE \subset \calV \times \calV$. An edge $(\rb,\rbsec) \in \calE$ from robot $\rb$ to robot $\rbsec$ exists if the two robots can communicate. The robots directly connected to robot $\rb$ are called \emph{neighbors} and will be denoted by $\mathcal{N}_\rb := \crl{j \in \calV \mid (i,j) \in \calE}$.

The robots operate in an unknown workspace, represented as a subset of Euclidean space, $\env\subset \reals^3$. The workspace consists of two disjoint subsets $\calO$ and $\calF$, comprising obstacles and free space, respectively, i.e., $\env=\calO\cup\calF$. The obstacle region is a closed set that is a pairwise disjoint union, $\calO=\cup_{\cls=1}^\classN\calO_\cls$, of $\classN$ closed sets, each denoting the region occupied by object instances from the same semantic class. For example, $\calO_1$ may be the space occupied by all chairs, while $\calO_2$ may be the space occupied by all tables.

% Each class of these $N$ classes represents either a singleton like one specific object, or compound, i.e., consisting of multiple objects of one type. 

Each robot is equipped with a sensor, such as a lidar scanner or an RGB-D camera, that provides distance and class observations of the objects in its vicinity. We assume that the position $\bfp^\rb_t \in \reals^3$ and orientation $\bfR^\rb_t \in SO(3)$ of each sensor $i \in \calV$ at time step $t$ are known, e.g., from a localization algorithm running onboard the robots. We model a sensor observation as a set of rays (unit vectors), e.g., corresponding to lidar scan rays or RGB-D image pixels.

\begin{definition}
A \emph{sensor frame} $\bfE^\rb = \{\bfeta_\msr^\rb\}_{\msr}$ is a set of vectors $\bfeta_\msr^\rb \in \mathbb{R}^3$ such that $\|\bfeta_\msr^\rb\| = 1$, $\forall i, k$.
%Note that $\bfE^\rb$ corresponds to the set of pixels in the $\rb$-th camera image.
\end{definition}

%, determined by the sensor orientation $\bfR^\rb_t$. 
%The ray may have finite length if it hits the obstacle set $\calO$.

At time $t$, the $\msr$-th sensor ray of robot $\rb$, starts at position $\bfp^\rb_t$ and has direction $\bbR^\rb_t\bfeta^\rb_\msr$. Each ray measures the distance to and semantic class of the object that it intersects with first. In practice, the class measurements are obtained from a semantic segmentation algorithm (e.g., \cite{milioto2019icra}), applied to the RGB image or lidar scan (see Fig.~\ref{RGBDex}), while the distance measurements are provided either as a transformation of the depth image or directly from the lidar scan.

\begin{definition}
	A \emph{sensor observation} of robot $\rb$ at time $t$ is a collection of distance $\lambda^\rb_{t,\msr} \in \reals_{\geq 0}$ and object class $c^\rb_{t,\msr} \in \{1,...,\classN\}$ measurements acquired along the rays $\bfeta_\msr^\rb \in \bfE^\rb$. 
%Note that $\lambda^\rb_{t,\msr}$ and $c^\rb_{t,\msr}$ correspond to the pixel values.
\end{definition}

We define the relationship among the object sets $\calO_\cls$ and the sensor observations $\lambda^\rb_{t,\msr}$, $c^\rb_{t,\msr}$ next.

\begin{definition}\label{direcSDF}
	The \emph{truncated signed directional distance function} (TSDDF) $h_\cls(\bfx,\bfeta)$ of object class $\calO_\cls$, is the signed distance from $\bfx\in\env$ to the boundary $\partial\calO_\cls$ in direction $\bfeta\in\mathbb{R}^3$, truncated to a maximum of $\bar{d} \geq 0$, i.e.,
	\begin{align} \label{eq:H_signed_distance}
		h_\cls(\bbx,\bfeta) &:= 
		\begin{cases}
			-\min \left( d_{\bfeta}(\bbx,\partial\calO_\cls), \bar{d}\right)& \text{if } \bbx\in \calO_\cls\\
			\;\;\;\min \left(d_{\bfeta}(\bbx,\partial\calO_\cls), \bar{d}\right)& \text{if } \bbx\in \env\setminus\calO_\cls,
		\end{cases}\notag\\
		d_{\bfeta}(\bbx,\partial\calO_\cls) &:= \min \left\{ d \geq 0 \;\big\vert\; \bfx + d \bfeta \in  \partial\calO_\cls \right\}. 
	\end{align}
	%
	%where $\bfe_j$ is the $j$-the standard basis vector, and the depth is truncated to a maximum of $\bar{d} \in \mathbb{R}_{+}$.
	%The DTSDF with minimum absolute value over all classes is defined as $h(\bbx,\bfeta) := h_{i^*}(\bbx,\bfeta)$, where $i^* :=\argmin_{i \in \{1,\ldots,N\}} |h_i(\bbx,\bfeta)|$.
\end{definition}

%%%%%%%%%%%%%%%%%%%%%%%%%%%%%%%%%%%%%%%%%%%%%%%%%%%%%%%%%%%%%%%%%%%%%%%%%%%%%%%%%%%%%%%%%%%%%%%%%%%%%%%%%%%%%%%%%%%%%%%%%%%%%%%%%%%%%%%%%%%%%%%%%%%%%%%%%%%%%%%%%%%%%%%%%%%%%%%%%%%%%%%%%%%%%%%%%%%%%%%%%%%%

According to Def.~\ref{direcSDF}, $h_\cls(\bfp^\rb_t, \bbR^\rb_t\bfeta_\msr^\rb)$ is the (truncated) distance from sensor position $\bfp^\rb_t$ to object class $\calO_\cls$ along the direction $\bbR^\rb_t\bfeta_\msr^\rb$ of the $\msr$-th ray at time $t$. The class observation $c^\rb_{t,\msr}$ is determined by the object set $\calO_\cls$ with minimum absolute TSDDF to $\bfp^\rb_t$ along $\bbR^\rb_t\bfeta_\msr^\rb$:
\begin{equation}\label{eq:class_measurement}
	c^\rb_{t,\msr} = \argmin_{\cls \in \{1,\ldots,\classN\}} |h_\cls(\bfp^\rb_t,\bbR^\rb_t\bfeta_\msr^\rb)|.
\end{equation}
The distance observation $\lambda^\rb_{t,\msr}$ is a noisy measurement of the distance to the nearest object class: 
\begin{equation}\label{eq:data}
	\lambda^\rb_{t,\msr} = h_{c^\rb_{t,\msr}}(\bfp^\rb_t, \bbR^\rb_t \bfeta_\msr^\rb) + \epsilon, \qquad \epsilon \sim \ccalN(0,\sigma^2),
\end{equation}
where $\sigma^2$ is the variance of the distance measurement noise. These definitions are illustrated in Fig.~\ref{fig:observations}.

%the functions $h_\cls(\bbx,\bfeta)$ for $\cls=1,\ldots,\classN$ \green{is this true?}. Note that $\{h_\cls\}$ implicitly defines the object sets $\calO_\cls = \left\{ \bfx \in \calX \mid \min_{\bfeta} h_\cls(\bbx,\bfeta) \leq 0 \right\}$. The TSDDF $h_\cls(\bbx,\bfeta)$ (cf. Def.~\ref{direcSDF}) is defined for an arbitrary direction $\bfeta$. To reduce the dimension of its domain, we define its minimum over the directions $\bfeta$.

Given sensor poses $\bfp^\rb_t$, $\bfR^\rb_t$ and streaming onboard observations $\lambda^\rb_{t,\msr}$, $c^\rb_{t,\msr}$ for $t = 1, 2, \ldots$, the main objective of this work is to construct a metric-semantic map of the observed environment online by estimating the object class sets $\calO_l = \left\{ \bfx \in \env \mid \min_{\bfeta} h_\cls(\bfx,\bfeta) \leq 0 \right\}$, implicitly represented by the TSDDFs $h_\cls(\bbx,\bfeta)$. Note that each object class is associated with a posterior distribution over sensor frames $\bfeta$. To reduce the complexity of estimating TSDDFs, which are defined for arbitrary directions $\bfeta$, we consider the more usual TSDF model, defined as the minimum of a TSDDF over $\bfeta$.

\begin{definition}
	The \emph{truncated signed distance function} (TSDF) $f_\cls(\bbx)$ of object class $\calO_\cls$ is the truncated signed distance from $\bbx\in\ccalX$ to the boundary $\partial\calO_\cls$, i.e.,
	\begin{align}\label{eq:H_signed_distance}
		f_\cls(\bbx):= h_\cls(\bbx,\bfeta^*) \ \text{where} \
		\bfeta^*=\argmin_{\bfeta }|h_\cls(\bbx,\bfeta)|.
	\end{align}
\end{definition}

We develop incremental sparse Gaussian Process regression to maintain distributions $\mathcal{GP}(\mu^\rb_{t,\cls}(\bfx), k^\rb_{t,\cls}(\bfx,\bfx'))$ over the TSDF functions $f_\cls(\bfx)$ in \eqref{eq:H_signed_distance} at each robot $\rb$, conditioned on the sensor observations $\left\{\lambda^\rb_{\tau,\msr},c^\rb_{\tau,\msr}\right\}$ up to time $t$. We propose a new data compression technique in Sec.~\ref{sec:GP-regression} and apply it in the design of the GP training algorithm for probabilistic TSDF inference in Sec.~\ref{sec:algorithm}. Our approach generates a continuous-space probabilistic model of the distance to and semantic classes of the environment surfaces. To achieve scalable online mapping of large domains, we train independent sparse GP models over an octree cover of the 3-D space. 

Next, we extend our approach from a centralized single-robot to a distributed multi-robot formulation. We develop new techniques for distributed incremental sparse GP regression in Sec.~\ref{sec:DistributedGP} and apply them to the collaborative semantic TSDF mapping problem in Sec.~\ref{sec:DistributedMSM}. Our method allows each robot to update its own sparse TSDF GP model, relying on local sensor observations and one-hop information exchange with its neighborhoods, yet guarantees theoretically that the model parameters of different robots converge in finite-time to the same parameters that would be obtained by centralized GP regression. The effectiveness of our approach is demonstrated in single- and multi-robot experiments using simulated 2-D data in Sec.~\ref{sec:evaluation2d} and real 3-D data in Sec.~\ref{sec:evaluation3d}.

\begin{figure}[t]
	\input{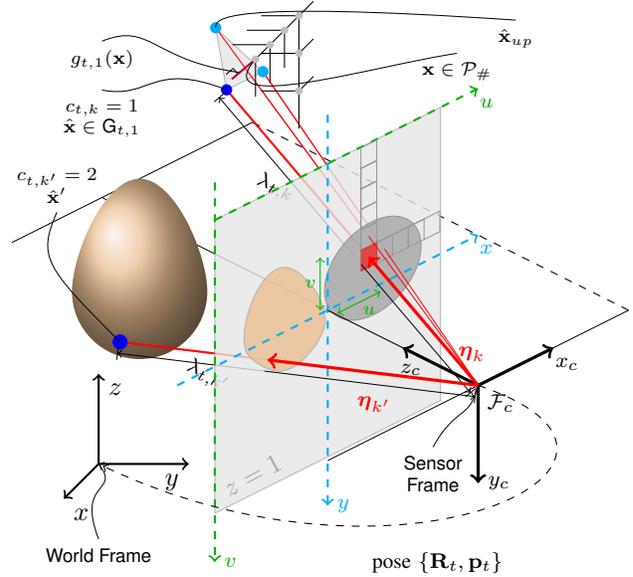}
	%\includegraphics[width=\linewidth,trim=0mm 2mm 0mm 0mm, clip]{fig/sensor_observations.pdf}
	%\vspace*{-4ex}
		\caption{Sensor observation at time $t$ showing the distance $\lambda_{t,k}$, $\lambda_{t,k'}$ and class $c_{t,k}$, $c_{t,k'}$ measurements obtained along sensors rays $\bfeta_k$, $\bfeta_k' \in \bfE$ when a camera sensor is at position $\bfp_t$ with orientation $\bfR_t$. The pseudo points $\allPs$ (see Sec.~\ref{sec:training_set}) close to the observed surface are shown in gray.}
	\label{fig:observations}
\end{figure}

\section{Data Compression for Incremental Sparse Gaussian Process Regression}
\label{sec:GP-regression}

This section reviews sparse Gaussian Process regression and introduces a new approach for compressing training data acquired by repeated observation of the same locations, which is typical when an onboard robot sensor observes the same environment multiple times. Our data compression allows training a GP model with much fewer samples, yet provably generates the same GP posterior that would have been computed using the full uncompressed training set. Finally, the sparse GP model and the data compression allow us to design an efficient incremental GP algorithm that updates the GP posterior with sequential data instead of recomputing it from scratch.

%======================================================================
\subsection{Background on Sparse GP Regression}
\label{sec:sparseGPreg}

A Gaussian Process is a set of random variables such that the joint distribution of any finite subset of them is Gaussian. A GP-distributed function $f(\bfx) \sim \mathcal{GP}(\mu_0(\bfx), k_0(\bfx, \bfx'))$ is defined by a mean function $\mu_0(\bfx)$ and a covariance (kernel) function $k_0(\bfx, \bfx')$. The mean and covariance are such that for any finite set $\calX = \crl{\bfx_1,\ldots,\bfx_M}$, the random vector $f(\calX) := \brl{f(\bfx_1),\ldots,f(\bfx_M)}^\top \in \mathbb{R}^{M}$ has mean with $j$-th element $\mu_0(\bfx_j)$ and covariance matrix with $(j,l)$-th element $k_0(\bfx_j, \bfx_l)$ for $j,l = 1,\ldots,M$. Given a training set $\calD=\{(\bfx_j, y_j)\}_{j=1}^M$, generated according to $y_j=f(\bfx_j)+\eta_j$ with independent Gaussian noise $\eta_j \sim \calN(0, \sigma^2)$, the posterior distribution of the random function $f(\bfx)$ can be obtained from the joint distribution of the value $f(\bfx)$ at an arbitrary location $\bfx$ and the random vector $\bfy := \brl{y_1,\ldots,y_M}^\top$ of measurements. In detail, the joint distribution is:
\begin{equation*}
\begin{aligned}
\begin{bmatrix} f(\bfx) \\ \bfy \end{bmatrix} \sim \mathcal{N}\prl{ \begin{bmatrix} \mu_0(\bfx) \\ \mu_0(\calX) \end{bmatrix}, \begin{bmatrix} k_0(\bfx,\bfx) & k_0(\bfx, \calX)\\
k_0(\calX,\bfx) & k_0(\calX,\calX) + \sigma^2 I\end{bmatrix}},
\end{aligned}
\end{equation*}
while the corresponding conditional distribution $f(\bfx)|\calX, \bfy \sim \mathcal{GP}(\mu(\bfx), k(\bfx, \bfx'))$ has mean and covariance functions:
\begin{equation}
\label{eq:gp_posterior}
\scaleMathLine{\begin{aligned}
\mu(\bfx) &:= \mu_0(\bfx) + k_0(\bfx, \calX) (k_0(\calX, \calX)+ \sigma^2 I)^{-1}(\bfy - \mu_0(\calX)),\\
k(\bfx,\bfx') &:=  k_0(\bfx, \bfx') - k_0(\bfx, \calX) (k_0(\calX, \calX) + \sigma^2 I)^{-1} k_0(\calX, \bfx').
\end{aligned}}
\end{equation}
Computing the GP posterior has cubic complexity in the number of observations $M$ due to the matrix inversion in~\eqref{eq:gp_posterior}.

Inspired by Snelson and Ghahramani~\cite{snelson2006sparse}, we introduce a sparse approximation to the GP posterior in~\eqref{eq:gp_posterior} using a set of \emph{pseudo points} $\calP\subset \calD$ whose number $|\calP| \ll M$. The key idea is to first determine the distribution $\calN\prl{\bfmu,\Sigma}$ of $\bff := f(\calP)$ conditioned on $\calX$, $\bfy$ according to~\eqref{eq:gp_posterior}:
\begin{align}\label{eq:pseudopoint_distribution}
\bfmu &:= \mu_0(\calP) + k_0(\calP, \calX) (k_0(\calX, \calX)+ \sigma^2 I)^{-1}(\bfy - \mu_0(\calX))\notag\\
 &\phantom{:}= \mu_0(\calP)+k_0(\calP,\calP)\prl{k_0(\calP,\calP) + \Gamma}^{-1}\bfgamma\\ 
\Sigma &:=  k_0(\calP,\calP) - k_0(\calP,\calX)\prl{ k_0(\calX,\calX) + \sigma^2 I }^{-1} k_0(\calX,\calP),\notag\\
 &\phantom{:}= k_0(\calP,\calP)\prl{k_0(\calP,\calP) + \Gamma}^{-1}k_0(\calP,\calP)\notag
\end{align}
where $\Gamma := k_0(\calP,\calX)\prl{\Lambda + \sigma^2 I}^{-1}k_0(\calX,\calP)$, $\Lambda := k_0(\calX,\calX) - k_0(\calX,\calP) k_0(\calP,\calP)^{-1} k_0(\calP,\calX)$, and $\bfgamma := k_0(\calP,\calX)\prl{\Lambda + \sigma^2 I}^{-1}(\bfy - \mu_0(\calX))$. Using the definitions of information matrix $\Omega := \Sigma^{-1}$ and information mean $\bfomega := \Omega\bfmu$, we can equivalently write:
\begin{equation}
\label{eq:pseudopoint_distribution_information}
\begin{aligned}
\bfomega &= \Omega\mu_0(\calP)+k_0(\calP,\calP)^{-1}\bfgamma,\\ 
\Omega &= k_0(\calP,\calP)^{-1}\prl{k_0(\calP,\calP) + \Gamma}k_0(\calP,\calP)^{-1}.
\end{aligned}
\end{equation}
%
%where $\Lambda(X_n)$ is a diagonal matrix with entries $\lambda(\bfx_j) := k_0(\bfx_j,\bfx_j) - k_0(\bfx_j,P_n)k_0(P_n,P_n)^{-1} k_0(P_n,\bfx_j)$ on the diagonal.
Then, the posterior density of $f(\bfx)$ conditioned on $\calX, \bfy$ is:
\begin{equation}
p( f(\bfx) | \calX, \bfy) = \int p(f(\bfx) | \bff) p(\bff | \calX, \bfy) d\bff
\end{equation}
which is a GP with mean and covariance functions:
\begin{align}\label{eq:pseudopoint_gp_posterior_information}
\mu(\bfx) &=\mu_0(\bfx) + k_0(\bfx, \calP)k_0(\calP,\calP)^{-1}\prl{\Omega^{-1}\bbomega-\mu_0(\calP)}\notag\\
k(\bfx,\bfx') &=k_0(\bfx,\calP) k_0(\calP,\calP)^{-1}\Omega^{-1}k_0(\calP,\calP)^{-1}k_0(\calP,\bfx')\notag\\ 
&\quad + k_0(\bfx,\bfx') - k_0(\bfx,\calP)k_0(\calP,\calP)^{-1}k_0(\calP,\bfx').
%&= \mu_0(\bfx) + k_0(\bfx, \calP)k_0(\calP,\calP)^{-1}\prl{\bfmu - \mu_0(\calP)}\\
%&= k_0(\bfx,\calP) k_0(\calP,\calP)^{-1}\Sigma k_0(\calP,\calP)^{-1} k_0(\calP,\bfx')\\ 
%&\quad + k_0(\bfx,\bfx') - k_0(\bfx,\calP)k_0(\calP,\calP)^{-1}k_0(\calP,\bfx')\\
\end{align}
%where $\Gamma$ and $\bbgamma$ are defined above.
If we assume that conditioned on $\calP$, the measurements $y_j$ are generated independently, i.e., $\Lambda$ is approximated by a diagonal matrix with elements $\lambda(\bfx_j) := k_0(\bfx_j,\bfx_j) - k_0(\bfx_j,\calP)k_0(\calP,\calP)^{-1}k_0(\calP,\bfx_j)$, then the complexity of computing $\bfmu$, $\Sigma$ in~\eqref{eq:pseudopoint_distribution} (training) and $\mu(\bfx)$, $k(\bfx,\bfx')$ in~\eqref{eq:pseudopoint_gp_posterior_information} (testing) are $O(|\calP|^2 |\calX| + |\calP|^3)$ and $O(|\calP|^2)$, respectively, instead of $O(|\calX|^3)$ and $O(|\calX|^2)$ without pseudo points in~\eqref{eq:gp_posterior}. The use of pseudo points leads to significant computational savings when $|\calP| \ll |\calX|$. We assume that the kernel parameters are optimized offline and focus on online computation of the terms in~\eqref{eq:pseudopoint_gp_posterior_information}, needed for prediction.

%======================================================================

\subsection{Repeated Input Data Compression}
%We propose an efficient approach for \emph{incremental} GP regression which performs online training using streaming data. We assume that the kernel parameters are optimized offline and focus on online computation of the terms in~\eqref{eq:pseudopoint_gp_posterior}, needed for prediction.

%We first exploit the fact that if the data $\calD$ contains repeated observations from the same location, then the training complexity of a GP can be reduced from cubic in the number of observations $|\calD|$ to cubic in the number of distinct observed points. We formalize the mentioned fact in the following proposition, establishing that the GP posterior remains unchanged if we compress the training data obtained from the same locations.

Next, we detail a way to obtain additional savings in terms of data storage requirements. Specifically, if the training data $\calD = (\calX,\bfy)$ contains repeated observations from the same locations, i.e., the points in $\calX$ are not unique, then the GP training complexity can be reduced from cubic in $|\calX|$ to cubic in the number of distinct points in $\calX$. We formalize this in the following proposition, which establishes that the GP posterior is unchanged if we compress the observations in $\bfy$ obtained from the same locations in $\calX$.

\begin{proposition}
\label{prop:gp_compression}
Consider $f(\bfx) \sim \mathcal{GP}(\mu_0(\bfx), k_0(\bfx,\bfx')$. Let:
\begin{equation*}
%\label{eq:original_data}
\begin{aligned}
\calX&=\{\bfx_1\;\;, \ldots,\bfx_1\;\;\;\;, \bfx_2\;\;,\ldots,\bfx_2\;\;\;\;,\ldots,\bfx_n\;\;,\ldots,\bfx_n\;\;\;\;\}\\
\bfy&=\brl{y_{1, 1}, \ldots, y_{1, m_1}, y_{2, 1}, \ldots, y_{2, m_2}, \ldots, y_{n, 1}, \ldots, y_{n, m_n}}^\top
\end{aligned}
\end{equation*}
be data generated from the model $y_{i, j} = f(\bfx_i) + \eta_{i,j}$ with $\eta_{i,j} \sim \mathcal{N}(0, \sigma^2)$ for $i=1,\ldots, n$ and $j = 1,\ldots, m_i$. Let:
\begin{equation}
\label{eq:compressed_data}
\begin{aligned}
\calP = \{\bfx_1, \ldots, \bfx_n\}, \;  
\bfzeta = \brl{ \frac{1}{m_1} \sum_{j=1}^{m_1} y_{1, j}, \ldots, \frac{1}{m_n} \sum_{j=1}^{m_n} y_{n, j} }^\top
\end{aligned}
\end{equation}
be a compressed version of the data generated from $f(\bfx_i)$ with noise $\hat{\eta}_{i} \sim \mathcal{N}(0, \frac{\sigma^2}{m_i})$. Then, $f(\bfx) | \calX,\bfy$ and $f(\bfx) | \calP, \bfzeta$ have the same Gaussian Process distribution $\mathcal{GP}(\mu(\bfx),k(\bfx,\bfx'))$ with:
\begin{equation}
\label{eq:compressed_gp_posterior}
\begin{aligned}
\mu(\bfx) &= \mu_0(\bfx) + k_0(\bfx, \calP) Z(\bfzeta - \mu_0(\calP)),\\
k(\bfx,\bfx') &=  k_0(\bfx, \bfx') - k_0(\bfx, \calP) Z k_0(\calP, \bfx'),
\end{aligned}
\end{equation}
where $Z^{-1} := k_0(\calP, \calP) + \sigma^2\diag(\bfm)^{-1}$ and $\bfm$ is a vector with elements $m_i$.
\end{proposition}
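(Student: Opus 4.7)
The plan is to show directly that the two GP posteriors are identical by manipulating~\eqref{eq:gp_posterior} for the uncompressed data $(\calX,\bfy)$ into the form~\eqref{eq:compressed_gp_posterior} stated for the compressed data $(\calP,\bfzeta)$. Morally, this is a sufficient-statistic statement: for each repeated location $\bfx_i$, the sample mean $\zeta_i\sim\mathcal{N}(f(\bfx_i),\sigma^2/m_i)$ is a sufficient statistic for $f(\bfx_i)$ under Gaussian likelihood, so the Bayesian posterior should be unchanged when $(y_{i,1},\ldots,y_{i,m_i})$ is replaced by $\zeta_i$ with the correspondingly rescaled noise variance. I would formalize this by direct linear algebra so as to match~\eqref{eq:compressed_gp_posterior} term by term.

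First, I would introduce a binary repetition matrix $R\in\{0,1\}^{M\times n}$ with $M=\sum_{i=1}^n m_i$ and $R_{(i,j),i'}=\delta_{i,i'}$, which collapses indices nicely: $k_0(\calX,\calP)=R\,k_0(\calP,\calP)$, $k_0(\calX,\calX)=R\,k_0(\calP,\calP)\,R^\top$, $R^\top R=\diag(\bfm)$, and $R^\top(\bfy-\mu_0(\calX))=\diag(\bfm)(\bfzeta-\mu_0(\calP))$. The covariance of the noise $\bfy-f(\calX)$ is still $\sigma^2 I$.

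The key step is a push-through identity, obtained from the trivial equality $(R\,k_0(\calP,\calP)\,R^\top+\sigma^2 I)R=R(k_0(\calP,\calP)\,R^\top R+\sigma^2 I)$ by inverting both factors and then transposing (using symmetry of $k_0(\calP,\calP)$), yielding
\[
R^\top\bigl(R\,k_0(\calP,\calP)\,R^\top+\sigma^2 I\bigr)^{-1}=\bigl(\diag(\bfm)\,k_0(\calP,\calP)+\sigma^2 I\bigr)^{-1}R^\top.
\]
Factoring $\diag(\bfm)$ out of the bracket on the right gives $(\diag(\bfm)\,k_0(\calP,\calP)+\sigma^2 I)^{-1}=Z\,\diag(\bfm)^{-1}$, with $Z$ as defined in the statement, so the right-hand side becomes $Z\,\diag(\bfm)^{-1}R^\top$.

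Substituting this identity into~\eqref{eq:gp_posterior} then collapses the mean to $\mu_0(\bfx)+k_0(\bfx,\calP)\,Z\,\diag(\bfm)^{-1}R^\top(\bfy-\mu_0(\calX))=\mu_0(\bfx)+k_0(\bfx,\calP)\,Z\,(\bfzeta-\mu_0(\calP))$, and the covariance to $k_0(\bfx,\bfx')-k_0(\bfx,\calP)\,Z\,\diag(\bfm)^{-1}R^\top R\,k_0(\calP,\bfx')=k_0(\bfx,\bfx')-k_0(\bfx,\calP)\,Z\,k_0(\calP,\bfx')$, matching~\eqref{eq:compressed_gp_posterior} term by term. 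The main obstacle is locating the correct form of the push-through identity that exposes $Z$; everything else is bookkeeping with $R$ and $\diag(\bfm)$.
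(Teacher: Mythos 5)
Your proof is correct and follows essentially the same route as the paper: both introduce the binary repetition matrix (your $R$, the paper's $E$), exploit $R^\top R=\diag(\bfm)$ and $R^\top\bfy=\diag(\bfm)\bfzeta$, and reduce everything to the identity $R^\top(Rk_0(\calP,\calP)R^\top+\sigma^2 I)^{-1}=Z\diag(\bfm)^{-1}R^\top$ before substituting into~\eqref{eq:gp_posterior}. The only (cosmetic) difference is that you derive this identity via the push-through trick, whereas the paper invokes the matrix inversion lemma; your derivation is arguably the cleaner of the two.
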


% ~\eqref{eq:original_data} and~
\begin{proof}
The distribution of $f(\bfx) | \calX,\bfy$ is provided in~\eqref{eq:gp_posterior}. Using the data $\calP$, $\bfzeta$, instead of $\calX$, $\bfy$, to compute the posterior GP distribution of $f(\bfx)$, according to~\eqref{eq:gp_posterior}, leads to the expression in~\eqref{eq:compressed_gp_posterior}. We need to show that~\eqref{eq:gp_posterior} and~\eqref{eq:compressed_gp_posterior} are equal given the relationship between $\calX$, $\bfy$ and $\calP$, $\bfzeta$ in \eqref{eq:compressed_data}. Let $E$ be a binary matrix defined such that $k_0(\calX,\bfx) = E k_0(\calP,\bfx)$. Note that $k_0(\calX,\calX) = E k_0(\calP,\calP)E^\top$, $k_0(\bfx,\calX) = k_0(\bfx,\calP)E^\top$, $E^\top E = \diag(\bfm)$, and $\bfzeta = (E^\top E)^{-1} E^\top \bfy$. Using these expressions in~\eqref{eq:gp_posterior} leads to:
\begin{align}\label{eq:E_gp_posterior}
\mu(\bfx) &= \mu_0(\bfx) +\notag\\ 
&k_0(\bfx, \calP)E^\top (Ek_0(\calP, \calP)E^\top+ \sigma^2 I)^{-1}(\bfy - E\mu_0(\calP)),\notag\\
k(\bfx,\bfx') &=  k_0(\bfx, \bfx') - \\
&k_0(\bfx, \calP)E^\top (Ek_0(\calP, \calP)E^\top + \sigma^2 I)^{-1} Ek_0(\calP, \bfx').\notag
\end{align}
An application of the matrix inversion lemma followed by algebraic manipulation shows that $E^\top (Ek_0(\calP, \calP)E^\top + \sigma^2 I)^{-1} = \prl{k_0(\calP, \calP) + \sigma^2 (E^\top E)^{-1}}^{-1} (E^\top E)^{-1}E^\top = Z (E^\top E)^{-1}E^\top$. Replacing this and $\bfzeta = (E^\top E)^{-1} E^\top \bfy$ in~\eqref{eq:E_gp_posterior} shows that the GP distributions of $f(\bfx) | \calX,\bfy$ and $f(\bfx) | \calP, \bfzeta$ are equal.
\end{proof}

Prop.~\ref{prop:gp_compression} allows us to summarize a training set $\calX$, $\bfy$ by keeping the distinct points $\calP \subset \calX$ as well as the average observation value $\zeta(\bfp)$ and number of times $m(\bfp)$ that each point $\bfp \in \calP$ has been observed. Given these statistics, the mean function $\mu(\bfx)$ and covariance function $k(\bfx,\bfx')$ of the posterior GP can be obtained according to~\eqref{eq:compressed_gp_posterior} with $\bfzeta := \zeta(\calP)$ and $\bfm := m(\calP)$. When the training points $\calX$ contain many repetitions, the subset $\calP$ of distinct points is a natural choice of pseudo points (Sec.~\ref{sec:sparseGPreg}) and, in this case, the posterior obtained from training with $\calP$ is \emph{exact} (Prop.~\ref{prop:gp_compression}) instead of an approximation of the posterior obtained from training with $\calX$. We exploit this compression technique for efficient incremental GP training when the same observations are observed multiple times.

%This leads to significant computational savings when the training points $\calX$ contain repetitions. These repeated points 

%======================================================================

\subsection{Incremental Compressed Sparse GP Regression}
\label{sec:incGPreg}

Suppose now that, instead of a single training set $\calD$, the data are provided sequentially, i.e., an additional dataset $\tilde{\calD}_t$ of points $\tilde{\calX}_t$ with labels $\tilde{\bfy}_t$ is provided at each time step $t$. The cumulative data up to time $t$ are $\calD_t := \cup_{\tau = 1}^t \tilde{\calD}_\tau$. Based on Prop.~\ref{prop:gp_compression}, we can define an incrementally growing set of pseudo points $\calP_t$ with associated number of observations $m_t(\bfp)$ and average observation $\zeta_t(\bfp)$ for $\bfp \in \calP_t$ and observation precision $Z_t$. We show how to update these statistics when a new dataset $\tilde{\calD}_{t+1} = (\tilde{\calX}_{t+1}, \tilde{\bfy}_{t+1})$ arrives at time $t+1$. Let $\tilde{\calP}_{t+1}$ be the set of unique points in $\tilde{\calX}_{t+1}$ with number of observations $\tilde{m}_{t+1}(\bfp)$ and average observation $\tilde{\zeta}_{t+1}(\bfp)$ for $\bfp \in \tilde{\calP}_{t+1}$. The update of $\calP_t$, $m_t(\bfp)$ and $\zeta_t(\bfp)$ is:
\begin{equation}
\label{eq:update}
\begin{aligned}
\calP_{t+1} &= \calP_t \cup \tilde{\calP}_{t+1}\\
m_{t+1}(\bfp) &= \begin{cases}
      m_t(\bfp) + \tilde{m}_{t+1}(\bfp), & \text{if } \bfp \in \calP_t,\\
      \tilde{m}_{t+1}(\bfp), & \text{else},
      \end{cases}\\
\zeta_{t+1}(\bfp) &= \begin{cases}
      \frac{m_t(\bfp)\zeta_t(\bfp) + \tilde{m}_{t+1}(\bfp)\tilde{\zeta}_{t+1}(\bfp)}{m_{t+1}(\bfp)}, & \text{if } \bfp \in \calP_t,\\
      \tilde{\zeta}_{t+1}(\bfp), & \text{else}.
      \end{cases}
\end{aligned}
\end{equation}
%\frac{\prl{m_t(\bfp)\zeta_t(\bfp) + \tilde{m}_{t+1}(\bfp)\tilde{\zeta}_{t+1}(\bfp)}}{m_{t+1}(\bfp)}
To update the observation precision $Z_t$, first consider the existing pseudo points $\calP_t$. Let $l$ be the index of $\bfp \in \calP_t$ in $Z_t$. Define $\epsilon_l := \sigma^2 \prl{\frac{1}{m_{t+1}(\bfp)} - \frac{1}{m_{t}(\bfp)}}$, $B_0 := Z_t$, and for $l = 1,\ldots,|\calP_t|$:
\begin{equation}
\label{eq:precisionUpdate1}
B_{l+1} = \prl{B_l^{-1} +  \epsilon_l \bfe_l \bfe_l^\top}^{-1} = B_l - \frac{B_l \bfe_l \bfe_l^\top B_l}{\frac{1}{\epsilon_l} + \bfe_l^\top B_l \bfe_l}.
\end{equation}
With some abuse of notation, let $B := B_{|\calP_t|}$ be the observation precision after all $\bfp \in \calP_t$ have been updated. Finally, we update $B$ by introducing the pseudo points $\tilde{\calP}_{t+1}\setminus \calP_t$ that have been observed for the first time:
\begin{equation}
\label{eq:precisionUpdate2}
Z_{t+1} = \begin{bmatrix} B^{-1} & C\\ C^\top & D \end{bmatrix}^{-1} = \begin{bmatrix} B + BCSC^\top B & -BCS\\ -S C^\top B & S \end{bmatrix},
\end{equation}
where $C := k_0(\calP_t,\tilde{\calP}_{t+1}\setminus \calP_t)$, $D := k_0(\tilde{\calP}_{t+1}\setminus \calP_t,\tilde{\calP}_{t+1}\setminus \calP_t) + \sigma^2 \diag(\tilde{m}_{t+1}(\tilde{\calP}_{t+1}\setminus \calP_t))^{-1}$, and $S := (D  - C^\top B C)^{-1}$. By recursively tracking these matrix inverses, the posterior update can be executed efficiently every time a new observation arrives with complexity that is cubic in the number of new distinct points. This is a significant improvement over na\"ive GP training. 

Unfortunately, this complexity still exhibits computational bottlenecks over large domains, where the number of pseudo points $\calP_t$ continues to grow with $t$. Returning to the TSDF mapping problem, this situation happens when a robot continuously explores a large 3-D environment. We introduce an octree spatial decomposition with overlapping subregions, allowing us to train independent GPs with a fixed maximum number of pseudo points in each subregion. This aspect, as well as how the training sets are constructed from the robot observations, discussed in Sec. \ref{sec:problem_formulation}, and utilized for probabilistic semantic TSDF mapping are the focus of the following section.

%\input{tex/SparseGPreg.tex}

%\input{tex/IncGPreg.tex}

%\input{tex/background.tex}

%\input{tex/Sparse.tex}

%!TEX root = ../main.tex
%%%%%%%%%%%%%%%%%%%%%%%%%%%%%%%
%%% SECTION : Algorithm   %%%
%%%%%%%%%%%%%%%%%%%%%%%%%%%%%%%

\section{Probabilistic Metric-Semantic Mapping}
\label{sec:algorithm}

In this section, we consider the single-robot mapping problem. For simplicity of notation, we suppress the superscript $\rb$ that denotes the robot index. The sensor measurements $\crl{\lambda_{t,\msr},c_{t,\msr}}$ are generated according to the models in \eqref{eq:class_measurement} and \eqref{eq:data} that depend on the TSDDFs $\crl{h_\cls(\bfx,\bfeta)}$ of the different semantic classes in the environment. As mentioned in Sec.~\ref{sec:problem_formulation}, instead of $\crl{h_\cls(\bfx,\bfeta)}$, we focus on estimating the TSDFs $\crl{f_\cls(\bfx)}$, whose domains are lower-dimensional. We apply the incremental GP regression technique developed in Sec.~\ref{sec:GP-regression}. Since the sensor data $\crl{\lambda_{t,\msr},c_{t,\msr}}$ are not direct samples from the TSDFs, they need to be transformed into training sets $\D{t,\cls}$, suitable for updating the GP distributions of $\crl{f_\cls(\bfx)}$.

%We focus on approximating the TSDF functions $\{f_\cls(\bfx)\}$, whose domains are lower-dimensional. Hence, the sensor data $\left\{\lambda_{t,\msr},c_{t,\msr}\right\}_{\msr=1}^{M}$ (which are not a sample of TSDF functions) is first transformed into training sets $\D{t,\cls}$, suitable for updating the distribution of $\{f_\cls(\bfx)\}$.

%!TEX root = ../main.tex
%%%%%%%%%%%%%%%%%%%%%%%%%%%%%%%
%%%%%%%%%%%%%%%%%%%%%%%%%%%%%%%
%in similar procedure we had in Sec.~\ref{sec:algorithm},
%\red{In order to make the data of the $\rb$-th robot $\left\{\lambda^\rb_{t,\msr},c^\rb_{t,\msr}\right\}_{\msr=1}^{M}$ suitable for updating the robot estimates of $\{f_\cls(\bfx)\}$ using distributed inference, first we must construct training sets $\D{t,\cls}$ that are amenable for performing Bayesian inference (see Sec. \ref{sec:training_set}). }%Then new observation data $\tilde{X}^k_{t,i}, \tilde{Y}^k_{t,i}$ with inverse of data variance $\tilde{\Phi}^k_{t,i}$ and new pseudo points $\tilde{\calX}^k_{t,i}$ may be updated upon the basis of training on data $\setD^k_{t,i}$.\classNA{This sentence does not make any sense. What is the new observation data? What is the data variance? What is a pseudopoint?}
%

\subsection{Training Set Construction}
\label{sec:training_set}

The class measurements allow us to associate the sensor data with particular semantic classes, while the distance measurements allow us to estimate the points where the sensor rays hit the object sets $\calO_\cls$. We define the following point sets for each detected semantic class at time $t$:
\begin{align}\label{Gtc}
 \setG_{t,\cls}=\{ \hat{\bfx} \in \mathbb{R}^3 \given \hat{\bbx} = \lambda_{t,\msr}\bbR_t\bbeta_\msr+\bbp_t \text{ and } c_{t,\msr}=\cls\}.
\end{align}
The values $f_\cls(\hat{\bbx})$ of the TSDFs are close to zero at points $\hat{\bbx} \in \setG_{t,\cls}$ because the sensor rays hit an object surface close to these locations.

As shown in Prop.~\ref{prop:gp_compression}, the complexity of online GP training can be improved by forcing the training data to repeatedly come from a finite set of points. We choose a grid discretization $\allPs$ of the workspace $\calW$ and construct a training set by selecting points $\bbx \in \allPs$, that are at most $\epsilon > 0$ away from the points $\hat{\bfx} \in \setG_{t,\cls}$, and approximating their TSDF values $f_\cls(\bfx) \approx g_{t,\cls}(\bfx)$ (see Fig.~\ref{fig:observations}). Precisely, the training data sets are constructed at time $t$ as:
\begin{equation}\label{defD}
\D{t,\cls} = \{ (\bbx, g_{t,\cls}(\bbx)) |\bbx \in \allPs, \exists \hat{\bbx} \in \setG_{t,\cls} \text{ s.t. } ||\bbx - \hat{\bbx}||_2\leq\epsilon\}.
\end{equation}
In the case of a camera sensor, the TSDF value $g_{t,\cls}(\bfx)$ of a pseudo point $\bfx$ is obtained by projecting $\bfx$ to the image plane and approximating its distance from the distance values of nearby pixels. In detail, suppose $\bfeta_\msr$ is the unit vector corresponding to the pixel closest to the projection of $\bfx$ (red pixel in Fig.~\ref{fig:observations}) and let $\hat{\bfx} \in \setG_{t,\cls}$ be the coordinates of its ray endpoint (blue point in Fig.~\ref{fig:observations}). Let $\hat{\bbx}_{right}$ and $\hat{\bbx}_{up}$ (two cyan points in Fig.~\ref{fig:observations}) be the ray endpoints of two adjacent pixels. Then, $g_{t,\cls}(\bbx)$ is the signed distance from $\bfx$ to the plane defined by $\hat{\bfx}$, $\hat{\bbx}_{right}$, and $\hat{\bbx}_{up}$:
\begin{equation}
\begin{gathered}
g_{t,\cls}(\bbx) :=  \bbn^{\top} (\bbx-\hat{\bbx}), \quad \bbn := \sign(\bbq^{\top} (\bbp_t-\hat{\bbx}))\bbq,\\
\bbq = \frac{(\hat{\bbx}_{right}-\hat{\bbx}) \times (\hat{\bbx}_{up}-\hat{\bbx})}{\|(\hat{\bbx}_{right}-\hat{\bbx}) \times (\hat{\bbx}_{up}-\hat{\bbx})\|},
\end{gathered}
\end{equation}
where $\bbq$ is the normal of the plane and the signed distance from $\bfp_t$ to the plane is positive because the sensor is known to be outside of the object set $\calO_\cls$. With the input variables as distance observations and target variables as truncated signed distance field specified, we shift to how actually compute the posterior inference.

\subsection{Incremental TSDF Inference}
\label{sec:GP_update}
Recall that we are using streaming measurements to update the GP distributions of the TSDFs $\crl{f_\cls(\bfx)}$. We derived an incremental sparse GP update in Sec.~\ref{sec:incGPreg}. Here, we use the transformed TSDF training data $\D{t,\cls}$ to update the GP distribution for each class $\cls$. At time $t$, the new data are $\D{t,\cls} = (\X{t,\cls}, \Y{t,\cls})$ and the new pseudo points are $\P{t,\cls} =  \X{t,\cls} \setminus \calP_{t-1,\cls}$. Given $\X{t, \cls}$, $\Y{t, \cls}$, $\P{t, \cls}$ for each class $\cls$, we can update $\cP{t, \cls}$, $\zeta_{t,\cls}$, $m_{t,\cls}$ via \eqref{eq:update}. If online prediction is required, we can also update the precision matrix $\Z{t,\cls}$ using \eqref{eq:precisionUpdate1} and \eqref{eq:precisionUpdate2}. Then, we have the GPs of all classes updated, and can predict the TSDF at any query point according to \eqref{eq:compressed_gp_posterior}. Next we discuss how the inferred posterior may be employed to construct a semantic category prediction.

%At each time step $t$, from $\D{t,\cls}$ robot's new observation data is $\X{t,\cls}, \Y{t,\cls}$ and new pseudo points are $\P{t,\cls}$:
% \begin{equation}
% \label{eq:GenSam}
% \begin{aligned}
% \X{t,\cls} &= \{\bbx | (\bbx, y) \in \D{t,\cls}\}\\
% \Y{t,\cls} &= \{y | (\bbx, y) \in \D{t,\cls}\}\\
% \P{t,\cls} &=  \X{t,\cls} \setminus \cX{t-1,\cls}
% \end{aligned}
% \end{equation}
% %

%!TEX root = ../main.tex
%%%%%%%%%%%%%%%%%%%%%%%%%%%%%%%
%%% SECTION : Introduction   %%%
%%%%%%%%%%%%%%%%%%%%%%%%%%%%%%%
\subsection{Semantic Category Prediction}
\label{sec:class}

Next, we discuss how to predict the semantic class labels on the surfaces of the implicitly estimated object sets $\calO_\cls$. While we did not explicitly model noise in the class observations in~\eqref{eq:class_measurement}, in practice, semantic segmentation algorithms may produce incorrect pixel-level classification. This leads to some sensor observations $\lambda_{t,\msr}$, $c_{t,\msr}$ being incorrectly included into the training set $\D{t,\cls}$ of a different semantic class. This happens, for example, if objects from two different classes, say $\cls_1$ and $\cls_2$, are spatially close to each other and, in an RGB image, parts of the boundary of one are classified as belonging to the other class. Over time, with multiple sensor observations, the TSDF approximations for both classes $\cls_1$ and $\cls_2$ may contain pseudo points $\bfx \in \allPs$ with small TSDF values, indicating an object surface at the same location. To predict the correct semantic class, we compare the likelihoods of the different classes at surface points using the posterior GP distributions of the TSDFs $f_\cls(\bfx)$. 

%Consider a surface point $\bfx \in \calX$ such that $f_\cls(\bfx) = 0$ for some class $\cls$. The likelihood that the class label of $\bfx$ is $c \in \{1,\ldots,\classN\}$ is provided below.

\begin{proposition}
\label{prop:class_prediction}
Let $\mathcal{GP}(\mu_{t,\cls}(\bfx), k_{t,\cls}(\bfx,\bfx'))$ be the distributions of the truncated signed distance functions $f_\cls(\bfx)$ at time $t$, determined according to \eqref{eq:compressed_gp_posterior}. Consider an arbitrary point $\bfx \in \partial \calO$ on the surface of the obstacle set, i.e., $\bfx$ is such that $f_\cls(\bfx) = 0$ for some class $\cls \in \{1, \ldots, \classN\}$. Then, the probability that the true class label of $\bfx$ is $c \in \{1,\ldots,\classN\}$ is:
\begin{equation*}\label{eq:class_probability}
\scaleMathLine{\mathbb{P}\prl{\argmin_\cls|f_\cls(\bfx)| = c \;\bigg\vert\; \min_\cls |f_\cls(\bbx)| = 0} = \frac{\frac{1}{\sigma_{t,c}(\bfx)}\phi(\frac{\mu_{t,c}(\bfx)}{\sigma_{t,c}(\bfx)})}{\sum_\cls \frac{1}{\sigma_{t,\cls}(\bfx)}\phi(\frac{\mu_{t,\cls}(\bfx)}{\sigma_{t,\cls}(\bfx)})},}
\end{equation*}
where $\phi(\cdot)$ is the probability density function of the standard normal distribution and $\sigma_{t,\cls}(\bfx) := \sqrt{ k_{t,\cls}(\bfx,\bfx)}$.
\end{proposition}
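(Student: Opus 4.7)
The plan is to use a limit argument to handle the measure-zero conditioning event $\min_\cls|f_\cls(\bfx)|=0$ and then reduce the calculation to a ratio of one-dimensional Gaussian densities. Under the posterior, the class-wise GPs are independent (each class $\cls$ has its own training set $\D{t,\cls}$ and its own posterior $\mathcal{GP}(\mu_{t,\cls},k_{t,\cls})$), so at the fixed test point $\bfx$ the marginals $f_\cls(\bfx)$ form independent univariate Gaussians with density $p_\cls(y) = \frac{1}{\sigma_{t,\cls}(\bfx)}\phi\bigl(\frac{y-\mu_{t,\cls}(\bfx)}{\sigma_{t,\cls}(\bfx)}\bigr)$. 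Using the symmetry of the standard-normal density $\phi(-u)=\phi(u)$, the value at $y=0$ is $p_\cls(0) = \frac{1}{\sigma_{t,\cls}(\bfx)}\phi\bigl(\frac{\mu_{t,\cls}(\bfx)}{\sigma_{t,\cls}(\bfx)}\bigr)$, which matches each factor appearing in the target expression.

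Next I would regularize the singular conditioning. Define the event $E_\epsilon := \{\min_\cls|f_\cls(\bfx)| \leq \epsilon\}$ and take $\epsilon\to 0^+$. Fix a class $c$ and consider $A_{c,\epsilon} := \{|f_c(\bfx)|\leq\epsilon,\ |f_\cls(\bfx)|>|f_c(\bfx)|\ \forall \cls\neq c\}$, so that $\arg\min_\cls|f_\cls(\bfx)|=c$ exactly on $A_{c,\epsilon}$ up to the null set on which two classes tie. By independence across classes, $\mathbb{P}(A_{c,\epsilon}) = \int_0^\epsilon \bigl[\prod_{\cls\neq c}\mathbb{P}(|f_\cls(\bfx)|>y)\bigr]\, f_{|f_c(\bfx)|}(y)\,dy$. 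For each $\cls\neq c$ the integrand factor tends to $1$ as $y\to 0^+$, and since $f_c(\bfx)$ has a continuous Gaussian density, the folded density satisfies $f_{|f_c(\bfx)|}(y)\to 2p_c(0)$. Hence $\mathbb{P}(A_{c,\epsilon}) = 2\epsilon\,p_c(0) + o(\epsilon)$.

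Summing over $c$ and noting that the $A_{c,\epsilon}$ are pairwise disjoint up to a measure-zero set of ties and cover $E_\epsilon$ up to the same null set, I get $\mathbb{P}(E_\epsilon) = \sum_\cls 2\epsilon\,p_\cls(0) + o(\epsilon)$. Dividing, the $2\epsilon$ factors cancel and the limit yields $\mathbb{P}(\arg\min_\cls|f_\cls(\bfx)|=c\mid E_\epsilon)\to p_c(0)/\sum_\cls p_\cls(0)$, which is exactly the stated formula once the expression for $p_\cls(0)$ is substituted. The only real obstacle is the measure-zero nature of the conditioning event, which is the reason for the $\epsilon$-regularization; the rest is a routine Gaussian density evaluation combined with the cross-class independence of the posterior.
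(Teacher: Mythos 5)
Your proposal is correct and follows essentially the same route as the paper: regularize the measure-zero conditioning event by a threshold $\epsilon$ (the paper's $|z|$), use the cross-class independence of the posterior GPs to write the probability of $\{\argmin = c\} \cap \{\min \le \epsilon\}$ as an integral of the Gaussian density of $f_c(\bfx)$ against the product of tail probabilities of the other classes, and extract the limit $2\epsilon\,p_c(0) + o(\epsilon)$ so the $2\epsilon$ cancels in the ratio. The only cosmetic difference is that you fold the integral onto $[0,\epsilon]$ using the folded density, whereas the paper integrates the unfolded density over $[-z,z]$; the asymptotics and conclusion are identical.
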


\begin{proof}
~\\
Let $l_c(z):=\mathbb{P}\prl{\argmin_\cls|f_\cls(\bfx)| = c \text{ and } \min_\cls |f_\cls(\bbx)| \leq |z|}$. Since $\mathbb{P}\prl{\min_\cls |f_\cls(\bbx)| \leq |z|}=\sum_\cls l_\cls(z)$:
\begin{equation*}
\scaleMathLine{\mathbb{P}\prl{\argmin_\cls|f_\cls(\bfx)| = c \;\bigg\vert\; \min_\cls |f_\cls(\bbx)| \leq |z|} =\frac{l_c(z)}{\sum_\cls l_\cls(z)}}
\end{equation*}
The term we are interested in computing is $\lim_{z \to 0} \frac{l_c(z)}{\sum_\cls l_\cls(z)}$. Let $\bfx$ be an arbitrary (test) point and define $\mu_\cls := \mu_{t,\cls}(\bfx)$ and $\sigma_\cls := \sigma_{t,\cls}(\bfx)$ for $\cls = 1,\ldots,\classN$. The GP distribution of $f_\cls$ stipulates that its value at $\bfx$ has a density function $p(z) = \frac{1}{\sigma_\cls} \phi\big(\frac{z-\mu_\cls}{\sigma_\cls}\big)$. Hence, $\mathbb{P}(|f_\cls(\bbx)|\geq z) = 1-\Phi(\frac{|z|-\mu_\cls}{\sigma_\cls})+\Phi(\frac{-|z|-\mu_\cls}{\sigma_\cls})$. Note that $l_c(z)$ corresponds to the probability that $|f_c(\bbx)| \leq |f_\cls(\bbx)|$ for all $\cls$. Since all $f_\cls$ are independent of each other:
%\NA{Should there be $|t|$ here?}
\begin{equation*}
\scaleMathLine{l_c(z) = \frac{1}{\sigma_c}\int_{-z}^z \phi\big(\frac{\zeta-\mu_c}{\sigma_c}\big)\prod_{\cls\neq c}\biggl(1-\Phi\bigl(\frac{|\zeta|-\mu_\cls}{\sigma_\cls}\bigr)+\Phi\bigl(\frac{-|\zeta|-\mu_\cls}{\sigma_\cls}\bigr)\biggr)d\zeta}
\end{equation*}
The claim is concluded by $\lim\limits_{z \to 0}\frac{l_c(z)}{2z}=\frac{1}{\sigma_c}\phi\big(\frac{-\mu_c}{\sigma_c}\big)$. %\qed
%See Appendix~\ref{app:b}.
\end{proof}

The class distribution for an arbitrary point $\bfx \in \calW$, not lying on an object surface, may also be obtained, as shown in the proof of Prop.~\ref{prop:class_prediction} but is both less efficient to compute and rarely needed in practice.

%Each Gaussian Process of each class is able to estimate distribution of $f_c$, in order to have joint distribution of signed distance function for all classes we will assume all $f_c$ are independent of each other.
%\begin{proposition}
%\label{prop:class_prediction}
%Given a test point $\bbx_*\in \calX$ on the boundary of some class i.e. $f(\bbx_*)=0$, and the distribution $f_i(\bbx_*)\sim\calN(\mu_i, \sigma_i^2)$ determined from $\{\bbGamma_{t,i}, \bbTheta_{t,i}\}$ according to equation~\ref{mainmainEQ}, the probability that $\bbx_*$ belongs to class $c$:
%%
%\begin{equation}\label{classEq}
%\mathbb{P}(\argmin_i|f_i(\bbx_*)|=c | f(\bbx_*)=0)=\frac{\frac{1}{\sigma_c}\phi(\frac{-\mu_c}{\sigma_c})}{\sum_{i} \frac{1}{\sigma_i}\phi(\frac{-\mu_i}{\sigma_i})}
%\end{equation}
%%
%\end{proposition}
%\begin{proof}
%See Appendix~\ref{app:b}.
%\end{proof}

%Having the constructed surface of the environment for any point on this surface, Proposition~\ref{classEq} allows us to determine the probability of belonging it to different classes.  

%!TEX root = ../main.tex

\subsection{Octree of Gaussian Processes}
\label{sec:octree}
%\label{sec:tree}

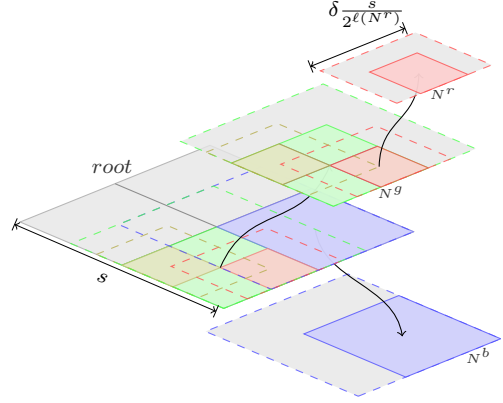
\begin{figure}[t]
\centering
\vspace{-20mm}
%!TEX root = ../../main.tex
%%%%%%%%%%%%%%%%%%%%%%%%%%%%%%%
%%% SECTION : Problem   %%%
%%%%%%%%%%%%%%%%%%%%%%%%%%%%%%%

\def\rota{-23}
\rotatebox{\rota}{
\begin{tikzpicture}
\usetikzlibrary{fadings}
\tikzfading[name=fade inside,
inner color=transparent!-10,
outer color=transparent!3]
\def\x{5/1.7}
\def\y{-1.5}
\def\z{5}
\def\xShiftLlow{\x/3}
\def\yShiftLlow{\y}
\def\zShiftLlow{-\z/3}
\begin{scope}[shift={(\xShiftLlow,\yShiftLlow,\zShiftLlow)}]
%Lower level
\filldraw[dashed, fill=gray!20, draw=blue!70, opacity=0.8] (\x,0,\z*3/4) -- (\x,0,\z*3/4-\z*3/4)-- (\x-\x*3/4,0,\z*3/4-\z*3/4) --(\x-\x*3/4,0,\z*3/4)--(\x,0,\z*3/4);
\filldraw[fill=blue!20, draw=blue!70, opacity=0.8] (\x,0,\z*3/4-\z*3/4) -- (\x,0,\z*3/4-\z*3/4+\z/2) -- (\x-\x/2,0,\z*3/4-\z*3/4+\z/2) --(\x-\x/2,0,\z*3/4-\z*3/4)--(\x,0,\z*3/4-\z*3/4) node[anchor=north east,align=center,font=\sffamily\tiny, rotate=-\rota] {$N^{b}$};
\shade[ball color=cyan, path fading=fade inside] (\x-\x/16,0,\z/2-\z/16) circle (0.08);
\end{scope}
\draw[thin,->] (\x*3/4,0,\z/4) .. controls (\x*3/4+\x/8,0+\yShiftLlow*2/5,\z/4) and (\x*3/4-\x/8+\xShiftLlow,0+\yShiftLlow-\yShiftLlow*2/5,\z/4+\zShiftLlow) ..  (\x*3/4+\xShiftLlow,0+\yShiftLlow,\z/4+\zShiftLlow) ;
%Middle layer
%
\filldraw[fill=gray!20, draw=gray!70, opacity=0.8] (0,0,0) -- (0,0,\z)-- (\x,0,\z) --(\x,0,0)--(0,0,0) node[anchor= east , yshift = -0.35*\z 0pt, xshift = -3\x 0pt,rotate=-\rota,font=\footnotesize]{$root$};%main
%long line
\draw[thin, solid, gray!90] (0,0,\z/2) -- (\x,0,\z/2);
\draw[thin, solid, gray!90] (\x/2,0,\z) -- (\x/2,0,0);
\draw[thin,solid,gray!90] (0+\x/2,0,\z/4+\z/2) -- (\x/2+\x/2,0,\z/4+\z/2);
\draw[thin,solid,gray!90] (\x/4+\x/2,0,\z/2+\z/2) -- (\x/4+\x/2,0,0+\z/2);
\filldraw[ fill=green!20, draw=green!70, opacity=0.8] (\x/2,0,\z/2) -- (\x,0,\z/2)-- (\x,0,\z)--(\x/2,0,\z)--(\x/2,0,\z/2);
\filldraw[fill=blue!20, draw=blue!50, opacity=0.8] (\x,0,\z*3/4-\z*3/4) -- (\x,0,\z*3/4-\z*3/4+\z/2) -- (\x-\x/2,0,\z*3/4-\z*3/4+\z/2) --(\x-\x/2,0,\z*3/4-\z*3/4)--(\x,0,\z*3/4-\z*3/4);
\filldraw[fill=red!20, draw=red!50, opacity=0.8] (\x,0,\z*3/4) -- (\x,0,\z*3/4-\z/4) -- (\x-\x/4,0,\z*3/4-\z/4) -- (\x-\x/4,0,\z*3/4)--(\x,0,\z*3/4);
%%%%%%%NEW
\def\xShiftLlow{-\x/4}
\def\yShiftLlow{0}
\def\zShiftLlow{\z/4}
\begin{scope}[shift={(\xShiftLlow,\yShiftLlow,\zShiftLlow)}]
\filldraw[fill=olive!20, draw=olive!50, opacity=0.8] (\x,0,\z*3/4) -- (\x,0,\z*3/4-\z/4) -- (\x-\x/4,0,\z*3/4-\z/4) -- (\x-\x/4,0,\z*3/4)--(\x,0,\z*3/4);
\end{scope}
%%%%%%%NEW
\draw[dashed, draw=blue!70, opacity=0.8] (\x,0,\z*3/4) -- (\x,0,\z*3/4-\z*3/4)-- (\x-\x*3/4,0,\z*3/4-\z*3/4) --(\x-\x*3/4,0,\z*3/4)--(\x,0,\z*3/4);
\draw[dashed, draw=red!70, opacity=0.8] (\x,0,\z*7/8) -- (\x-\x*3/8,0,\z*7/8)-- (\x-\x*3/8,0,\z*7/8-\z*4/8) --(\x,0,\z*7/8-\z*4/8)--(\x,0,\z*7/8);
%%%%%%%NEW
\draw[dashed, draw=olive!70, opacity=0.8] (\x*3/8,0,\z) -- (\x*3/8,0,\z-\z*3/8)-- (\x*7/8,0,\z-\z*3/8) --(\x*7/8,0,\z)--(\x*3/8,0,\z);
%%%%%%%NEW
%
\draw[dashed, draw=green!70, opacity=0.8] (\x,0,\z) -- (\x-\x*3/4,0,\z)-- (\x-\x*3/4,0,\z-\z*3/4) --(\x,0,\z-\z*3/4)--(\x,0,\z);
\draw[very thin, solid,|<->|] (0,0,\z*1.03) -- (\x,0,\z*1.04) node[anchor= north west, yshift = 0 0pt, xshift = -17*\x 0pt,rotate=0,font=\footnotesize]{$s$};
\shade[ball color=cyan, path fading=fade inside] (\x-\x/16,0,\z/2-\z/16) circle (0.08);
\shade[ball color=blue, path fading=fade inside] (\x/2-\x/16,0,\z-\z/16) circle (0.08);
%
%----------------------------------------------------------------------------------------------------------------------------------------------------
\def\xShiftLone{\x/6}
\def\yShiftLone{-\relFirY*\y}
\def\zShiftLone{-\z/6}
\def\relFirY{1}
\draw[thin,->] (\x*3/4,0,\z*3/4) .. controls (\x*3/4,\yShiftLone*5/8,\z*3/4) and (\x*3/4+\xShiftLone,\yShiftLone*3/8,\z*3/4+\zShiftLone) ..  (\x*3/4+\xShiftLone,\yShiftLone,\z*3/4+\zShiftLone) ;
%node[anchor=south,align=center,font=\sffamily\tiny, rotate=-\rota] {$N^o$}
\begin{scope}[shift={(\xShiftLone,\yShiftLone,\zShiftLone)}]
%Upper level 1
%
\filldraw[dashed, fill=gray!20, draw=green!70, opacity=0.8] (\x,0,\z) -- (\x-\x*3/4,0,\z)-- (\x-\x*3/4,0,\z-\z*3/4) --(\x,0,\z-\z*3/4)--(\x,0,\z) node[anchor=west, yshift=2*\z, xshift=2*\z,align=center,font=\sffamily\tiny, rotate=-\rota] {$N^g$};
\filldraw[ fill=green!20, draw=green!70, opacity=0.8] (\x/2,0,\z/2) -- (\x,0,\z/2)-- (\x,0,\z)--(\x/2,0,\z)--(\x/2,0,\z/2);
\draw[thin,solid,gray!90] (\x/2,0,\z*3/4) -- (\x,0,\z*3/4);
\draw[thin,solid,gray!90] (\x*3/4,0,\z) -- (\x*3/4,0,\z/2);
\filldraw[fill=red!20, draw=red!70, opacity=0.8] (\x,0,\z*3/4) -- (\x,0,\z*3/4-\z/4) -- (\x-\x/4,0,\z*3/4-\z/4) -- (\x-\x/4,0,\z*3/4)--(\x,0,\z*3/4);
%%%%%%%NEW
\def\xShiftLlow{-\x/4}
\def\yShiftLlow{0}
\def\zShiftLlow{\z/4}
\begin{scope}[shift={(\xShiftLlow,\yShiftLlow,\zShiftLlow)}]
\filldraw[fill=olive!20, draw=olive!70, opacity=0.8] (\x,0,\z*3/4) -- (\x,0,\z*3/4-\z/4) -- (\x-\x/4,0,\z*3/4-\z/4) -- (\x-\x/4,0,\z*3/4)--(\x,0,\z*3/4);
\end{scope}
%%%%%%%NEW

\draw[dashed, draw=red!70, opacity=0.8] (\x,0,\z*7/8) -- (\x-\x*3/8,0,\z*7/8)-- (\x-\x*3/8,0,\z*7/8-\z*4/8) --(\x,0,\z*7/8-\z*4/8)--(\x,0,\z*7/8);

%%%%%%%NEW
\draw[dashed, draw=olive!70, opacity=0.8] (\x*3/8,0,\z) -- (\x*3/8,0,\z-\z*3/8)-- (\x*7/8,0,\z-\z*3/8) --(\x*7/8,0,\z)--(\x*3/8,0,\z);
%%%%%%%NEW

%
\shade[ball color=cyan, path fading=fade inside] (\x-\x/16,0,\z/2-\z/16) circle (0.08);
\shade[ball color=blue, path fading=fade inside] (\x/2-\x/16,0,\z-\z/16) circle (0.08);
\end{scope}
\def\xShiftLtwo{\x/6}
\def\yShiftLtwo{-\rely*\y}
\def\zShiftLtwo{-\z/6}
\def\rely{1.9}
\draw[thin,->] (\x*7/8+\xShiftLone,\yShiftLone,\z*5/8 +\zShiftLone) .. controls (\x*7/8+\xShiftLone,\yShiftLone+\yShiftLtwo*5/8-\yShiftLone*5/8,\z*6/8 +\zShiftLone) and (\x*7/8+\xShiftLtwo,\yShiftLtwo-\yShiftLtwo*5/8+\yShiftLone*5/8,\z*4/8 +\zShiftLtwo) ..  (\x*7/8+\xShiftLtwo,\yShiftLtwo,\z*5/8 +\zShiftLtwo) ;%node[anchor=south,align=center,font=\sffamily\tiny, rotate=-\rota] {$N^r$}
\begin{scope}[shift={(\xShiftLtwo, \yShiftLtwo, \zShiftLtwo)}]
%Upper level 2
\filldraw[dashed,fill=gray!20, draw=red!70, opacity=0.8] (\x,0,\z*7/8) -- (\x-\x*3/8,0,\z*7/8)-- (\x-\x*3/8,0,\z*7/8-\z*4/8) --(\x,0,\z*7/8-\z*4/8)--(\x,0,\z*7/8);
\draw[very thin, solid,|<->|] (\x-\x*3/8-0.05*\x,0,\z*7/8) -- (\x-\x*3/8-0.05*\x,0,\z*7/8-\z*4/8) node[anchor= north west, yshift = 0 0pt, xshift = -12*\x 0pt,rotate=0,font=\footnotesize, rotate=-\rota]{$\delta\frac{s}{2^{\ell(N^r)}}$};
\filldraw[fill=red!20, draw=red!70, opacity=0.8] (\x,0,\z*3/4) -- (\x,0,\z*3/4-\z/4) -- (\x-\x/4,0,\z*3/4-\z/4) -- (\x-\x/4,0,\z*3/4)--(\x,0,\z*3/4)node[anchor=west,align=center,font=\sffamily\tiny, rotate=-\rota] {$N^r$};
\shade[ball color=cyan, path fading=fade inside] (\x-\x/16,0,\z/2-\z/16) circle (0.08);
%\shade[ball color=blue, path fading=fade inside] (\x,0,0) circle (0.08);
%
\end{scope}
%

%-\rely*\y*5/8+\relFirY*\y*5/8
\end{tikzpicture}
}
\vspace{-7mm}
\caption{Illustration of an octree data structure, containing two pseudo points (blue and cyan) in two dimensions. The support regions $\calS(\cdot)$ and test regions $\calT(\cdot)$ of three nodes $N^r$, $N^g$, $N^b$ are shown as dashed and filled areas with red, green, and blue color, respectively. No pseudo points are contained in the test region $\calT(N^g)$ (filled green) of node $N^g$ but two pseudo points are in its support region $\calS(N^g)$ (dashed green). In this example, the maximum number of allowable pseudo points for each region is $max(N)=1$, so node $N^g$ is split into the red ($N^r$) and yellow (not labeled) regions. The cyan pseudo point belongs to both $\calP_{t,\cls}(N^b)$ and $\calP_{t,\cls}(N^r)$.}
% Thus, the child node $N^r\in children(N^g)$ is located at level $\ell(N^r)=2$ of the tree. 
%\TODO{Is $N^g$ used for training? If yes, does it contain 2 pseudo points even though $Max(N^g)=1$? Where does the blue pseudo point belong?}}
\label{fig:quadtree}
\end{figure}
% For three nodes $N^r,N^g,N^b$, the regions for their $\calS(\cdot), \calT(\cdot)$ are respectively filled and dashed with red, green, blue colors. For all nodes $Max(N)=1$, so node $N^g$ splits. Note that there is no pseudo point in the filled green region ($\calT(N^g)$), but two pseudo points  observed in $\calS(N^g)$. Additionally $N^r\in children(N^g), \ell(N^r)=2$. We see the cyan pseudo point is in both ${\cP{t,\cls, N^b}}, {\cP{t,\cls,N^r}}$.

Even after compressing the TSDF training data using Prop.~\ref{prop:gp_compression} to a small set of distinct pseudo points, the GP training complexity still scales cubically with the number of pseudo points. To ensure that online training is possible for large environments, we develop an octree data structure with overlapping octant regions to store the pseudo points. We train independent GPs in each of these regions, which is efficient since the maximum number of pseudo points per region is fixed. The region overlap serves to eliminate discontinuities in the resulting TSDF estimate. At test time, the TSDF value of a query point is inferred using only the parameters of the corresponding region according to \eqref{eq:compressed_gp_posterior}. The overlapping octant regions are illustrated in Fig.~\ref{fig:quadtree}.

Formally, an octree of pseudo points is a tree data structure such that each internal node has at most eight children. Each node is associated with a spatial region in the 3-D workspace $\env$. The root is associated with a cube with side length $s > 0$, which is recursively subdivided into up to eight overlapping octant regions by the eight child nodes. Each node $N$ maintains the following information:

%The region of each node $N$ has center $ctr(N)$ and satisfies the following specifications:

\begin{enumerate}

\item $\ell(N) \geq 0$: level of $N$ in the tree, starting from $0$ at the root node.

\item $ctr(N) \in \reals^3$: center of the region associated with $N$.

\item $\calS(N) := \{\bbx \in \env |\; \|\bbx-ctr(N)\|_\infty \leq \delta \frac{s}{2^{\ell(N)+1}}\}$: support region of $N$ with $\delta > 1$.

\item $\calT(N) := \{\bbx \in \env |\; \|\bbx-ctr(N)\|_\infty \leq \frac{s}{2^{\ell(N)+1}}\}$: test region of $N$.

\item $\calP(N) \subseteq \calS(N) \cap \allPs$: set of pseudo points assigned to this node

\item $max(N)$: node $N$ splits into eight children if the number of observed pseudo points $\calP(N)$ exceeds $max(N)$
%  in its support region $\calS(N)$ 

\item $children(N)$: empty set if $N$ is a leaf and, otherwise, a set of eight nodes at level $\ell(N)+1$ with centers in $\{ctr(N)+s_x\bbe_1+s_y\bbe_2+s_z\bbe_3|s_x,s_y,s_z\in\{-\frac{s}{2^{\ell(N)+1}},+\frac{s}{2^{\ell(N)+1}}\}\}$.

\end{enumerate}

The pseudo points $\calP_{t,\cls}$ observed up to time $t$ (see Sec.~\ref{sec:GP_update}) are stored in octree data structures for each class $\cls$. The points assigned to node $N$ for class $\cls$ at time $t$ are $\calP_{t,\cls}(N) := \calP_{t,\cls}\cap \calS(N)$. The pseudo points $\calP_{t,\cls}(N)$ of each leaf node $N$ are used to train an independent GP. At time step $t$, prediction associated with each class $l$ for test points in the region $\calT(N)$ is performed by the GP associated with node $N$. The class distribution of test points with small predicted TSDF values (surface points) is determined according to Prop.~\ref{prop:class_prediction}. With the data structure developed for efficient representations of possibly large domains, we next shift to how the proposed incremental posterior inference scheme may be decentralized across a collection of interconnected robots.

\section{Distributed Incremental Sparse\\ GP Regression}
\label{sec:DistributedGP}
In this section, we develop a distributed version of the incremental sparse GP regression in Sec.~\ref{sec:GP-regression}. We consider $n$ robots, communicating over a network $G = (\calV, \calE)$. Each robot $\rb \in \calV$ receives its own local observations $\tilde{\calD}_{t}^\rb = (\tilde{\calX}_t^\rb, \tilde{\bfy}_t^\rb)$ at time $t$ and extracts newly observed pseudo points $\tilde{\calP}_t^\rb$, with associated number of observations $\tilde{\bfm}_t^\rb$ and average values $\tilde{\bfzeta}_t^\rb$, as detailed in Sec.~\ref{sec:incGPreg}. This information is used to update the complete set of pseudo points $\calP_t^\rb$ observed up to time $t$, along with the number of observations $\bfm_t^\rb$ and average values $\bfzeta_t^\rb$, according to \eqref{eq:update}. These parameters $\Theta_t^\rb := \crl{\calP_t^\rb, \bfm_t^\rb, \bfzeta_t^\rb}$, maintained by robot $\rb$, define a complete GP distribution for the function $f(\bfx)$, with mean and covariance functions in \eqref{eq:compressed_gp_posterior}.

While each robot may estimate $f(\bfx)$ individually, we consider how the robots may exchange information to estimate $f(\bfx)$ collaboratively. Our approach is inspired by techniques extending network consensus \cite{consensus} to distributed probabilistic estimation \cite{rad2010distributed,jadbabaie2012non,atanasov2014joint,nedic2016distributed,nedic2017distributed}. We observe that the continuous-space GP distribution of $f(\bfx)$ is induced by the statistics $\bfm_t^\rb$, $\bfzeta_t^\rb$ associated with the finite number of pseudo points $\calP_t^\rb$ and, hence, if the robots exchange information about and agree on these finite-dimensional parameters, then the corresponding GP distributions of $f(\bfx)$ at each robot will agree. Our main innovation is a distributed algorithm for updating the sparse GP parameters of one robot using the parameters of its one-hop neighbors' distributions. While existing results apply to fixed finite-dimensional parameter estimation, our approach applies to function estimation with an infinite-dimensional GP distribution, updated via consensus on an incrementally growing set of pseudo-point parameters. 

In Sec.~\ref{sec:sparseGPreg}, we demonstrated a duality between the joint Gaussian distribution over the pseudo points and the posterior GP induced by these pseudo points. Specifically, if the joint Gaussian distribution of the pseudo points in \eqref{eq:pseudopoint_distribution} or \eqref{eq:pseudopoint_distribution_information} is available, then we can calculate the mean and covariance functions the GP in \eqref{eq:pseudopoint_gp_posterior_information}. This observation suggests that it is sufficient to keep track of the information mean and information matrix of the joint Gaussian distribution of the pseudo points.

%, we can calculate the information mean and information matrix in \eqref{eq:pseudopoint_distribution_information}. Then from information mean and information matrix we can calculate the mean and covariance matrix of the Gaussian Process in \eqref{eq:pseudopoint_gp_posterior_information}. It suggests in the process of information gathering in order to update GP we just need to keep track of information mean and information matrix of the joint Gaussian distribution on the pseudo points.

Before continuing, we define a few key quantities related to the graph $G$. Specifically, denote as $A \in \reals^{n \times n}$  its adjacency matrix, whose elements $A_{\rb\rbsec}$ may be non-binary. Let $D := \diag(D_{11}, \ldots, D_{nn})$ be the diagonal degree matrix of the graph with elements $D_{\rb\rb} = \sum_{\rbsec\neq \rb} A_{\rb\rbsec}$ and $L:=D-A$ be the graph Laplacian. Define a weight matrix $\weight := I - \nu L$ for $0 \leq \nu \leq \frac{1}{\Delta}$, where $\Delta = \max(D_{11},\ldots, D_{nn})$ is the maximum node degree. The vector of ones, $\mathbf{1} \in \reals^n$, is an eigenvector of $\weight$ since $L \mathbf{1} = \bb0$. Also, $\weight$ is a row-stochastic nonnegative and primitive matrix \cite{consensus} and, hence, has a stationary distribution, specified by its left eigenvector $\bfpi$ with $\sum_{\rb=1}^n \pi_{\rb} = 1$. This Perron weight matrix construction is common in consensus and distributed gradient descent algorithms \cite{consensus,random-network-consensus,distributed-subgradient}.

To gain intuition about the construction of consensus schemes over GP posteriors, we first review distributed Kalman filtering for fixed-dimensional parameter estimation.

\begin{remark}[Directed time-varying graphs]
\label{remark:network}
For simplicity, we consider an undirected static graph $G$ with a fixed weight matrix $\weight$. Relying on consensus results for switching networks \cite{consensus,moreau-condition,time-varying-consensus}, our results may be generalized to directed and time-varying graphs assuming that the graph sequence is uniformly strongly connected, i.e., there exists an integer $T > 0$ such that the union of the edges over any time interval of length $T$ is strongly connected.
\end{remark}
%In this paper, for simplicity, we restrict focus to the case that the weight matrix of undirected graphs that does not change over time. However, our method could be generalized to time variant graphs with specific conditions~\cite{4118472} on changing weight matrices of directed graphs, by considering the stationary distribution of weight matrices at each time step instead of a single stationary distribution. These complications we defer to future work.

\subsection{Distributed Kalman Filtering}\label{NormalUpdate}

Suppose that the robots aim to estimate a fixed (finite-dimensional) vector $\bff$ cooperatively using local observations $\bfy_t^\rb$, generated according to a linear Gaussian model:
\begin{equation}
\label{eq:linear-Gaussian-om}
\bfy^\rb_t = H^\rb \bff + \bfeta^\rb_t, \qquad \bfeta^\rb_t \sim \mathcal{N}(0, V^\rb).
\end{equation}
Assume that the observations $\bfy^\rb_t$ received by robot $\rb$ are independent over time and from the observations of all other robots. Assume also that the graph $G$ is connected and that $\bff$ is observable if one has access to the observations received by all robots, i.e., the matrix $\begin{bmatrix} H^1 & \cdots & H^n \end{bmatrix}$ has rank equal to the dimension of $\bff$. Since individual observations $\bfy^\rb_t$ alone may be insufficient to estimate $\bff$, the robots need to exchange information. We suppose that each robot starts with a prior probability density function $p^\rb_0(\bff)$ over the unknown vector $\bff$ and updates it over time, relying on its local observations $\bfy^\rb_t$ as well as communication with one-hop neighbors in $G$.

Rahnama Rad and Tahbaz-Saleh~\cite{rad2010distributed} developed a consistent distributed estimation algorithm, in which each agent $\rb$ uses standard Bayesian updates with its local observations $\bfy^\rb_{t+1}$ but, instead of its own prior $p^\rb_t$, each agent uses a weighted geometric average of its neighbors' priors:
\begin{equation}
\label{eq:distributed-bayesian-filter}
p^\rb_{t+1}(\bff) \propto p^\rb(\bfy^\rb_{t+1} | \bff) \prod_{\rb=1}^n (p^\rb_t(\bff))^{\weight_{\rb\rbsec}},
\end{equation}
where $p^\rb(\bfy^\rb_{t+1} | \bff)$ is an observation model, such as \eqref{eq:linear-Gaussian-om}, that should satisfy certain regularity conditions \cite{rad2010distributed}. Atanasov et al.~\cite{atanasov2014joint} showed that if the prior distributions $p_0^\rb$ are Gaussian and the observation models are linear Gaussian as in \eqref{eq:linear-Gaussian-om}, the resulting distributed Kalman filter is mean-square consistent (the estimates $\arg\max_{\bff} p^\rb_{t}(\bff)$ of all agents $\rb$ converge in mean square to the true $\bff$). Specifically, if the priors are $\bff \sim \mathcal{N}(\bfmu^\rb_0,\Sigma^\rb_0)$ with information matrix $\Omega^\rb_0 := (\Sigma^\rb_0)^{-1}$ and information mean $\bfomega^\rb_0 := \Omega^\rb_0 \bfmu^\rb_0$, the Gaussian version of the distributed estimator in \eqref{eq:distributed-bayesian-filter} is:
\begin{equation}
\label{eq:normalUpdate}
\begin{aligned}
\bfomega_{t+1}^\rb &= \sum_{\rb=1}^n\weight_{\rb\rbsec} \bfomega_{t}^\rbsec + H^{\rb\top} {V^\rb}^{-1}\bfy^\rb_{t+1} \\
\Omega_{t+1}^\rb &= \sum_{\rbsec=1}^n\weight_{\rb\rbsec} \Omega_{t}^\rbsec + H^{\rb\top} {V^\rb}^{-1} {H^\rb}
\end{aligned}
\end{equation}
because geometric averaging and Bayesian updates with Gaussian densities lead to a Gaussian posterior density \cite{atanasov2014joint}. The relationship between geometric means being used for belief propagation in \eqref{eq:distributed-bayesian-filter} and weighted averaging via mixing matrix $W$ forms the conceptual basis for message passing in the more general GP posterior inference setting which we detail next.

\subsection{Distributed Incremental Sparse GP Regression}

The distributed estimation algorithm in~\eqref{eq:normalUpdate} does not directly apply to GP regression because the estimation target $f(\bfx)$ is infinite-dimensional. However, the sparse GP regression, described in Sec.~\ref{sec:GP-regression}, relies on a finite (albeit incrementally growing) set of pseudo points $\calP_t$, and we show that it is possible to obtain distributed incremental sparse GP regression based on \eqref{eq:normalUpdate}. As discussed in the beginning of this section, each robot $\rb$ maintains parameters $\Theta_t^\rb := \crl{\calP_t^\rb, \bfm_t^\rb, \bfzeta_t^\rb}$ based on its local observations $\tilde{\calD}_{t}^\rb = (\tilde{\calX}_t^\rb, \tilde{\bfy}_t^\rb)$. Our key idea is to perform weighted geometric averaging over local posteriors, which translates to simple weighted averaging of the means and covariances in \eqref{eq:pseudopoint_distribution} of $f$ at a finite set of pseudo points $\calQ \supseteq \calP_t^\rb$, which will be specified precisely below. The parameters $\Theta_t^\rb$ maintained by robot $\rb$ induce a GP distribution over $f$ in \eqref{eq:compressed_gp_posterior}, which in turn provides a Gaussian probability density function $p_t^\rb(\bff) := p(\bff | \Theta_t^\rb)$ over the (finite-dimensional) vector $\bff := f(\calQ)$ with mean and covariance, obtained from \eqref{eq:compressed_gp_posterior}:
\begin{equation}
\label{eq:meanForRobotsPs}
\begin{aligned}
\mu_t^\rb(\calQ) &:= \mu_0^\rb(\calQ) + k_0^\rb(\calQ,\calP_t^\rb)Z_t^\rb\prl{\bfzeta_t^\rb-\mu_0^\rb(\calP_t^\rb)}, \\
\Sigma_t^\rb(\calQ) &:=  k_0^\rb(\calQ,\calQ) - k_0^\rb(\calQ,\calP_t^\rb)Z_t^\rb k_0^\rb(\calP_t^\rb,\calQ),
\end{aligned}
\end{equation}
where $Z_t^\rb = (k_0^\rb(\calP_t^\rb,\calP_t^\rb) + \sigma^2\diag(\bfm_t^\rb)^{-1})^{-1}$. In order to derive decentralized updates for GPs akin to \eqref{eq:normalUpdate}, we first present the iterative updates associated with robots' local posteriors in terms of their information mean and information matrix corresponding to the mean and covariance of $p_t^\rb(\bff)$.

% Similar to Sec.~\ref{sec:incGPreg}, let $m_t^\rb(\bfp)$ and $\zeta_t^\rb(\bfp)$ denote the number of observations and average observation, respectively, for $\bfp \in \calP_t^\rb$. Extend the domains of these functions to $\calQ \cup \calP_t^\rb$ by defining $m_t^\rb(\bfq) = \zeta_t^\rb(\bfq) = 0$ for $\bfq \in \calQ \setminus \calP_t^\rb$. 
%%%%%%%%%%%%%%%%%%%%%%%%%%%%%%%%%%%%%%%%%%%%%%%%%%%%%%%%%%%%%%%%%%%%%%%%%%%%%%%%%%%%%%%%%%%%%%%%%%%%%%%%%%%%%%%%%%%%%%%%%%%%%%%%%%%%%%%%%%%%%%%%%%%%%%%%%%%%%%%%%%%%%%%%%%%%%%%%%%%%%%%%%%%%%%%%%%%%%%%%%%%%%%%%%%%%%%%%%%%%%%%%%%%%%%%%%%%%%%%%%%%%%%%%%%%%%%%%%%%%%%%%%%%%%%%%%%%%%%%%%%%%%%%%%%%%%%%%%%%%%%%%%%%%%%%%%%%%%%%%%%%%%%%%%%%%%%%%%%%%%%%%%%%%%%%%%%%%%%%%%%%%%%%%%%%%%%%%%%%%%%%%%%%%%%%%
\begin{lemma}
The information mean $\omega^\rb_t(\calQ):= \Omega^\rb_t(\calQ) \bfmu_t^\rb(\calQ)$ and information matrix $\Omega^\rb_t(\calQ) := (\Sigma^\rb_t(\calQ))^{-1}$ of the Gaussian probability density function $p_t^\rb(\bff) := p(\bff | \Theta_t^\rb)$ of $\bff := f(\calQ)$ with parameters $\Theta_t^\rb := \crl{\calP_t^\rb, \bfm_t^\rb, \bfzeta_t^\rb}$ and mean and covariance in \eqref{eq:meanForRobotsPs} are:
\begin{equation}
\label{eq:infoMeanCovPs}
\begin{aligned}
\omega^\rb_t(\calQ) &= k_0^\rb(\calQ,\calQ)^{-1}\mu_0^\rb(\calQ) + \sigma^{-2}\diag(m_t^\rb(\calQ))\zeta_t^\rb(\calQ)\\
\Omega^\rb_t(\calQ) &= k_0^\rb(\calQ,\calQ)^{-1} + \sigma^{-2}\diag(m_t^\rb(\calQ)),
\end{aligned}
\end{equation}
where, similar to Sec.~\ref{sec:incGPreg}, $m_t^\rb(\bfp)$ and $\zeta_t^\rb(\bfp)$ denote the number of observations and average observation, respectively, for $\bfp \in \calP_t^\rb$ and their domains have been extended to $\calQ \supseteq \calP_t^\rb$ by defining $m_t^\rb(\bfq) = \zeta_t^\rb(\bfq) = 0$ for $\bfq \in \calQ \setminus \calP_t^\rb$.
\end{lemma}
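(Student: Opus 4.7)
The plan is to verify the two claimed identities by direct algebraic computation: invert the covariance $\Sigma_t^\rb(\calQ)$ via the Woodbury matrix identity to recover the information matrix, then multiply by $\mu_t^\rb(\calQ)$ to recover the information mean. To make the notation manageable, I would first introduce a binary selection matrix $E \in \{0,1\}^{|\calP_t^\rb| \times |\calQ|}$ satisfying $k_0^\rb(\calP_t^\rb, \bbx) = E k_0^\rb(\calQ, \bbx)$ for every $\bbx$, which yields $k_0^\rb(\calQ,\calP_t^\rb) = K E^\top$, $k_0^\rb(\calP_t^\rb,\calP_t^\rb) = E K E^\top$, and $E \mu_0^\rb(\calQ) = \mu_0^\rb(\calP_t^\rb)$, where $K := k_0^\rb(\calQ, \calQ)$. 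Letting $M := \diag(\bbm_t^\rb)$, we have $Z_t^\rb = (E K E^\top + \sigma^2 M^{-1})^{-1}$, and the extension convention $m_t^\rb(\bbq) = \zeta_t^\rb(\bbq) = 0$ for $\bbq \in \calQ \setminus \calP_t^\rb$ is captured cleanly by $\diag(m_t^\rb(\calQ)) = E^\top M E$.

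For the information matrix, I would apply the Woodbury identity to $\Sigma_t^\rb(\calQ) = K - K E^\top Z_t^\rb E K$ with $A = K$, $U = K E^\top$, $C = -Z_t^\rb$, $V = E K$. The inverse becomes
\[
\Omega_t^\rb(\calQ) = K^{-1} - E^\top\bigl(-Z_t^{\rb\,-1} + E K E^\top\bigr)^{-1} E = K^{-1} + \sigma^{-2} E^\top M E,
\]
which is exactly $k_0^\rb(\calQ,\calQ)^{-1} + \sigma^{-2}\diag(m_t^\rb(\calQ))$.

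For the information mean, I would substitute $\mu_t^\rb(\calQ) = \mu_0^\rb(\calQ) + K E^\top Z_t^\rb (\bbzeta_t^\rb - E \mu_0^\rb(\calQ))$ into $\Omega_t^\rb(\calQ)\mu_t^\rb(\calQ)$ and expand. The two terms involving $Z_t^\rb(\bbzeta_t^\rb - E \mu_0^\rb(\calQ))$ combine as
\[
E^\top\bigl(I + \sigma^{-2} M\, E K E^\top\bigr) Z_t^\rb\bigl(\bbzeta_t^\rb - E \mu_0^\rb(\calQ)\bigr),
\]
and the key algebraic identity $I + \sigma^{-2} M\, E K E^\top = \sigma^{-2} M\, Z_t^{\rb\,-1}$ collapses this to $\sigma^{-2} E^\top M (\bbzeta_t^\rb - E \mu_0^\rb(\calQ))$. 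The $-\sigma^{-2} E^\top M E \mu_0^\rb(\calQ)$ term then cancels against the $\sigma^{-2} E^\top M E \mu_0^\rb(\calQ)$ contribution coming from the second block of $\Omega_t^\rb(\calQ)$, leaving precisely $K^{-1}\mu_0^\rb(\calQ) + \sigma^{-2}\diag(m_t^\rb(\calQ))\zeta_t^\rb(\calQ)$.

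I expect no real obstacle beyond careful bookkeeping; the only subtle point is keeping track of which quantities live on $\calP_t^\rb$ versus on $\calQ$ and noting that, because $E^\top M$ vanishes on the complement of $\calP_t^\rb$, the zero-extension convention for $m_t^\rb$ and $\zeta_t^\rb$ is consistent and does not alter the result. The simplification $(I + \sigma^{-2} M\, E K E^\top) Z_t^\rb = \sigma^{-2} M$ is the one computation that has to be seen to make the cross terms cancel, but once it is written down the rest is mechanical.
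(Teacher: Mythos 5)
Your proposal is correct and follows essentially the same route as the paper's proof: the same selection matrix $E$, the matrix inversion lemma applied to $\Sigma_t^\rb(\calQ)$ for the information matrix, and the same key identity $\bigl(I + \sigma^{-2}\diag(\bfm_t^\rb)\,E k_0^\rb(\calQ,\calQ) E^\top\bigr) Z_t^\rb = \sigma^{-2}\diag(\bfm_t^\rb)$ (which the paper packages as $\Omega_t^\rb(\calQ)k_0^\rb(\calQ,\calP_t^\rb)Z_t^\rb = \sigma^{-2}E^\top\diag(\bfm_t^\rb)$) to collapse the cross terms in the information mean. The only cosmetic difference is that the paper factors this product before multiplying by $\bfzeta_t^\rb - \mu_0^\rb(\calP_t^\rb)$, while you expand and regroup; the cancellations are identical.
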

%%%%%%%%%%%%%%%%%%%%%%%%%%%%%%%%%%%%%%%%%%%%%%%%%%%%%%%%%%%%%%%%%%%%%%%%%%%%%%%%%%%%%%%%%%%%%%%%%%%%%%%%%%%%%%%%%%%%%%%%%%%%%%%%%%%%%%%%%%%%%%%%%%%%%%%%%%%%%%%%%%%%%%%%%%%%%%%%%%%%%%%%%%%%%%%%%%%%%%%%%%%%%%%%%%%%%%%%%%%%%%%%%%%%%%%%%%%%%%%%%%%%%%%%%%%%%%%%%%%%%%%%%%%%%%%%%%%%%%%%%%%%%%%%%%%%%%%%%%%%%%%%%%%%%%%%%%%%%%%%%%%%%%%%%%%%%%%%%%%%%%%%%%%%%%%%%%%%%%%%%%%%%%%%%%%%%%%%%%%%%%%%%%%%%%%%
\begin{proof}
Similar to the proof of Prop.~\ref{prop:gp_compression}, let $E$ be a binary matrix such that $k_0^\rb(\calP_t^\rb,\bfx) = Ek_0^\rb(\calQ,\bfx)$, i.e., $E$ selects the points from the superset $\calQ$ which correspond to $\calP_t^\rb$. Note that $k_0^\rb(\calQ,\calP_t^\rb) = k_0^\rb(\calQ,\calQ)E^\top$, $k_0^\rb(\calP_t^\rb,\calQ) = Ek_0^\rb(\calQ,\calQ)$, and $k_0^\rb(\calP_t^\rb,\calP_t^\rb) = Ek_0^\rb(\calQ,\calQ)E^\top$. The expression for $\Omega^\rb_t(\calQ)$ follows from the matrix inversion lemma applied to the covariance matrix in \eqref{eq:meanForRobotsPs} and noting that $E^\top \diag(\bfm_t^\rb) E = \diag(m_t^\rb(\calQ))$. Then, note that:
\begin{align}
\Omega^\rb_t&(\calQ)k_0^\rb(\calQ,\calP_t^\rb)Z_t^\rb \notag\\
&= \prl{I + \sigma^{-2}E^\top \diag(\bfm_t^\rb) Ek_0^\rb(\calQ,\calQ)}E^\top Z_t^\rb\\
&= \sigma^{-2}E^\top\diag(\bfm_t^\rb) (Z_t^\rb)^{-1}Z_t^\rb = \sigma^{-2}E^\top\diag(\bfm_t^\rb).\notag
\end{align}
Thus, the information mean is:
\begin{align}
\omega^\rb_t(\calQ) &= \Omega^\rb_t(\calQ) \prl{\mu_0^\rb(\calQ) + k_0^\rb(\calQ,\calP_t^\rb)Z_t^\rb\prl{\bfzeta_t^\rb-\mu_0^\rb(\calP_t^\rb)}} \notag\\
&= \Omega^\rb_t(\calQ)\mu_0^\rb(\calQ) + \sigma^{-2}E^\top\diag(\bfm_t^\rb)\prl{\bfzeta_t^\rb-\mu_0^\rb(\calP_t^\rb)} \notag\\
&= k_0^\rb(\calQ,\calQ)^{-1}\mu_0^\rb(\calQ) + \sigma^{-2}E^\top\diag(\bfm_t^\rb)\bfzeta_t^\rb\\
&= k_0^\rb(\calQ,\calQ)^{-1}\mu_0^\rb(\calQ) + \sigma^{-2}\diag(m_t^\rb(\calQ))\zeta_t^\rb(\calQ) \tag*{\qedhere}
\end{align}
\end{proof}

With the expression for the parametric updates associated with the posterior inference defined by observations acquired locally at robot $i$ only, we next detail how to augment this update with neighboring robots' information.

\subsubsection{Distributed updates with a fixed pseudo-point set}
%If the pseudo-point sets $\calP_t^\rb$ of all robots happened to have the same cardinality, then the information means and matrices would have equal dimensions across the robots and we could apply the weighted averaging in \eqref{eq:normalUpdate} to the expressions in \eqref{eq:infoMeanCovPs}.

To begin, suppose that the pseudo-point sets are fixed across all robots, i.e., $\calP \equiv \calP_t^\rb$, and the local observations $\tilde{\calD}_{t+1}^\rb = (\tilde{\calX}_{t+1}^\rb, \tilde{\bfy}_{t+1}^\rb)$ satisfy $\tilde{\calX}_{t+1}^\rb \subseteq \calP$ for all $t$, $i$. Then, the information means and matrices in \eqref{eq:infoMeanCovPs} have equal dimensions across the robots, and we can apply the update in \eqref{eq:normalUpdate} directly:
\begin{equation}
\label{eq:fixed-pseudopoint-update}
\begin{aligned}
\bfomega^\rb_{t+1} &= \sum_{\rb=1}^n\weight_{\rb\rbsec} \bfomega_{t}^\rbsec + H_{t+1}^{\rb\top}(\sigma^2I)^{-1} \tilde{\bfy}_{t+1}^\rb,\\
\Omega^\rb_{t+1} &= \sum_{\rbsec=1}^n\weight_{\rb\rbsec} \Omega_{t}^\rbsec + H_{t+1}^{\rb\top}(\sigma^2I)^{-1} H_{t+1}^\rb,\\
\end{aligned}
\end{equation}
where $H_{t+1}^\rb := k_0^\rb(\tilde{\calX}^{\rb}_{t+1},\calP)k_0^\rb(\calP,\calP)^{-1}$, $\bfomega_t^\rb := \omega_t^\rb(\calP)$, and $\Omega_t^\rb := \Omega_t^\rb(\calP)$. The information means and matrices have a simple structure, and, similar to \eqref{eq:update}, it is sufficient to track only the number of observations $\bfm_t^\rb$ and the average observations $\bfzeta_t^\rb$ over time:
\begin{gather}
\label{eq:fixed-pseudopoint-update2}
\scaleMathLine{\begin{aligned} 
\bfomega^\rb_{t+1} &= \sum_{\rb=1}^n\weight_{\rb\rbsec}\bfomega_0^\rbsec + \frac{1}{\sigma^{2}} \sum_{\rb=1}^n\weight_{\rb\rbsec}\diag(\bfm_t^\rbsec)\bfzeta_t^\rbsec + \frac{1}{\sigma^{2}}\diag(\tilde{\bfm}_{t+1}^\rb)\tilde{\bfzeta}_{t+1}^\rb\\
\Omega^\rb_{t+1} &= \sum_{\rbsec=1}^n\weight_{\rb\rbsec}\Omega^\rbsec_{0} + \frac{1}{\sigma^{2}}\sum_{\rb=1}^n\weight_{\rb\rbsec} \diag(\bfm_t^\rbsec) + \frac{1}{\sigma^{2}}\diag(\tilde{\bfm}_{t+1}^\rb),
\end{aligned}}
\raisetag{4.5ex}
\end{gather}
where $\tilde{\bfm}_{t+1}^\rb$ and $\tilde{\bfzeta}_{t+1}^\rb$ are the number of new observations and new observation averages received by robot $\rb$ of the pseudo points $\calP$ at time $t+1$. We consider the case with incrementally growing pseudo-point sets that are potentially different across the robots before presenting the final distributed update equations for $\bfm_t^\rb$ and $\bfzeta_t^\rb$. This is the focus of the following subsection.

\subsubsection{Distributed updates with dynamic pseudo-point sets}
%\textcolor{blue}{we need a block diagram visualizing how these updates operate within and between robots.}
Consider the general case where each robot maintains its own pseudo-point set $\calP_t^\rb$ and the observations $\tilde{\calD}_{t+1}^\rb = (\tilde{\calX}_{t+1}^\rb, \tilde{\bfy}_{t+1}^\rb)$ may introduce new pseudo-points $\tilde{\calP}_{t+1}^\rb \not\subseteq \calP_t^\rb$. Our key observation is that the parameters $\Theta_t^\rb$ induce a GP distribution over the whole function $f$ and, hence, can be used to obtain a Gaussian distribution over a pseudo-point set that is larger than $\calP_t^\rb$ according to \eqref{eq:meanForRobotsPs} and \eqref{eq:infoMeanCovPs}. Note that the structure of the information mean and information matrix in \eqref{eq:infoMeanCovPs} remains unchanged when the set of pseudo-points $\calQ$ changes. To increment the pseudo-point set of robot $\rb$ at time $t$, we aggregate the pseudo points $\calP_t^\rbsec$ of its neighbors and the newly observed pseudo points $\tilde{\calP}_{t+1}^\rb$ as follows:
\begin{equation}
\label{eq:dynamic-pseudopoint-update1}
\calP_{t+1}^\rb = \bigcup\limits_{\rbsec \in \mathcal{N}_\rb\cup\{\rb\}} \cP[\rbsec]{t}\cup\P[\rb]{t+1}
\end{equation}
Then, the distributed averaging in \eqref{eq:normalUpdate} can be performed over the information means and information matrices in \eqref{eq:infoMeanCovPs} with $\calQ = \calP_{t+1}^\rb$ and $H_{t+1}^\rb := k_0^\rb(\tilde{\calX}^{\rb}_{t+1},\calP_{t+1}^\rb)k_0^\rb(\calP_{t+1}^\rb,\calP_{t+1}^\rb)^{-1}$:
\begin{equation}
\label{eq:dynamic-pseudopoint-update2}
\scaleMathLine[0.89]{\begin{aligned}
\omega^\rb_{t+1}(\calP_{t+1}^\rb) &= \sum_{\rb=1}^n\weight_{\rb\rbsec} \omega^\rbsec_t(\calP_{t+1}^\rb) + H_{t+1}^{\rb\top}(\sigma^2I)^{-1} \tilde{\bfy}_{t+1}^\rb,\\
\Omega^\rb_{t+1}(\calP_{t+1}^\rb) &= \sum_{\rb=1}^n\weight_{\rb\rbsec} \Omega^\rbsec_t(\calP_{t+1}^\rb) + H_{t+1}^{\rb\top}(\sigma^2I)^{-1} H_{t+1}^\rb.
\end{aligned}}
\end{equation}
We may rewrite the preceding expressions in terms of the number of observations $m^\rb_{t+1}(\bfp)$ and average observations $\zeta^\rb_{t+1}(\bfp)$ for any $\bfp \in \calP_{t+1}^\rb$, akin to \eqref{eq:update}, by following the steps in \eqref{eq:fixed-pseudopoint-update2} for the dynamic pseudo-point case, leading to:
\begin{align}\label{eq:updaterule}
%\cP[\rb]{t+1} &= \bigcup\limits_{\rbsec \in \mathcal{N}_\rb\cup\{\rb\}} \cP[\rbsec]{t}\cup\P[\rb]{t+1}\notag\\ 
m^\rb_{t+1}(\bfp) &= \sum_{ \rbsec \in \mathcal{N}_\rb\cup\{\rb\}} \weight_{\rb\rbsec} m^\rbsec_t(\bfp) + \tilde{m}^\rb_{t+1}(\bfp),\\
\zeta^\rb_{t+1}(\bfp) &= \frac{\sum_{ \rbsec \in \mathcal{N}_\rb\cup\{\rb\}} \weight_{\rb\rbsec} m^\rbsec_t(\bfp)\zeta^\rbsec_t(\bfp) + \tilde{m}^\rb_{t+1}(\bfp)\tilde{\zeta}^\rb_{t+1}(\bfp)}{m_{t+1}^\rb(\bfp)}. \notag
\end{align}

With the updates for robot $i$ in terms of its local observations and message passing with its neighbors $\mathcal{N}_i$ specified, we shift in the following subsection to establishing its statistical properties.

\subsection{Theoretical Guarantee for Consistent Estimation}
\label{sec:centralized-estimator}

We show that the proposed distributed incremental sparse GP regression defined by \eqref{eq:dynamic-pseudopoint-update1}, \eqref{eq:updaterule}, and \eqref{eq:compressed_gp_posterior} converges to a centralized sparse GP regression, which uses the observation data $\cup_t \cup_\rb \tilde{\calD}_{t}^\rb$ from all robots. At each time step $t$, the centralized estimator receives data $\cup_\rb \tilde{\calD}_{t}^\rb$, and, as discussed in Sec.~\ref{sec:incGPreg}, updates a global set of pseudo points $\calP_t^{ctr}$, the number of times $m_t^{ctr}(\bfp)$ each pseudo point $\bfp \in \calP_t^{ctr}$ has been observed, and the average observation $\zeta_t^{ctr}(\bfp)$ of $\bfp \in \calP_t^{ctr}$. In order to show that the GP maintained by each robot $\rb$ eventually agrees with the centralized GP, the centralized estimator should also be affected by the Perron weight matrix $\weight$. If $\weight = \frac{1}{n}\boldsymbol{11}^\top$, the information provided by different robots is equally credible and the centralized estimator can use the combined set of observations $\cup_\rb \tilde{\calD}_{t}^\rb$ directly. If, however, the left eigenvector $\bfpi$ of $\weight$ is not $\boldsymbol{1}$, then its elements $\pi_\rb$ specify different credibility for the different robots. More precisely, the centralized estimator should treat the measurements $\tilde{\calD}_{t}^\rb$ of robot $\rb$ as if they were generated with noise variance $\sigma^2/\pi_i$, instead of the true noise variance $\sigma^2$. This is equivalent to scaling the number of observations $\tilde{m}_t^\rb$ provided by robot $\rb$ by its ``credibility'' $\pi_i$, leading to the following update for the centralized sparse GP regression parameters:
\begin{align} \label{eq:central}
  \calP_{t+1}^{ctr} &= \cup_{\rb=1}^n \tilde{\calP}_{t+1}^{\rb} \cup \calP_t^{ctr},\notag\\ 
	m_{t+1}^{ctr}(\bfp) &= m_{t}^{ctr}(\bfp) + \sum_{\rb=1}^n \pi_\rb \tilde{m}_{t+1}^{\rb}(\bfp),\\
  \zeta_{t+1}^{ctr}(\bfp) &= \frac{m_{t}^{ctr}(\bfp)\zeta_t^{ctr}(\bfp)+\sum_{\rb=1}^n \pi_\rb \tilde{m}_{t+1}^{\rb}(\bfp)\tilde{\zeta}_{t+1}^\rb(\bfp)}{m_{t+1}^{ctr}(\bfp)},\notag
\end{align}
for all $\bfp \in \calP_{t+1}^{ctr}$. The next result shows that the individual GP distributions maintained by each robot using the distributed updates in \eqref{eq:updaterule} converge to the centralized GP distribution determined by the parameters above.

\begin{proposition}
\label{th:Net1}
Let $\tilde{\calD}_{t}^\rb = (\tilde{\calX}_t^\rb, \tilde{\bfy}_t^\rb)$ be the data received by robot $\rb$ at time $t$, associated with pseudo points $\tilde{\calP}_t^\rb \subset \calP_{\#}$ and number of observations $\tilde{m}_t^\rb(\bfp)$ and average observation $\tilde{\zeta}_t^\rb(\bfp)$ for $\bfp \in \calP_{\#}$. If the data streaming stops at some time $T < \infty$, then as $t \to \infty$, the distributions $\calG\calP(\mu_t^\rb(\bfx), k_t^\rb(\bfx,\bfx'))$ maintained by each robot $\rb$, specified according to \eqref{eq:compressed_gp_posterior} with parameters $\calP_t^\rb$, $m_t^\rb(\bfp)$, $\zeta_t^\rb(\bfp)$ in \eqref{eq:dynamic-pseudopoint-update1} and \eqref{eq:updaterule} converge to the distribution $\calG\calP(\mu_t^{ctr}(\bfx), k_t^{ctr}(\bfx,\bfx'))$ of the centralized estimator with parameters $\calP_t^{ctr}$, $m_t^{ctr}(\bfp)$, $\zeta_t^{ctr}(\bfp)$ in \eqref{eq:central}, i.e., $|\mu_t^\rb(\bfx) - \mu^{ctr}_t(\bfx)| \to 0$ and $|k_t^\rb(\bfx,\bfx') - k^{ctr}_t(\bfx,\bfx')| \to 0$ almost surely for all $i \in \calV$, $\bfx,\bfx'$.
%maintained by each robot $\rb$ in \eqref{eq:dynamic-pseudopoint-update1} and \eqref{eq:updaterule} converge to the parameters in \eqref{eq:central} of the centralized estimator, i.e., $|m_t^\rb(\bfp) - m^{ctr}_t(\bfp)| \to 0$ and $|\zeta_t^\rb(\bfp) - \zeta^{ctr}_t(\bfp)| \to 0$ for all $i \in \calV$, $\bfp \in \calP_{\#}$.
\end{proposition}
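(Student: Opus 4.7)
My plan is to exploit the linearity of the update rule \eqref{eq:updaterule} once data streaming halts and reduce the convergence claim to a classical consensus argument, applied entry-by-entry to two scalar iterations defined at each pseudo point. I would first show that the pseudo-point sets stabilize in finite time: for $t > T$ we have $\tilde\calP^\rb_t = \emptyset$, so \eqref{eq:dynamic-pseudopoint-update1} reduces to $\calP^\rb_{t+1} = \bigcup_{\rbsec \in \calN_\rb \cup \{\rb\}} \calP^\rbsec_t$, and after at most $\mathrm{diam}(G)$ rounds every robot's set coincides with $\calP^\star := \bigcup_{\tau \le T,\, \rbsec \in \calV} \tilde\calP^\rbsec_\tau = \calP^{ctr}_T$. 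Throughout, I would use the zero-extension convention $m^\rb_t(\bfq) = \zeta^\rb_t(\bfq) = 0$ for $\bfq \notin \calP^\rb_t$, and verify that this convention is preserved by \eqref{eq:updaterule}, so that the whole evolution can be viewed as a linear iteration on a fixed vector space indexed by $\calP^\star$.

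Next I would introduce the auxiliary quantity $n^\rb_t(\bfp) := m^\rb_t(\bfp) \zeta^\rb_t(\bfp)$. The weighted-average definition of $\zeta^\rb_{t+1}$ in \eqref{eq:updaterule} was chosen precisely so that when $\tilde m^\rb_{t+1}(\bfp) = 0$, both scalar sequences satisfy the linear consensus recursions $m^\rb_{t+1}(\bfp) = \sum_\rbsec \weight_{\rb\rbsec} m^\rbsec_t(\bfp)$ and $n^\rb_{t+1}(\bfp) = \sum_\rbsec \weight_{\rb\rbsec} n^\rbsec_t(\bfp)$. Because $\weight$ is primitive, row-stochastic, and nonnegative with stationary left eigenvector $\bfpi$, the standard Perron-Frobenius consensus result $\weight^t \to \mathbf{1}\bfpi^\top$ gives $m^\rb_t(\bfp) \to \sum_\rbsec \pi_\rbsec m^\rbsec_T(\bfp)$ and $n^\rb_t(\bfp) \to \sum_\rbsec \pi_\rbsec n^\rbsec_T(\bfp)$ as $t \to \infty$, with a limit independent of $\rb$.

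The key algebraic step is to identify these consensus limits with the centralized parameters in \eqref{eq:central}. Defining $M_t(\bfp) := \sum_\rb \pi_\rb m^\rb_t(\bfp)$ and using $\bfpi^\top \weight = \bfpi^\top$, a one-line induction shows $M_{t+1}(\bfp) = M_t(\bfp) + \sum_\rb \pi_\rb \tilde m^\rb_{t+1}(\bfp)$, which is exactly the recursion for $m^{ctr}_t(\bfp)$; starting from $M_0 = m^{ctr}_0 = 0$ yields $M_T(\bfp) = m^{ctr}_T(\bfp)$. The analogous computation for $N_t(\bfp) := \sum_\rb \pi_\rb n^\rb_t(\bfp)$ gives $N_T(\bfp) = m^{ctr}_T(\bfp)\,\zeta^{ctr}_T(\bfp)$. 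Dividing then yields $m^\rb_t(\bfp) \to m^{ctr}_T(\bfp)$ and $\zeta^\rb_t(\bfp) = n^\rb_t(\bfp)/m^\rb_t(\bfp) \to \zeta^{ctr}_T(\bfp)$ for every $\bfp \in \calP^\star$ and every $\rb$.

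Finally, convergence of $\mu^\rb_t(\bfx)$ and $k^\rb_t(\bfx,\bfx')$ to $\mu^{ctr}_t(\bfx)$ and $k^{ctr}_t(\bfx,\bfx')$ follows from continuity of the closed-form expressions in \eqref{eq:compressed_gp_posterior} in the statistics $(\calP, \bfm, \bfzeta)$: once $\calP^\rb_t = \calP^{ctr}_T$ (in finite time) the matrix $k_0(\calP^\star, \calP^\star) + \sigma^2 \diag(\bfm)^{-1}$ remains uniformly positive definite along the iterates, so $Z$, and hence the posterior mean and kernel, are smooth functions of $(\bfm^\rb_t, \bfzeta^\rb_t)$ and converge pointwise in $\bfx, \bfx'$. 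The main obstacle I expect is the bookkeeping for the growing pseudo-point sets: I must carefully justify that the zero-extension convention is consistent across robots with disagreeing local indexings, so that the component-wise consensus argument on $\calP^\star$ truly captures the evolution of every $m^\rb_t(\bfp)$ and $\zeta^\rb_t(\bfp)$, and that the degenerate case $m^{ctr}_T(\bfp) = 0$ does not arise for $\bfp \in \calP^\star$ (each such $\bfp$ is observed by at least one robot before time $T$).
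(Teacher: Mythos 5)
Your proof is correct and follows essentially the same route as the paper's: reduce the claim to convergence of the scalar statistics $m_t^\rb(\bfp)$ and $m_t^\rb(\bfp)\zeta_t^\rb(\bfp)$ at each pseudo point, observe that after time $T$ they obey a linear consensus recursion, and invoke $\weight^{t} \to \mathbf{1}\bfpi^\top$ for the primitive row-stochastic matrix $\weight$. The only cosmetic difference is that the paper identifies the limit with the centralized parameters by fully unrolling both recursions (obtaining $m_t^\rb(\bfp) = \sum_{\tau}\sum_{\rbsec}\brl{\weight^{t-\tau}}_{\rb\rbsec}\tilde m_\tau^\rbsec(\bfp)$ directly), whereas you use the conserved quantity $\sum_\rb \pi_\rb m_t^\rb(\bfp)$ together with $\bfpi^\top\weight=\bfpi^\top$; your added care about the finite-time stabilization of the pseudo-point sets and the non-degeneracy $m_T^{ctr}(\bfp)>0$ tightens details the paper leaves implicit.
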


\begin{proof}
Since the distributions $\calG\calP(\mu_t^\rb(\bfx), k_t^\rb(\bfx,\bfx'))$ and $\calG\calP(\mu_t^{ctr}(\bfx), k_t^{ctr}(\bfx,\bfx'))$ are completely determined by the parameters $\calP_t^\rb$, $m_t^\rb(\bfp)$, $\zeta_t^\rb(\bfp)$ and $\calP_t^{ctr}$, $m_t^{ctr}(\bfp)$, $\zeta_t^{ctr}(\bfp)$, respectively, it is sufficient to show that $|m_t^\rb(\bfp) - m^{ctr}_t(\bfp)| \to 0$ and $|\zeta_t^\rb(\bfp) - \zeta^{ctr}_t(\bfp)| \to 0$ for all $i \in \calV$, $\bfp \in \calP_{\#}$. Let $\bfp \in \calP_{\#}$ be arbitrary and note that $m_0^\rb(\bfp) = m^{ctr}_0(\bfp) = 0$ and $\zeta_0^\rb(\bfp) =  \zeta^{ctr}_0(\bfp) = 0$ since no pseudo points have been observed initially. Expand \eqref{eq:central} recursively to obtain $m^{ctr}_t(\bfp)$ and $\zeta^{ctr}_t(\bfp)$ in terms of the observation statistics:
\begin{equation}
\label{eq:central-expansion}
\begin{aligned}
m_{t}^{ctr}(\bfp) &= \sum_{\tau=0}^{t}\sum_{\rb=1}^n \pi_\rb \tilde{m}_\tau^\rb(\bfp),\\
\zeta_{t}^{ctr}(\bfp) &= \frac{1}{m_t^{ctr}(\bfp)}\sum_{\tau=0}^{t}\sum_{\rb=1}^n \pi_\rb \tilde{m}_{\tau}^\rb(\bfp) \tilde{\zeta}_\tau^\rb(\bfp).
\end{aligned}
\end{equation}
Similarly, expand \eqref{eq:updaterule} to obtain $m^{\rb}_t(\bfp)$ and $\zeta^{\rb}_t(\bfp)$ in terms of the observation statistics:
\begin{equation}
\label{eq:echo-k-expansion}
\begin{aligned}
m_{t}^{\rb}(\bfp) &= \sum_{\tau=0}^{t}\sum_{\rbsec=1}^n \brl{\weight^{t-\tau}}_{\rb\rbsec} \tilde{m}_\tau^\rbsec(\bfp),\\
\zeta_{t}^{\rb}(\bfp) &= \frac{1}{m_t^{\rb}(\bfp)}\sum_{\tau=0}^{t}\sum_{\rbsec=1}^n \brl{\weight^{t-\tau}}_{\rb\rbsec}\tilde{m}_{\tau}^\rbsec(\bfp) \tilde{\zeta}_\tau^\rbsec(\bfp),
\end{aligned}
\end{equation}
where the weights $\brl{\weight^{t-\tau}}_{\rb\rbsec}$ appear since the data $\tilde{m}_\tau^\rbsec(\bfp)$ and $\tilde{\zeta}_\tau^\rbsec(\bfp)$ propagate through the network with weight matrix $\weight$ and reach robot $\rb$ via all paths of length $t-\tau$. Alternatively, \eqref{eq:echo-k-expansion} can be viewed as the solution of the discrete-time linear time-invariant system in \eqref{eq:updaterule} with transition matrix $\Phi(t,\tau) = \weight^{t-\tau}$, $t \geq \tau$. Since the data collection stops at some finite time $T$, $\tilde{m}_t^\rb(\bfp) = \tilde{\zeta}_t^\rb(\bfp) = 0$ for all $t > T$, $\rb \in \calV$. The convergence of \eqref{eq:echo-k-expansion} to \eqref{eq:central-expansion} is concluded from the fact that $\brl{\weight^{t}}_{\rb\rbsec} \to \pi_\rbsec > 0$ since $\weight$ is a row-stochastic nonnegative and primitive matrix.
\end{proof}

%\textcolor{blue}{We need to contrast the theoretical guarantee presented in this subsection with related results in the literature. Needs to be discussed both at beginning of subsection and after the proof. Refer to \cite[Thm.~3]{rad2010distributed}.}

Prop.~\ref{th:Net1} is a similar result to \cite[Thm.~3]{rad2010distributed}, where it is shown that, if the weight matrix $\weight$ is doubly stochastic, a distributed parameter estimator is as efficient as any centralized parameter estimator. However, Prop.~\ref{th:Net1} applies to distributed function estimation using an incrementally growing set of parameters and re-weights the observations used by the centralized estimator via the stationary distribution $\bfpi$ of $\weight$ to ensure convergence even when $\weight$ is not doubly stochastic.

\subsection{Echoless Distributed GP Regression}

The distributed pseudo point update we derived in \eqref{eq:updaterule} is not efficient for two reasons. First, convergence to the central GP estimate is guaranteed only in the limit, as $t \to \infty$ (Prop.~\ref{th:Net1}). Second, every time robots exchange messages, all information they have must be sent. This is inefficient as may be seen in the proof of Prop.~\ref{th:Net1}, the observations are exchanged an infinite number of times (echos in the network). To address these limitations, we label the communication messages with the list of robots that have already received them and show that convergence to the centralized estimate can, in fact, be achieved in finite time.

\begin{figure}[t]
\includegraphics[width=\columnwidth]{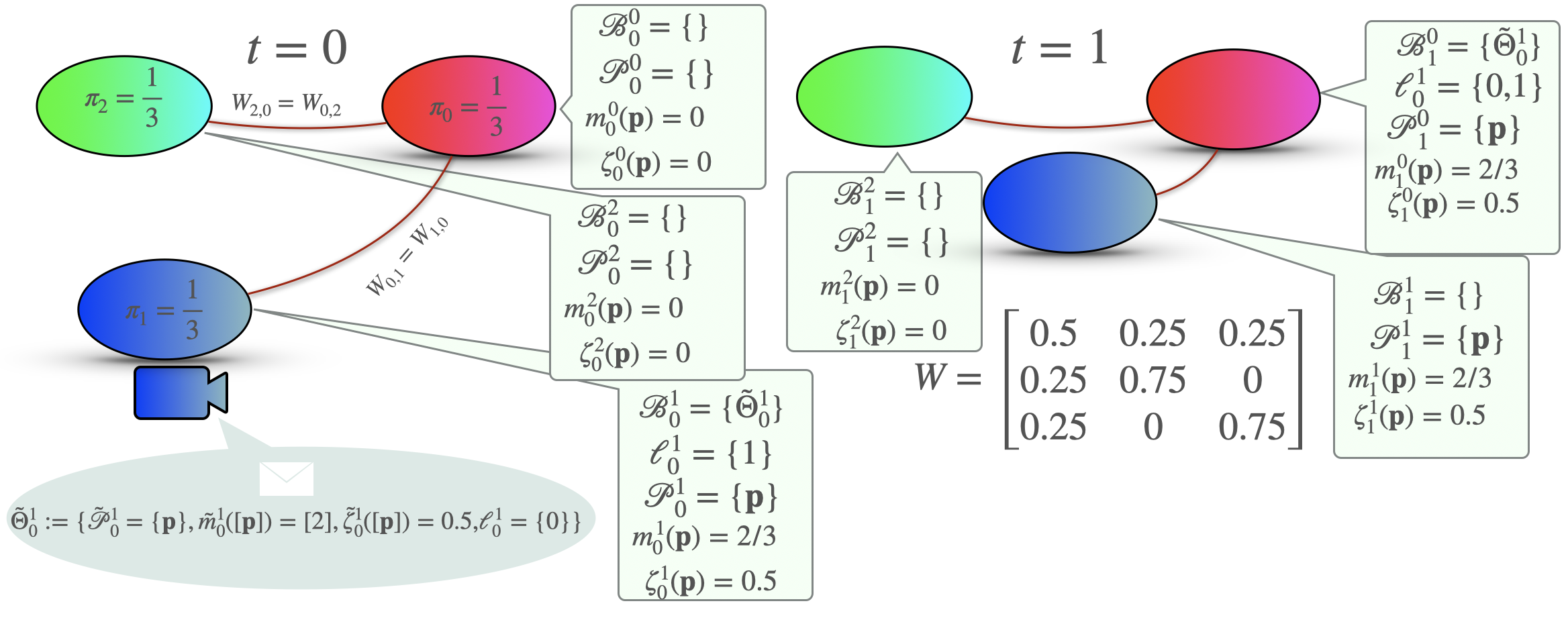}
\caption{Visualization of the distributed GP parameter update in \eqref{eq:final_update_rule} in a network with three nodes (red, green, blue). The node parameters are shown at time $t=0$ and $t=1$. Only a single observation with one pseudo point is received by node $1$ (blue) at time $t=0$ and is propagated to node $0$ (red) at time $t=1$.}
\label{fig:batchUpdateVis}
\end{figure}

% \green{($\calP_t^\rb$ should be $\P[\rb]{t}$, right?)}
Let $\tilde{\Theta}_t^\rb := \{ \tilde{\calP}_t^\rb, \tilde{m}_t^\rb(\tilde{\calP}_t^\rb), \tilde{\zeta}_t^\rb(\tilde{\calP}_t^\rb), \ell_t^\rb \}$ define a mini-batch of observations for robot $\rb$. At time $t$, $\tilde{\Theta}_t^\rb$ contains the new observations $\tilde{\calP}_t^\rb$, $\tilde{m}_t^\rb(\tilde{\calP}_t^\rb)$, $\tilde{\zeta}_t^\rb(\tilde{\calP}_t^\rb)$ of robot $\rb$ as well as a list of robots $\ell_t^\rb$ that have already received this mini-batch. The list $\ell_t^\rb$ is initialized by $\crl{\rb}$. Additionally, for each robot $\rb$, we define a set of mini-batches $\calB_{t+1}^\rb$ that the robot should use at time $t$ to update its GP parameters. The mini-batch set $\calB_{t}^\rb$ from the previous time step contains old mini-batches that robot $\rb$ should transmit to its neighbors. Inspired by the similarity of \eqref{eq:central-expansion} and \eqref{eq:echo-k-expansion}, we propose a distributed protocol which ensures:
\begin{itemize}[leftmargin=*]
  \item each mini-batch visits each robot once rather than echoing in the network, relying on $\ell_t^\rb$ to keep track of visited robots,
  \item convergence to the centralized GP distribution is achieved in finite and minimum time by picking the stationary distribution $\bfpi$ of $\weight$ as the coefficient in \eqref{eq:echo-k-expansion}.
\end{itemize}
The distributed update of the parameters of robot $\rb$ at time step $t$ is:
\begin{equation}
\label{eq:final_update_rule}
\begin{aligned}
\calB_{t+1}^\rb &= \bigcup_{\tilde{\Theta}_\tau^\rbsec \in\calB_t^r, r\in \mathcal{N}_\rb, \rb \notin \ell_\tau^\rbsec} \tilde{\Theta}_\tau^\rbsec \cup \tilde{\Theta}_{t+1}^\rb\\
\ell_\tau^\rbsec &= \ell_\tau^\rbsec \cup \{\rb\} \text{ for all } \tilde{\Theta}_\tau^\rbsec \in\calB_{t+1}^\rb\\
\calP_{t+1}^\rb &= \bigcup_{\tilde{\Theta}_\tau^\rbsec \in \calB_{t+1}^\rb} \calP_{\tau}^\rbsec \cup\calP_t^\rb\\
m_{t+1}^{\rb}(\bfp) &= m^\rb_{t}(\bfp) + \sum_{\tilde{\Theta}_\tau^\rbsec\in \calB_{t+1}^\rb} \pi_{\rbsec} \tilde{m}^\rbsec_\tau(\bfp)\\
\zeta^\rb_{t+1}(\bfp) &= \frac{m^\rb_{t}(\bfp) \zeta^\rb_{t}(\bfp) + \sum_{\tilde{\Theta}_\tau^\rbsec \in \calB_{t+1}^\rb} \pi_{\rbsec} \tilde{m}^\rbsec_\tau(\bfp) \tilde{\zeta}^\rbsec_{\tau}(\bfp)}{m^\rb_{t+1}(\bfp)}
\end{aligned}
\end{equation}
%
%\textcolor{blue}{we need a comment contrasting this update with the one in the previous subsection, and where specifically the savings comes from. Does this savings yield some error term that arises in the analysis in the next corollary? That is also ambiguous.}

We prove below that this distributed update rule converges in finite time to the centralized GP distribution. Compared with \eqref{eq:updaterule}, the distributed update in \eqref{eq:final_update_rule} is able to achieve finite-time convergence because it uses the weights $\bfpi$ from the stationary distribution of $\weight$ right away, instead of processing the same information an infinite number of times to determine the final weights. Moreover, \eqref{eq:updaterule} stipulates that two robots should exchange all of their information at each time step, which is very inefficient in practice. The mini-batch messages in \eqref{eq:final_update_rule} allow the robots to exchange only the latest information and guarantee that each observation reaches each agent once. A visualization of this method is shown in Fig.~\ref{fig:batchUpdateVis}. 

%in both the error vanishing at the end, but here we do not need to wait for infinity to be sure about the consensus. Rather, we know at which time slot we should wait to be sure about the convergence. The key intuition is that we do not wait for the network to determine the final weight of each observation by processing the same information for infinite times, rather we incorporate the observation with the final weight at first. Additionally, in \eqref{eq:central} each two agent have to transfer all of their information at each time slot, which is not feasible in practice, while here they just exchange their new information. A visualization of this method is shown in Fig.~\ref{fig:batchUpdateVis}. 

\begin{proposition}
\label{prop:cnvrgn}
Let $\tilde{\calD}_{t}^\rb = (\tilde{\calX}_t^\rb, \tilde{\bfy}_t^\rb)$ be the data received by robot $\rb$ at time $t$, associated with pseudo points $\tilde{\calP}_t^\rb \subset \calP_{\#}$ and number of observations $\tilde{m}_t^\rb(\bfp)$ and average observation $\tilde{\zeta}_t^\rb(\bfp)$ for $\bfp \in \calP_{\#}$. If the data streaming stops at some time $T < \infty$, then at time $t = T + n - 1$, the distributions $\calG\calP(\mu_t^\rb(\bfx), k_t^\rb(\bfx,\bfx'))$ maintained by each robot $\rb$, specified according to \eqref{eq:compressed_gp_posterior} with parameters in \eqref{eq:final_update_rule} are exactly equal to the distribution $\calG\calP(\mu_t^{ctr}(\bfx), k_t^{ctr}(\bfx,\bfx'))$ of the centralized estimator with parameters in \eqref{eq:central}, i.e., $\mu_t^\rb(\bfx) = \mu^{ctr}_t(\bfx)$ and $k_t^\rb(\bfx,\bfx') = k^{ctr}_t(\bfx,\bfx')$ almost surely for all $i \in \calV$, $\bfx,\bfx'$.
\end{proposition}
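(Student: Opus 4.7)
The strategy is to prove \emph{equality}, not merely convergence, of the distributed and centralized parameters at the finite time $t = T + n - 1$. Since the GP distributions in \eqref{eq:compressed_gp_posterior} are deterministic functions of the triples $(\calP,\, m(\cdot),\, \zeta(\cdot))$, it suffices to show $m_t^\rb(\bfp) = m_t^{ctr}(\bfp)$ and $\zeta_t^\rb(\bfp) = \zeta_t^{ctr}(\bfp)$ for all $\rb \in \calV$ and all $\bfp \in \allPs$ at that time. The natural accumulation variable is the unnormalized statistic $S_t^\rb(\bfp) := m_t^\rb(\bfp)\zeta_t^\rb(\bfp)$, which cleans up the denominator in \eqref{eq:final_update_rule}.

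The key intermediate result I would prove is a \emph{propagation lemma}: for every mini-batch $\tilde{\Theta}_\tau^\rbsec$ generated at robot $\rbsec$ at time $\tau \le T$ and every robot $\rb$, the mini-batch $\tilde{\Theta}_\tau^\rbsec$ appears in $\calB_{\tau+d}^\rb$ exactly once, where $d := d_G(\rbsec,\rb)$ is the graph distance. The proof is by induction on $d$. The base case $d=0$ holds by the construction $\tilde{\Theta}_\tau^\rbsec \in \calB_\tau^\rbsec$ in \eqref{eq:final_update_rule}. For the inductive step, if the mini-batch reached some robot $r$ at time $\tau+d-1$, the rule $\calB_{t+1}^\rb = \bigcup_{r \in \mathcal{N}_\rb,\, \rb \notin \ell_\tau^\rbsec} \tilde{\Theta}_\tau^\rbsec \cup \cdots$ guarantees it is forwarded to every neighbor of $r$ that has not yet received it. The label update $\ell_\tau^\rbsec \leftarrow \ell_\tau^\rbsec \cup \{\rb\}$ at reception, combined with the \emph{set-union} (not multiset) semantics of $\calB_{t+1}^\rb$, ensures that even when multiple shortest paths exist, the mini-batch contributes to $\calB_{\tau+d}^\rb$ only once and is never re-received.

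Since $G$ is connected on $n$ vertices, $d_G(\rbsec,\rb) \le n-1$ for every pair. Hence at time $t = T + n - 1$, every mini-batch $\tilde{\Theta}_\tau^\rbsec$ with $\tau \le T$ and $\rbsec \in \calV$ has been incorporated exactly once into robot $\rb$'s accumulated sum. Unrolling the recursion in \eqref{eq:final_update_rule} then gives
\begin{align*}
m_t^\rb(\bfp) &= \sum_{\tau=0}^{T}\sum_{\rbsec=1}^{n} \pi_\rbsec\, \tilde{m}_\tau^\rbsec(\bfp),\\
S_t^\rb(\bfp) &= \sum_{\tau=0}^{T}\sum_{\rbsec=1}^{n} \pi_\rbsec\, \tilde{m}_\tau^\rbsec(\bfp)\,\tilde{\zeta}_\tau^\rbsec(\bfp),
\end{align*}
which coincides termwise with the expansion of the centralized recursion \eqref{eq:central} (as already computed in \eqref{eq:central-expansion}). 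Equality of $\mu_t^\rb$ with $\mu_t^{ctr}$ and of $k_t^\rb$ with $k_t^{ctr}$ then follows from plugging these identical parameters into \eqref{eq:compressed_gp_posterior}, and the pseudo-point sets agree because $\calP_{t+1}^\rb$ in \eqref{eq:final_update_rule} collects the $\tilde{\calP}_\tau^\rbsec$ of every delivered mini-batch, matching $\calP_t^{ctr} = \bigcup_{\tau,\rbsec} \tilde{\calP}_\tau^\rbsec$.

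The main obstacle is the bookkeeping in the propagation lemma. Specifically, I must rule out that a mini-batch originating at $\rbsec$ reaches $\rb$ along two distinct shortest paths and is thereby double-counted. The \emph{joint} role of the visited-list filter $\rb \notin \ell_\tau^\rbsec$ (which kills any non-shortest continuations) and the set-union $\bigcup$ (which absorbs simultaneous arrivals from distinct neighbors) is what delivers the ``exactly once'' count; the technical care sits entirely in inducting on $d$ while tracking these two mechanisms in parallel. Once the lemma is in hand, the remainder is algebraic bookkeeping.
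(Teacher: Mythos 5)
Your proposal is correct and follows essentially the same route as the paper: unroll the recursion in \eqref{eq:final_update_rule}, argue that each mini-batch (pre-weighted by $\pi_\rbsec$) is incorporated by every robot exactly once within graph-diameter many steps, and match the resulting sums to the centralized expansion \eqref{eq:central-expansion}. Your explicit propagation lemma by induction on $d_G(\rbsec,\rb)$, with the visited-list and set-union deduplication spelled out, is just a more careful rendering of what the paper encodes via the reachability indicator $\sign\bigl(\brl{\weight^{t-\tau}}_{\rb\rbsec}\bigr)$ and the connectivity fact $\brl{\weight^{n-1}}_{\rb\rbsec} > 0$.
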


\begin{proof}
As in the proof of Prop.~\ref{th:Net1}, it is sufficient to show that at $t = T + n - 1$, $m_t^\rb(\bfp) = m^{ctr}_t(\bfp)$ and $\zeta_t^\rb(\bfp) = \zeta^{ctr}_t(\bfp)$ for all $i \in \calV$, $\bfp \in \calP_{\#}$. As before, we express $m_t^\rb(\bfp)$ and $\zeta_t^\rb(\bfp)$ in terms of $\tilde{m}_\tau^\rbsec(\bfp)$ and $\tilde{\zeta}_\tau^\rbsec(\bfp)$ for arbitrary $\bfp \in \calP_{\#}$ and $\tau \leq t$. The key is to realize whether mini-batch $\tilde{\Theta}_\tau^\rbsec$ is received by robot $\rb$. Since the mini-batch exchanges are happening based on the communication graph structure, the elements of $\weight^{t-\tau}$ determine which robots have received a mini-batch released at time $\tau$ by time $t$. Precisely, if $\brl{\weight^{t-\tau}}_{\rb\rbsec} > 0$, then robot $\rb$ has received mini-batch $\tilde{\Theta}_\tau^\rbsec$ by time $t$ and otherwise, if $\brl{\weight^{t-\tau}}_{\rb\rbsec} = 0$, it has not received it. Let $\sign(x)$ denote the sign of a scalar $x$ with $\sign(0) = 0$. Expanding \eqref{eq:final_update_rule} recursively leads to:
\begin{align}
\label{eq:k-expansion}
m_t^\rb(\bfp) &= \sum_{\tau=0}^{t}\sum_{\rb=1}^n \sign(\brl{\weight^{t-\tau}}_{\rb\rbsec}) \pi_\rbsec \tilde{m}_\tau^\rbsec(\bfp)\\
\zeta^\rb_t(\bfp) &= \frac{1}{m^\rb_{t}(\bfp)}\sum_{\tau=0}^{t}\sum_{\rbsec=1}^n \sign(\brl{\weight^{t-\tau}}_{\rb\rbsec})  \pi_\rbsec \tilde{m}_{\tau}^\rbsec(\bfp)\tilde{\zeta}^\rbsec_{\tau}(\bfp)\notag
\end{align}
Since the data collection stops at some finite time $T$, $\tilde{m}_\tau^\rb(\bfp) = \tilde{\zeta}_\tau^\rb(\bfp) = 0$ for all $\tau > T$, $\rb \in \calV$. Comparing \eqref{eq:echo-k-expansion} and \eqref{eq:central-expansion}, equality of $\mu_t^\rb(\bfx)$ and $ \mu^{ctr}_t(\bfx)$ and $k_t^\rb(\bfx,\bfx')$ and $k^{ctr}_t(\bfx,\bfx')$ at $t = T + n - 1$ is concluded by the fact that $\brl{\weight^{n-1}}_{\rb\rbsec} > 0$ because the network is connected.
\end{proof}

\section{Distributed Metric-Semantic Mapping}
\label{sec:DistributedMSM}

We apply the distributed GP regression technique developed in Sec.~\ref{sec:DistributedGP} to the multi-robot metric-semantic TSDF mapping problem. Each robot $\rb$ receives local distance and class observations $\crl{\lambda^\rb_{t+1,\msr},c^\rb_{t+1,\msr}}$, which are transformed using the procedure in Sec.~\ref{sec:training_set} into training data sets $\D[\rb]{t+1,\cls} = \prl{\X[\rb]{t+1,\cls}, \Y[\rb]{t+1,\cls}}$ for estimating the TSDFs $\{f_\cls(\bfx)\}$ of the different object classes. Each dataset $\D[\rb]{t+1,\cls}$ is compressed into a set of pseudo points $\P[\rb]{t+1,\cls}$ with associated number of observations $\tilde{m}_{t+1,\cls}^\rb(\bfp)$ and average observation $\tilde{\zeta}_{t+1,\cls}^\rb(\bfp)$ for $\bfp \in \P[\rb]{t+1,\cls}$. Each robot maintains a separate GP $\mathcal{GP}(\mu^\rb_{t,\cls}(\bbx), k^\rb_{t,\cls}(\bbx, \bbx')$ for each class TSDF $f_\cls(\bfx)$. In the multi-robot case, the GP distributions of robot $\rb$ are updated simultaneously and independently for all classes using the new class-specific observation data $\P[\rb]{t+1,\cls}$, $\tilde{m}_{t+1,\cls}^\rb(\P[\rb]{t+1,\cls})$, $\tilde{\zeta}_{t+1,\cls}^\rb(\P[\rb]{t+1,\cls})$ as well as information from the neighboring robots in the form of class-specific mini-batches $\calB_{t+1,\cls}^\rb$ as described in \eqref{eq:final_update_rule}. To make the GP models scalable to large environments, we organize the pseudo points $\calP_{t,\cls}^\rb$ for each robot $\rb$ and class $\cls$ in an octree data structure, as in Sec.~\ref{sec:octree}, and predict the class of a query point via the method in Sec.~\ref{sec:class}. Prop.~\ref{prop:cnvrgn} guarantees that the local TSDF GPs at each robot converge to a common GP, which is equivalent to the one that would be obtained by centralized sparse GP regression. Moreover, when the streaming of new observations stops, the convergence happens in finite time as soon as each observation is received by each robot exactly once. In other words, there is no unnecessary communication in the form of information exchange echo in the network.

\section{Evaluation using 2-D Simulated Data}
\label{sec:evaluation2d}

In this section, we evaluate our semantic TSDF mapping approach in 2-D simulated environments. We first demonstrate the qualitative and quantitative performance of the single-robot approach of Sec.~\ref{sec:algorithm}. Then, we report results for the multi-robot approach of Sec.~\ref{sec:DistributedMSM} using three robots to map the same environment collaboratively. In all experiments, we employ a sparse Mat{\'e}rn kernel ($\nu = 3/2$) \cite{kim2014recursive}. We choose the workspace discretization $\allPs$ as a grid with resolution $voxel\ size$. Given a training point $\hat{\bfx}$ in \eqref{Gtc}, we choose a square region of pseudo points from $\allPs$ around $\hat{\bfx}$. These pseudo points are used to construct the training set in \eqref{defD} around the sensor hit points $\hat{\bfx}$, instead of a circle with radius $\epsilon$. We call the number of pseudo points on the edge of the square region $frame\ size$, and choose it so that $(frame\ size-1) \times voxel\ size \geq 2 \epsilon$. 

%All points from $\allPs$ that lie in the cubic frame are chosen as pseudo points associated with $\hat{\bbx}$.

%Cow and Lady data set~\cite{oleynikova2017voxblox} in order to compare our method with Fiesta~\cite{han2019fiesta}, and SceneNN~\cite{scenenn-3dv16} in order to show a qualitative evaluation on its performance on providing 3-D reconstruction with classes. In all of the experiments we use a matern covariance function as the kernel.

%============================================================%
\subsection{Single-Robot 2-D Evaluation}
\label{sec:2d-single}
We generate random 2-D environments (see Fig.~\ref{sdff}) and robot trajectories by sampling poses sequentially and keeping the ones that are in free space. Observations are obtained along the robot trajectories using a simulated distance-class sensor. We apply our incremental sparse GP regression method to obtain a probabilistic TSDF map and compare it with the ground truth TSDF.

\subsubsection{TSDF Accuracy}
One sample environment from our 2-D simulation with the ground truth and reconstructed TSDF and boundaries is shown in Fig.~\ref{sdff}. Our method provides continuous probabilistic TSDF estimates. The choice of $frame\ size$ is very dependent on the desired truncation value for the SDF reconstruction. Larger $frame\ size$ allows estimating larger truncation values but incurs additional computation cost. The precision and resilience to measurement noise of our method are evaluated in Fig.~\ref{var_prob}. The test points are chosen from a grid with resolution $0.5 \times voxel\ size$ within the truncation distance from the ground-truth object boundaries.

%To evaluate our method's precision in estimating TSDF and its resistance against noise, all test points are picked within truncation value, out of a grid with its resolution to be half of the $voxel\ size$. The result is shown in Fig.~\ref{var_prob}. 

%In the Figure~\ref{frame1} we see the comparison of constructed boundaries between frame size $10,3,2$. We see smaller frame size constructs sharp edges more accurate, while using very small frame size may miss some parts of the boundaries, so the frame size should be at least 3.

\begin{figure}[t]
\centering
\includegraphics[width=\linewidth]{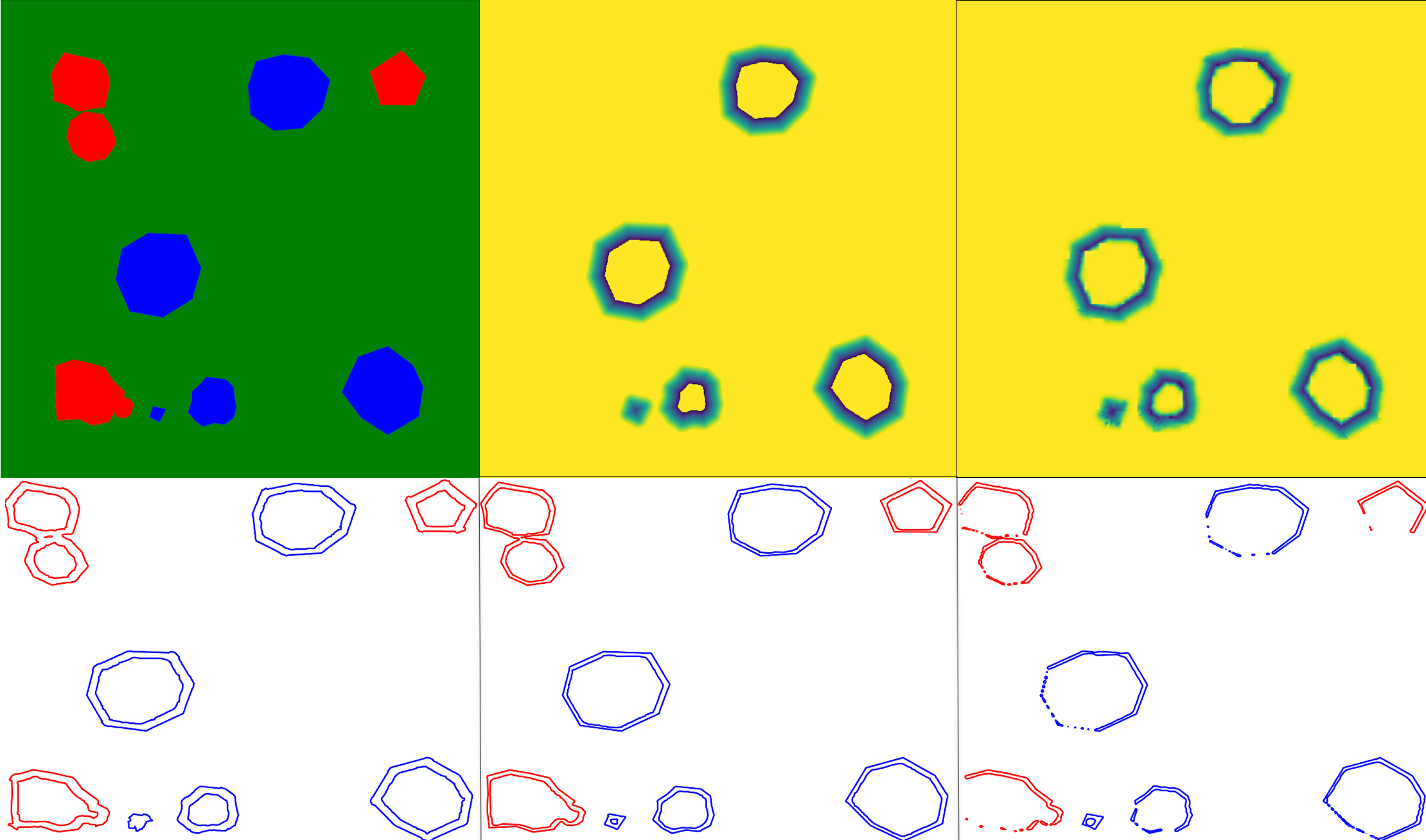}
\caption{Ground-truth 2-D simulated environment (top left) with two object classes (red, blue), ground-truth TSDF for the blue class (top middle), and reconstructed TSDF with $frame\ size=10$ (top right). The reconstructed TSDF boundaries are shown for three different $frame\ size$ parameters on the bottom row: $10$ (bottom left), $3$ (bottom middle), $2$ (bottom right). Sharp edges are captured better with $frame\ size$ $3$ vs. $10$ but using $frame\ size$ less that $3$ caused missing parts at the boundaries.}
\label{sdff}
\end{figure}

%\caption{On the first row, from left ground truth environment, TSDF ground truth of the blue class, and constructed truncated signed distance field with frame size $10$. On the second row we see constructed boundaries for different frame size. From left the frame size is $10, 3, 2$. For the comparison of effect of frame size on the boundary reconstruction, as we see the sharp edges are better in frame size 3 reconstruction vs. frame size 10 reconstruction, but using frame size less that 3 could lead to miss some parts of the boundaries' reconstruction.}
%
%\begin{minipage}[b]{0.8\linewidth}
%\input{tex/evaluationFigures/fig1.tex}
%\end{minipage}%
%
%

%------------------------------------------------------Next subsubsection--------------------------------------------------------------
\subsubsection{Classification Accuracy}

We evaluate the average precision and recall of our posterior classification over 50 random 2-D maps. In each map, we pick uniformly distributed random points along the obstacle boundaries, and calculate the SDF error and the class-detection accuracy. Since the values are symmetric for binary classification, we present the average precision and recall over the two classes in Fig.~\ref{var_prob}. The figure shows that the misclassification rate, precision, recall, and SDF error are not very sensitive to class error probability. The misclassification rate is the ratio of all to the misclassified test points. The SDF error is the average absolute value difference between the estimated and ground-truth SDF values. We report normalized SDF error: $\frac{\text{SDF error}}{voxel\ size}$. Fig.~\ref{para} investigates the effect of the parameters of our algorithm on misclassification rate, normalized SDF error, False Discovery Rate ($\text{FDR} := 1-\text{Precision}$), and False Negative Rate ($\text{FNR} := 1-\text{Recall}$). We see that the misclassification rate, FNR, and FDR respond similarly to parameter variations.

Increasing the maximum number of pseudo points per octree support region, $max(N)$, improves the (normalized) SDF error. The improvement is significant at first but after a certain octree support region size, even exponential increases in $max(N)$ do not significantly affect the SDF error. The classification measures improve slightly with an initial increase in $max(N)$. Increasing $\delta$ has a similar effect on all the performance measures. Increasing the GP noise variance $\sigma^2$ at first improves all the measures but then it worsens them. An incorrect choice of $\sigma^2$ is critical to the method, but affects the misclassification rate smoothly so, it must be in the right region, but as long as the value of $\sigma^2$ is in the right ballpark, choosing the optimal $\sigma^2$ is not critical.

\begin{figure}[t]
%!TEX root = ../../main.tex
%%%%%%%%%%%%%%%%%%%%%%%%%%%%%%%
%%% SECTION : Problem   %%%
%%%%%%%%%%%%%%%%%%%%%%%%%%%%%%%
\pgfplotsset{width=0.78\linewidth, compat=default}
\begin{tikzpicture}
%
%
%
%\pgfplotsset{set layers}

\begin{axis}[
%of viewed points out of grid with voxel size = 0.05, voxel size of feature points = 0.1
    legend style={overlay, at={(1.3, 0.05)},anchor = center, legend columns=1, font=\tiny, nodes={scale=0.8, transform shape}},%(1.26,0.5)
    %legend pos=outer north east,
    title={},
    xlabel={Class error probability},
    %axis y line=right,
    xmin=0, xmax=0.10,
    ymin=0, ymax=1,
    xtick={0,0.02,0.04,0.06,0.08,0.10},
    ytick={},
    ylabel style={overlay,at={(0.91,0.95)}, anchor=center, font=\small},
    %ylabel=ratio,
    %legend pos=north west,
    %ymajorgrids=true,
    %grid style=dashed,
     xticklabel style={
        /pgf/number format/fixed,
        /pgf/number format/precision=2
     },
    scaled x ticks=false
]
\addplot[
    color = gray,
    mark=square,
    ]
    coordinates {
    (0,100)
    };
\addplot[
    color=gray,
    mark=triangle,
    ]
    coordinates {
    (0,100)
    };
\addplot[
    color=gray,
    mark=asterisk,
    ]
    coordinates {
    (0,100)
    };
    \addplot[
    color=gray,
    mark=diamond,
    ]
    coordinates {
    (0,100)
    };
    \addplot[
    color=gray,
    mark=star,
    ]
    coordinates {
    (0,100)
    };
    \addplot[
    color=gray,
    mark=pentagon*,
    ]
    coordinates {
    (0,100)
    };
        \addplot[
    color=cyan,
    mark=*,
    ]
    coordinates {
    (0,100)
    };
        \addplot[
    color=blue,
    mark=*,
    ]
    coordinates {
    (0,100)
    };
        \addplot[
    color=red,
    mark=*,
    ]
    coordinates {
    (0,100)
    };
        \addplot[
    color=green,
    mark=*,
    ]
    coordinates {
    (0,100)
    };
 \legend{$var=0$, $var=0.2$, $var=0.4$, $var=0.6$, $var=0.8$, $var=1$, Misclassification Rate,Precision,Recall, Normalized SDF Error}
 %
%
%
 %Accuracy
\addplot[
    color=cyan,
    mark=square,
    ]
    coordinates {
    (0,0.00290444)(0.02,0.00296533)(0.04,0.00326689)(0.06,0.00523603)(0.08,0.00857851)(0.1,0.01675957)
    };
\addplot[
    color=cyan,
    mark=triangle,
    ]
    coordinates {
    (0,0.02109129)(0.02,0.02245829)(0.04,0.02771147)(0.06,0.03550442)(0.08,0.05106535)(0.1,0.08478497)
    };
\addplot[
    color=cyan,
    mark=asterisk,
    ]
    coordinates {
    (0,0.09092003)(0.02,0.09619021)(0.04,0.11180337)(0.06,0.13766458)(0.08,0.17278183)(0.1,0.22254975)
    };
    \addplot[
    color=cyan,
    mark=diamond,
    ]
    coordinates {
    (0,0.23057268)(0.02,0.23362044)(0.04,0.24601887)(0.06,0.27442385)(0.08,0.30439269)(0.1,0.34407737)
    };
    \addplot[
    color=cyan,
    mark=star,
    ]
    coordinates {
    (0,0.33956084)(0.02,0.35032185)(0.04,0.36125512)(0.06,0.36802702)(0.08,0.39191881)(0.1,0.41534371)
    };
    \addplot[
    color=cyan,
    mark=pentagon*,
    ]
    coordinates {
    (0,0.41058296)(0.02,0.41206498)(0.04,0.42558612)(0.06,0.43216579)(0.08,0.44372726)(0.1,0.45405913)
    };
%
%
%
    %--------------------------------------- Precision
    %(0,)(0.02,)(0.04,)(0.06,)(0.08,)(0.1,)
\addplot[
    color=blue,
    mark=square,
    ]
    coordinates {
    (0,0.99681356)(0.02,0.99664179)(0.04,0.99624657)(0.06,0.99442366)(0.08,0.99122988)(0.1,0.98330307)
    };
\addplot[
    color=blue,
    mark=triangle,
    ]
    coordinates {
   (0,0.97803817)(0.02,0.97642469)(0.04,0.97233134)(0.06,0.96440436)(0.08,0.9494801)(0.1,0.91380338)
    };
\addplot[
    color=blue,
    mark=asterisk,
    ]
    coordinates {
    (0,0.90739882)(0.02,0.90260592)(0.04,0.88642562)(0.06,0.86081058)(0.08,0.82666085)(0.1,0.77829603)
    };
    \addplot[
    color=blue,
    mark=diamond,
    ]
    coordinates {
    (0,0.76871456)(0.02,0.76671226)(0.04,0.75298711)(0.06,0.72660024)(0.08,0.69533346)(0.1,0.65661518)
    };
    \addplot[
    color=blue,
    mark=star,
    ]
    coordinates {
    (0,0.66049661)(0.02,0.65092669)(0.04,0.6390206)(0.06,0.63648041)(0.08,0.60835411)(0.1,0.5856967)
    };
    \addplot[
    color=blue,
    mark=pentagon*,
    ]
    coordinates {
    (0,0.58966153)(0.02,0.58881207)(0.04,0.57666344)(0.06,0.56873154)(0.08,0.55626464)(0.1,0.54763366)
    };
     %--------------------------------------- Recall
    %(0,)(0.02,)(0.04,)(0.06,)(0.08,)(0.1,)
\addplot[
    color=red,
    mark=square,
    ]
    coordinates {
    (0,0.99665986)(0.02,0.99668263)(0.04,0.99641182)(0.06,0.99283705)(0.08,0.98769463)(0.1,0.9756313)
    };
\addplot[
    color=red,
    mark=triangle,
    ]
    coordinates {
    (0,0.97071341)(0.02,0.96948961)(0.04,0.96150189)(0.06,0.95039902)(0.08,0.92967042)(0.1,0.88972053)
    };
\addplot[
    color=red,
    mark=asterisk,
    ]
    coordinates {
    (0,0.88350213)(0.02,0.87739239)(0.04,0.8591831)(0.06,0.83117263)(0.08,0.7948299)(0.1,0.74620088)
    };
    \addplot[
    color=red,
    mark=diamond,
    ]
    coordinates {
    (0,0.73817881)(0.02,0.73499649)(0.04,0.72277245)(0.06,0.69743774)(0.08,0.66987428)(0.1,0.63426027)
    };
    \addplot[
    color=red,
    mark=star,
    ]
    coordinates {
    (0,0.63790586)(0.02,0.62902445)(0.04,0.61914144)(0.06,0.61479225)(0.08,0.59200008)(0.1,0.57212934)
    };
    \addplot[
    color=red,
    mark=pentagon*,
    ]
    coordinates {
    (0,0.57557268)(0.02,0.57526571)(0.04,0.56360466)(0.06,0.55762759)(0.08,0.54910936)(0.1,0.53974245)
    };

\addplot[
    color = green,
    mark = square,
    ]
    coordinates {
    (0,0.1169109)(0.02,0.118513)(0.04,0.1216905)(0.06,0.1315331)(0.08,0.1435017)(0.1,0.1679731)
    };
\addplot[
    color=green,
    mark=triangle,
    ]
    coordinates {
    (0,0.2372624)(0.02,0.24156)(0.04,0.2462737)(0.06,0.2533404)(0.08,0.2584248)(0.1,0.2669184)
    };
\addplot[
    color=green,
    mark=asterisk,
    ]
    coordinates {
    (0,0.2859333)(0.02,0.2845277)(0.04,0.2850748)(0.06,0.2846976)(0.08,0.2922149)(0.1,0.2942701)
    };
    \addplot[
    color=green,
    mark=diamond,
    ]
    coordinates {
    (0,0.3019674)(0.02,0.300863)(0.04,0.3021788)(0.06,0.2992694)(0.08,0.2999674)(0.1,0.2992099)
    };
    \addplot[
    color = green,
    mark = star,
    ]
    coordinates {
    (0,0.311746)(0.02,0.3143833)(0.04,0.3103833)(0.06,0.311825)(0.08,0.3108286)(0.1,0.3092891)
    };
    \addplot[
    color=green,
    mark=pentagon*,
    ]
    coordinates {
    (0,0.3133884)(0.02,0.3158663)(0.04,0.3140884)(0.06,0.3178508)(0.08,0.3170073)(0.1,0.317577)
    };
    \end{axis}
\end{tikzpicture}
%
%\pgfplotsset{width=0.5\linewidth,compat=default}
%!TEX root = ../../main.tex
%%%%%%%%%%%%%%%%%%%%%%%%%%%%%%%
%%% SECTION : Problem   %%%
%%%%%%%%%%%%%%%%%%%%%%%%%%%%%%%
%\begin{figure}[htbp]
%\centering
%
%
%
\begin{tikzpicture}[overlay, yshift=0.36\linewidth, xshift=-0.02\linewidth, scale=0.5]
\begin{axis}[
%
%
%
%of viewed points out of grid with voxel size = 0.05, voxel size of feature points = 0.1
    title={SDF error},
    xlabel={Noise variance},
    xmin=0, xmax=1,
    ymin=0, ymax=0.07,
    xtick={0,0.2,0.4,0.6,0.8,1},
    ytick={0,0.01,0.02,0.03,0.04,0.05,0.06},
    legend pos=north west,
    ymajorgrids=true,
    grid style=dashed,
]
\addplot[
    color=red,
    mark=dot,
    ]
    coordinates {
    (0,0.00284038)(0.2,0.01117536)(0.4,0.0206323)(0.6,0.03144756)(0.8,0.04307106)(1,0.05407404)
    };
    %
%
%
%    \legend{Blue class, Red class} 
\end{axis}
\end{tikzpicture}
%
%
%
%
%
%
%\caption{Signed distance field error with $Voxel\ size = 0.1$, $ N_{Max}=100$, $\delta=1.2$. The errors are average of errors over 10 random maps, and in each we have 100 random observations.}
%
%
%
%\label{fig:sdf_error}
%
%
%
%\end{figure}
%
%
%
%
%
%\captionsetup{format=plain, font=small, labelfont=bf}
%        a = 7
 %       sig = 1
  %      sigma = 1
  %      ls = 1
%    self.dimRatio = 1.5;
%   self.maxInLeaf = 100;
\caption{Misclassification Rate, Precision, Recall, and Normalized SDF Error for different class error probability and distance noise variance. The top right plot shows the average SDF error over $10$ random maps with a $100$ random observations each, with $voxel\ size = 0.1$, $max(N)=100$, $\delta=1.2$.}
\label{var_prob}
\end{figure}
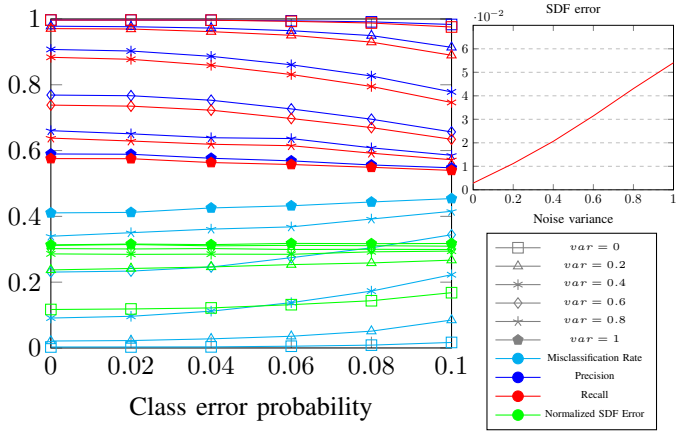

\begin{figure}[t]
%!TEX root = ../../main.tex
%%%%%%%%%%%%%%%%%%%%%%%%%%%%%%%
%%% SECTION : Problem   %%%
%%%%%%%%%%%%%%%%%%%%%%%%%%%%%%%
\begin{subfigure}[htbp]{0.48\columnwidth}
\begin{tikzpicture}[scale=0.55, line width=1pt]
    \begin{axis}[
	title={},
    	xlabel={Feature points' number in each leaf ($max(N)$)},
    	%axis y line=left,
	ymode=log,
	xmode=log,
    	xmin=4, xmax=4096,
    	ymin=0, ymax=1,
    	xtick={4,16,64,256,1024},%,4096
    	ytick={},
    	ylabel style={overlay, at={(0.91, 0.95)}, anchor=center, font=\tiny},
    	ylabel = {},
    ]
        \addplot[
    color = cyan,
    mark = dot,
    ]
    coordinates {
    (4, 0.01218772) (16, 0.0125142) (64, 0.01230902) (256, 0.01431954) (1024, 0.01828492) (4096, 0.02434235) 
    };
        \addplot[
    color = blue,
    mark = dot,
    ]
    coordinates {
    (4, 0.9824044) (16, 0.4956945) (64, 0.2815095) (256, 0.248844) (1024, 0.2584055) (4096, 0.266601) 
    };
        \addplot[
    color = red,
    mark = dot,
    ]
    coordinates {
    (4, 1-0.98606546) (16, 1-0.98578224) (64, 1-0.98603365) (256, 1-0.9843247) (1024, 1-0.98046386) (4096, 1-0.97435744) 
    };
        \addplot[
    color = green,
    mark = dot,
    ]
    coordinates {
    (4, 1-0.98631709) (16, 1-0.98601404) (64, 1-0.9861766) (256, 1-0.983555) (1024, 1-0.97824619) (4096, 1-0.97057522) 
    };
    \end{axis}
\end{tikzpicture}
\end{subfigure}\hfill\hspace{1pt}
\begin{subfigure}[htbp]{0.48\columnwidth}%{-0.01\textwidth}
\begin{tikzpicture}[scale=0.55, line width=1pt]
    \begin{axis}[
	title={},
    	xlabel={Over lap ratio of tree's leaves ($\delta$)},
    	%axis y line=left,
	ymode=log,
    	xmin=1, xmax=2,
    	ymin=0, ymax=1,
    	xtick={1,1.2,1.4,1.6,1.8,2},
    	ytick={},
    	ylabel style={overlay, at={(0.91, 0.95)}, anchor=center, font=\small},
    	ylabel = {},
	yticklabels={,,},%%%%%%%%%%%%%%%%%%%%%%%%%%%%%%%%%%%%REMOVE THE AXIS
    ]
        \addplot[
    color = cyan,
    mark = dot,
    ]
    coordinates {
    (1, 0.00981909) (1.2, 0.00957082) (1.4, 0.01121102) (1.6, 0.01337076) (1.8, 0.01736967) (2, 0.02158993)
    };
        \addplot[
    color = blue,
    mark = dot,
    ]
    coordinates {
    (1, 0.3348038) (1.2, 0.2620699) (1.4, 0.2418551) (1.6, 0.2504536) (1.8, 0.2601078) (2, 0.2629537)
    };
        \addplot[
    color = red,
    mark = dot,
    ]
    coordinates {
    (1, 1-0.98858812) (1.2, 1-0.98905908) (1.4, 1-0.98756481) (1.6, 1-0.9856564) (1.8, 1-0.98112406) (2, 1-0.97635192)
    };
        \addplot[
    color = green,
    mark = dot,
    ]
    coordinates {
    (1, 1-0.9880466) (1.2, 1-0.98813084) (1.4, 1-0.98605807) (1.6, 1-0.98265943) (1.8, 1-0.97723401) (2, 1-0.97314582) 
    };
    \end{axis}
\end{tikzpicture}
\end{subfigure}\\
\begin{subfigure}[htbp]{0.48\columnwidth}%{-0.01\textwidth}
\begin{tikzpicture}[scale=0.55,line width=1pt]
    \begin{axis}[
        legend style={anchor=center, legend columns=-1, font=\tiny},
        legend entries={Misclassification Rate, Normalized SDF Error, FDR, FNR},
        legend to name=named,
	title={},
    	xlabel={Gaussian Process noise variance},
	ymode=log,
    	%axis y line=left,
    	xmin=0, xmax=2.5,
    	ymin=0, ymax=1,
    	xtick={0,0.5,1,1.5,2},%,2.5
    	ytick={},
    	ylabel style={overlay, at={(0.91, 0.95)}, anchor=center, font=\small},
    	ylabel = {},
    ]
        \addplot[
    color = cyan,
    mark = dot,
    ]
    coordinates {
    (0, 0.02189483) (0.5, 0.01687455) (1, 0.01654711) (1.5, 0.01629494) (2, 0.01886629) (2.5, 0.02227487)
    };
        \addplot[
    color = blue,
    mark = dot,
    ]
    coordinates {
    (0, 0.2703628) (0.5, 0.2421209) (1, 0.2433853) (1.5, 0.2593666) (2, 0.2847622) (2.5, 0.3337421)
     };
        \addplot[
    color = red,
    mark = dot,
    ]
    coordinates {
    (0, 1-0.97578877) (0.5, 1-0.98137738) (1, 1-0.98173447) (1.5, 1-0.98154523) (2, 1-0.97954766) (2.5, 1-0.97680789)
    };
        \addplot[
    color = green,
    mark = dot,
    ]
    coordinates {
    (0, 1-0.97210083) (0.5, 1-0.97965053) (1, 1-0.9801948) (1.5, 1-0.97996812) (2, 1-0.97678082) (2.5, 1-0.97163774) 
    };
    \end{axis}
\end{tikzpicture}
\end{subfigure}%\hfill\hspace{1cm}
\begin{subfigure}[htbp]{0.48\columnwidth}%{-0.01\textwidth}
\begin{tikzpicture}[scale=0.55,line width=1pt]
    \begin{axis}[
	title={},
    	xlabel={Voxel size},
    	%axis y line=left,
    	xmin=0.05, xmax=0.3,
    	ymin=0, ymax=1,
    	xtick={0.05,0.1,0.15,0.2,0.25,0.3},
	ymode=log,
    	ytick={},
    	ylabel style={overlay, at={(0.91, 0.95)}, anchor=center, font=\small},
    	ylabel = {},
	yticklabels={,,},%%%%%%%%%%%%%%%%%%%%%%%%%%%%%%%%%%%%REMOVE THE AXIS
    ]
        \addplot[
    color = cyan,
    mark = dot,
    ]
    coordinates {
    (0.05, 0.00651495) (0.1, 0.01007735) (0.15, 0.01299705) (0.2, 0.01575142) (0.25, 0.01804737) (0.3, 0.02473469)
    };
        \addplot[
    color = blue,
    mark = dot,
    ]
    coordinates {
    (0.05, 0.01408539/0.05) (0.1, 0.02408478/0.1) (0.15, 0.03842124/0.15) (0.2, 0.06327664/0.2) (0.25, 0.10695529/0.25) (0.3, 0.18488528/0.3)
    };
        \addplot[
    color = red,
    mark = dot,
    ]
    coordinates {
    (0.05, 1-0.99261828) (0.1, 1-0.98850113) (0.15, 1-0.98459437) (0.2, 1-0.98218475) (0.25, 1-0.9788855) (0.3, 1-0.97191114)
    };
        \addplot[
    color = green,
    mark = dot,
    ]
    coordinates {
    (0.05, 1-0.99155083) (0.1, 1-0.98730615) (0.15, 1-0.98425005) (0.2, 1-0.97951085) (0.25, 1-0.97839137) (0.3, 1-0.97016124)
    };
    \end{axis}
\end{tikzpicture}
\end{subfigure}
\ref{named}
%
%
%
%
%
%
%\captionsetup{format=plain, font=small, labelfont=bf}
%\caption{A subfigure}
%        a = 7
 %       sig = 1
  %      sigma = 1
  %      ls = 1
%    self.dimRatio = 1.5;
%   self.maxInLeaf = 100;
\caption{Misclassification rate, normalized SDF error, False Discovery Rate (FDR), and False Negative Rate (FNR) as a function of the number of pseudo points per octree support region ($max(N)$), support region overlap ratio ($\delta$), GP noise variance $\sigma^2$, and workspace discretization ($voxel\ size$). The default parameter values are $\delta=1.5$, $max(N)=100$, $\sigma^2=1$, $voxel\ size=0.1$. Class and distance measurements with class error probability of $0.05$ and distance noise variance $0.5$ are obtained from $100$ random observations in each of $50$ random 2-D maps. Test points are selected within a threshold of $0.05$ from the ground truth class boundaries.}
%The class error probability is $0.05$ and the variance noise of the rays is $0.5$. The test points are out of boundaries consecutive average distance $0.05$. The measures are from $100$ random observations in each map, then getting average over $50$ random maps, and classes.}
\label{para}
\end{figure}
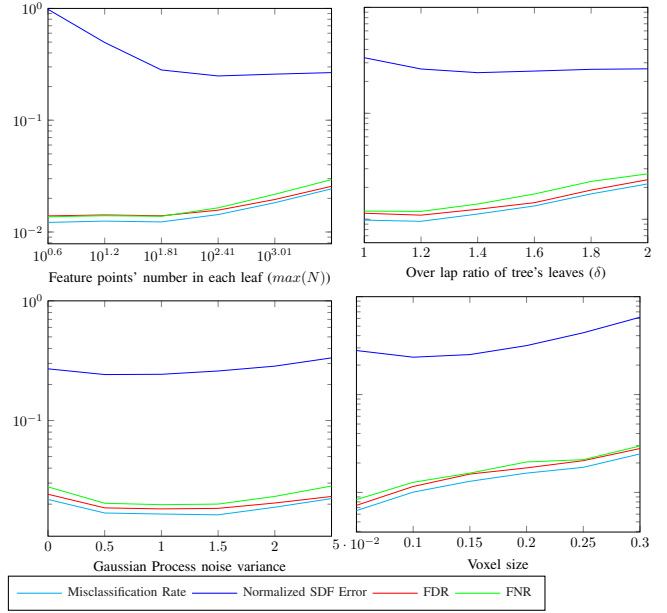

%%%%%%%%%%%%%%%%%%%%%%%%%%%%%%%%%%%%%%%%%%%%%%%%%%%%%%%%%%%%%%%%%%%%%%%%%%%%%%%%%%%%%%%%%%%%%%%%
%%%%%%%%%%%MultiAgent%%%%%%%%%%%%%%%%%%%%%%%%%%%%%%%%%%%%%%%%%%%%%%%%%%%%%%%%%%%%%%%%%%%%%%%%%%%%%%%%%%%%%%%%%%%%%%%%%%%%%%%%%%%%%%%%%%%%%%%%%%%%%%

\subsection{Multi-Robot 2-D Evaluation}
\label{sec:2d-multi}

Next, we evaluate the distributed GP regression in a three-robot simulation and investigate the convergence of the local GP estimates of each robot to a centralized GP estimate. We use the same random polygonal 2-D environments with two object classes but this time generate trajectories for three different robots (see Fig.~\ref{multi2D}). The robots communicate with each other over a graph with a fixed weight matrix:
\begin{equation}\label{eq:wightmatrix} 
\weight=\begin{bmatrix}
0.5 & 0.25 & 0.25\\
0.25 & 0.75 & 0\\
0.25 & 0 & 0.75\\
\end{bmatrix}. 
\end{equation}
The GP regression parameters at each robot are the same as the defaults in Sec.~\ref{sec:2d-single}. 

To verify Prop.~\ref{prop:cnvrgn} empirically, we compare the mean absolute error (MAE) between the GP prediction of an individual robot $\rb$ and the centralized estimator $ctr$ using all robot observations as described in Sec.~\ref{sec:centralized-estimator}. Specifically, at each time step $t$, we consider all classes $\cls$ and associated pseudo points $\calP_{t,\cls}^{ctr}$ that have been observed by the centralized estimator and calculate the mean MAE as:
\begin{equation}\label{eq:mae}
MAE_t = \frac{1}{L_t |\calP_{t,\cls}^{ctr}|}\sum_\ell \sum_{\bfp \in \calP_{t,\cls}^{ctr}} \left| \mu_{t,\cls}^\rb(\bfp) - \mu_{t,\cls}^{ctr}(\bfp) \right|,
\end{equation}
where $L_t$ is the number of observed object classes by time $t$. The variance MAE is computed equivalently to \eqref{eq:mae} with $\mu_{t,\cls}^\rb(\bfp)$ and $\mu_{t,\cls}^{ctr}(\bfp)$ replaced by $k_{t,\cls}^\rb(\bfp,\bfp)$ and $k_{t,\cls}^{ctr}(\bfp,\bfp)$.

Fig.~\ref{multi2D} shows the final reconstructions of one robot and the centralized estimator. As expected, the final reconstructions are identical and convergence happens in finite time. The behavior of the mean and variance MAE curves is similar. This is expected because the distance between the local and centralized GP parameters is due to unobserved information rather than stochastic noise. We see that the MAE curves approach $0$ quickly. Several peaks are observed in the curves when new sections of the environment that are not visible to robot $\rb$ are observed by another robot in the network. The new information disseminates in the network and the MAE curves approach zero again.

\begin{figure}[t]
\includegraphics[width=0.5\linewidth]{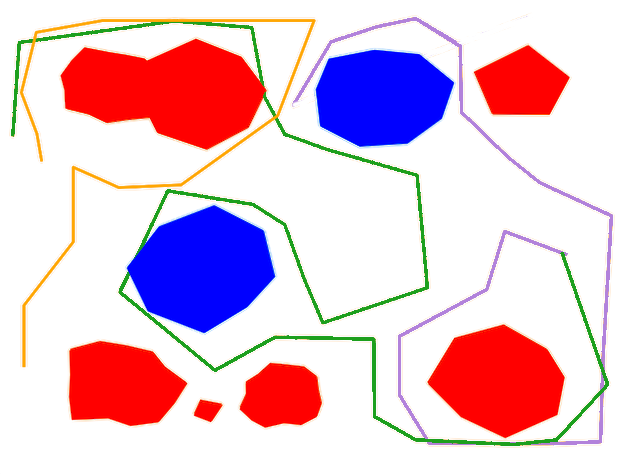}%
\includegraphics[width=0.5\linewidth]{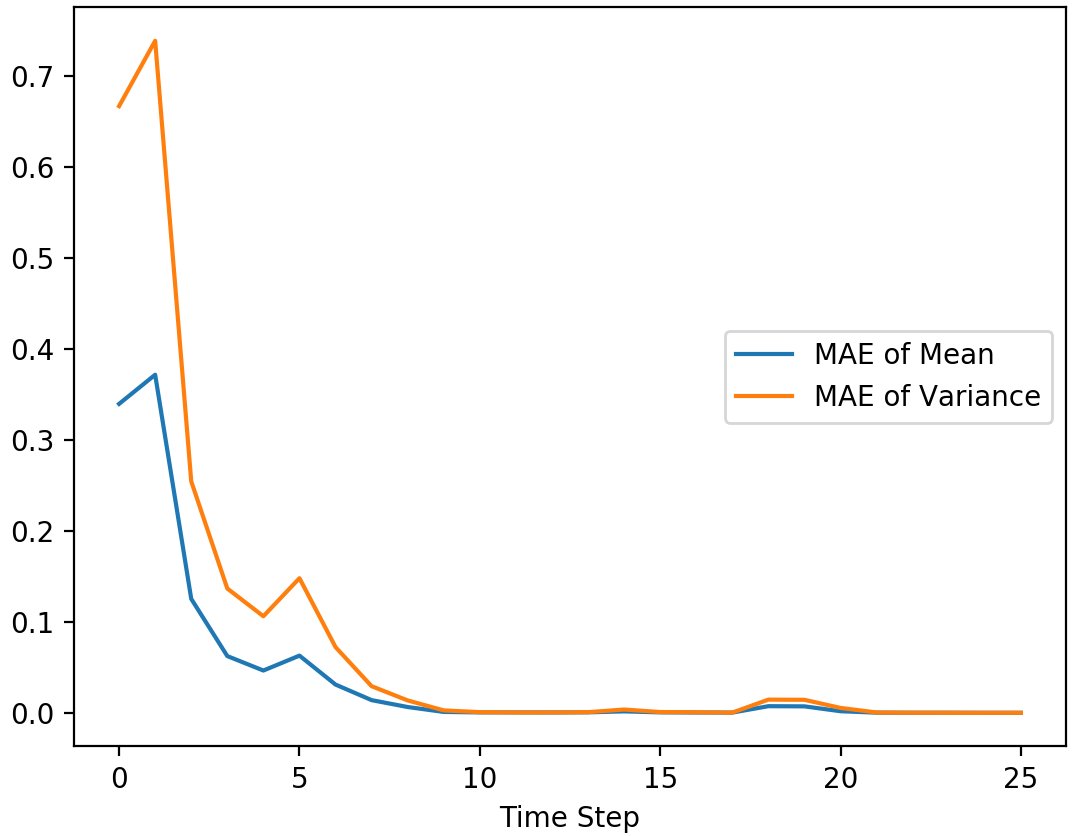}\\
\includegraphics[width=0.5\linewidth]{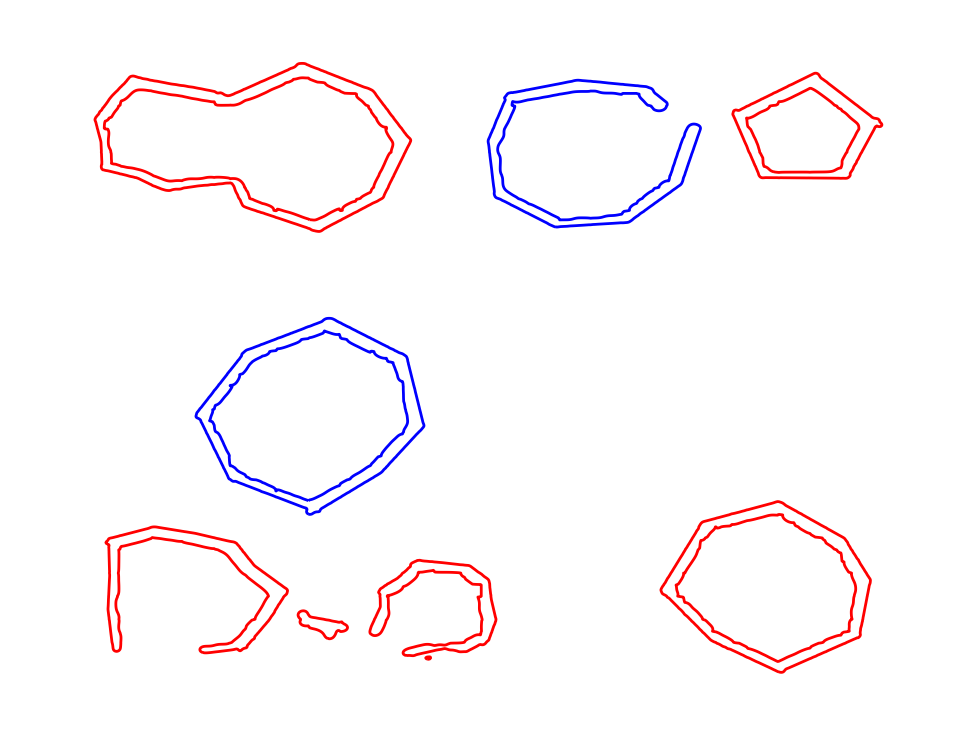}%
\includegraphics[width=0.5\linewidth]{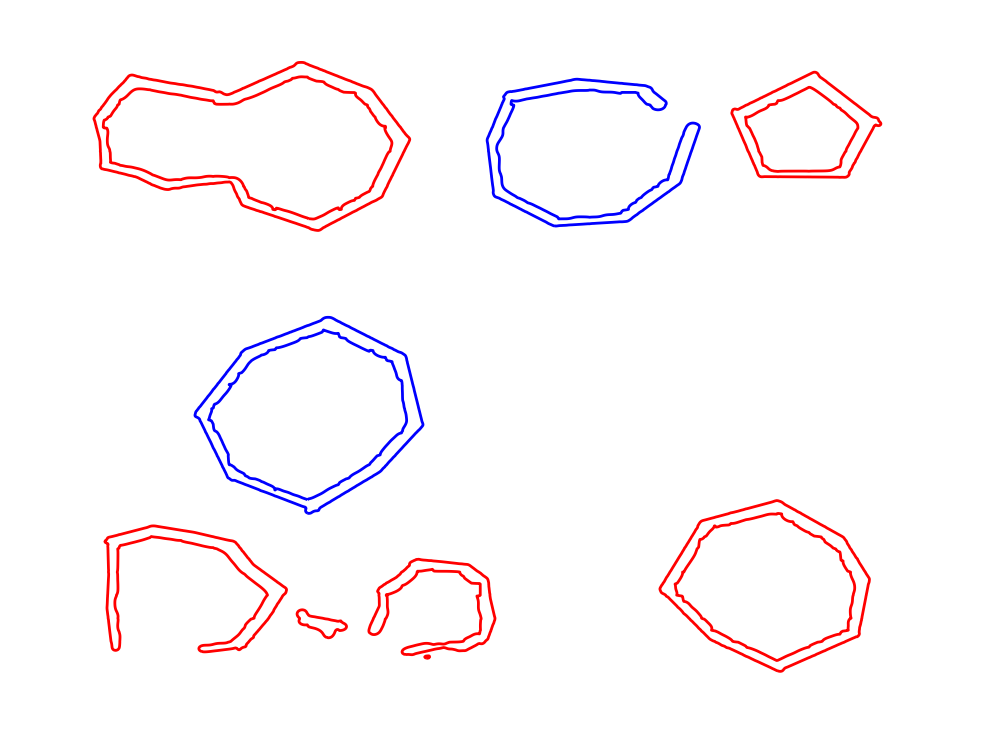}
\caption{
Three robot trajectories (green, orange, purple) in a 2-D simulated environment (top left) with two object classes (red, blue). The zero level-sets of the TSDF reconstructions for the two classes by centralized GP regression (bottom left) and distributed GP regression from the perspective of the orange robot (bottom right) are shown. As expected, due to Prop.~\ref{prop:class_prediction}, the centralized and individual robot reconstructions are identical. This is verified quantitatively in the GP mean and variance mean absolute error (MAE) plot (top right). The initial GP parameters for each robot and object class were $\mu^\rb_{0,\cls}(\bfx)=0.5, k^\rb_{0,\cls}(\bfx,\bfx)=1$.}
%Three robots are exploring the 2-D environment. On the first row, we see the path that two other robots have paved. On the second row, at left we see the environment and the path that the robot has paved. On the right we see the reconstruction of the environment by this robot. On the third row at right we see the reconstruction of the environment by the centralized estimator. At left we see the convergence of the mentioned robot and the centralized estimation. To have sense about curves note that $\mu^k_{0,i}(\bbx)=0.5, k^k_{0,i}(\bbx,\bbx)=1$.
\label{multi2D}
\end{figure}

%!TEX root = ../main.tex
%%%%%%%%%%%%%%%%%%%%%%%%%%%%%%%%%%%%%%%%%%%%%%%%%%%%%%%%%%%%%%%%%%%%%%%%%%%%%%%%%%%%%%%%%%%%%%%%%%%%%%%%%%%%%%%%%%%%%%%%%%%%%%%%%%%%%%%%%%%%%%%%%%%%%%%%%%%%%%%%%%%%%%%%%%%%%%%%%%%%%%%%%%%%%%%%%%%%%%%%%%%%%%%%%%
\section{Evaluation using 3-D Real Data}
\label{sec:evaluation3d}

In this section, we evaluate our semantic TSDF mapping approach using real RGB-D data from physical 3-D environments. We demonstrate the qualitative and quantitative performance of the single-robot approach of Sec.~\ref{sec:algorithm} and the multi-robot approach of Sec.~\ref{sec:DistributedMSM}, using three robots to map the same environment collaboratively. As in the 2-D experiments in Sec.~\ref{sec:evaluation2d}, we use a sparse Mat{\'e}rn kernel ($v = 3/2$) and a grid of potential pseudo points $\allPs$ with resolution $voxel\ size$. Given a query point $\hat{\bfx}$, we choose a cubic region around it such that $(frame\ size-1) \times voxel\ size \geq 2 \times \epsilon$ to construct the training data in \eqref{Gtc}. All points from $\allPs$ that lie in the cubic region are chosen as pseudo points associated with $\hat{\bfx}$.

%In this section, first we investigate the qualitative and quantitative of our set up for one robot in 3-D. Then we investigate our algorithm in case that we have a network of robots in the same environments. 

%In all experiments, sparse Mat{\'e}rn kernel ($v = 3/2$)~\cite{kim2014recursive} is used. We set $\allPs$ to be a grid with resolution $voxel\ size$.  Given a query point $\hat{\bbx}$, we choose a cubic frame around it such that $(frame\ size-1) \times voxel\ size \geq 2 \times \epsilon$. All points from $\allPs$ that lie in the cubic frame are chosen as pseudo points associated with $\hat{\bbx}$.

\subsection{Single-Robot 3-D Evaluation}
\label{sec:3d-single}
We compare our method to the incremental Euclidean signed distance mapping method Fiesta \cite{han2019fiesta} on the Cow and Lady dataset \cite{oleynikova2017voxblox}. We also demonstrate the 3-D semantic reconstruction performance of our method on the SceneNN dataset~\cite{scenenn-3dv16}.

\subsubsection{Cow and Lady Dataset}

The reconstruction of the Cow and Lady dataset with $829$ depth images and known camera trajectory by the single-robot TSDF GP regression of Sec.~\ref{sec:algorithm} is shown in Fig.~\ref{fig:cow-and-lady}. A triangular mesh is extracted from the mean TSDF prediction using the marching cubes algorithm \cite{marchingcubes}. The reconstruction time and error with respect to the ground-truth scene point cloud provided by the dataset are reported in Fig.~\ref{cowLady1}. The error of Fiesta with default parameters is shown as well. Similar to the 2-D simulations, increasing the maximum number of pseudo points $max(N)$ per octree support region improves the SDF error of our approach. The improvement is significant at first and less pronounced afterwards. Conversely, the computation time decreases at first because the number of leaves in the octree decreases and then increases afterwards as the GP covariance matrices get larger. Increasing $\delta$ leads to an insignificant improvement in the SDF error at the expense of a significant reconstruction time increase. Increasing the GP noise variance improves the SDF error at first (especially when the error is close to zero) but worsens is afterwards without significant impact on time. As $voxel\ size$ varies, our method outperforms Fiesta noticeably.

%In this section, we investigate the performance of our method on the cow and lady dataset, and compare it to Fiesta. %which builds ESDF map incrementally for voxels, by updating the effect of observed obstacles in two independent queues, for inserting and deleting them.
%
%Increasing $Max(N)$ improves SDF error which is significant at first, but then improvement is not considerable. The computational time at first decreases, because the number of leaves decreases, then it increases, since the GP covariance matrix gets bigger.
%
%Increasing $\delta$ improves the SDF error, but not significantly. Although, it increases time significantly. Increasing the GP noise variance improves the SDF error at first, specifically when it is close to zero, it affects the SDF error significantly. Then increasing more worsens the SDF error. This increasing does not have a meaningful effect on the time. In Fig.~\ref{cowLady1} we can see when the $voxel\ size$ varies, our method improved the error of Fiesta significantly. Fiesta's parameters are set as their default.

\begin{figure}[t]
\includegraphics[width=8.5cm]{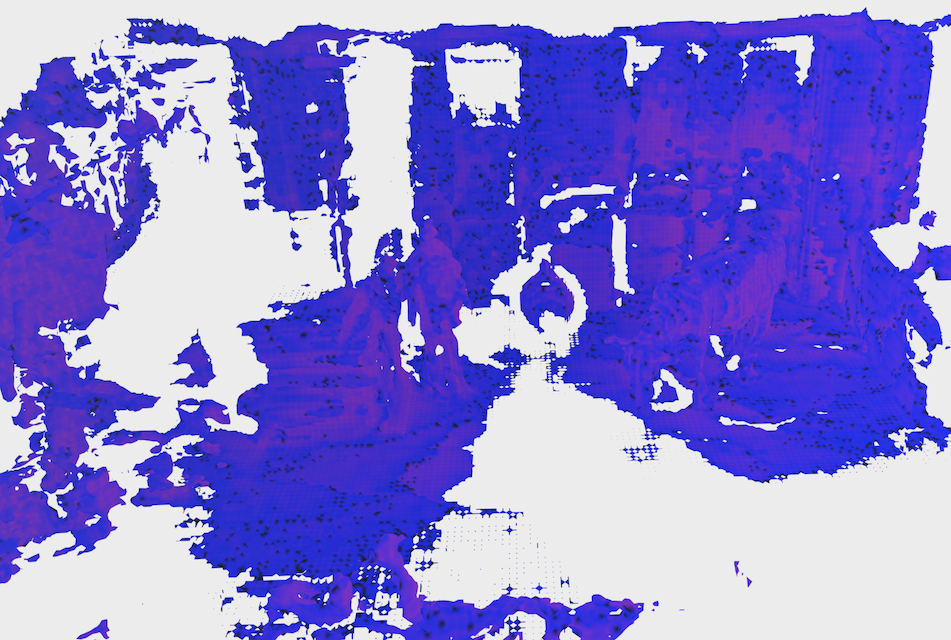}
\caption{Single-robot reconstruction of the Cow and Lady dataset \cite{oleynikova2017voxblox}. Red hues indicate lower TSDF variance.}
\label{fig:cow-and-lady}
\end{figure}

\begin{figure}[t]
%!TEX root = ../../main.tex
%%%%%%%%%%%%%%%%%%%%%%%%%%%%%%%
%%% SECTION : Problem   %%%
%%%%%%%%%%%%%%%%%%%%%%%%%%%%%%%
\begin{subfigure}[htbp]{0.48\columnwidth}
\begin{tikzpicture}[scale=0.52, line width=1pt]
    \begin{axis}[
	title={},
    	xlabel={Feature points' number in each leaf ($Max(N)$)},
    	%axis y line=left,
    	xmin=10, xmax=1000,
    	ymin=0, ymax=0.1,
    	xtick={10, 100, 200, 400, 600, 800},%, 50, 1000
    	ytick={},
    	ylabel style={ anchor=center, font=\small},%overlay, at={(0.91, 0.95)},
    	ylabel = {(m)},
	%y tick label style={/pgf/number format/sci}
    ]
        \addplot[
    color = red,
    mark = dot,
    ]
    coordinates {
    (10, 0.06880932346) (50, 0.04711500897) (100, 0.0445206839) (200, 0.04294358839) (400, 0.0414028134) (600, 0.04044528084) (800, 0.03918963097) (1000, 0.03844095222)
    };
%        (10, 0.05629933344080788*(610318/643343)+0.3*(1-(610318/643343))) (50, 0.03702509980077054*(618659/643343)+0.3*(1-(618659/643343))) (100, 0.03565059338858004*(621756/643343)+0.3*(1-(621756/643343))) (200, 0.035007101692707085*(624075/643343)+0.3*(1-(624075/643343))) (400, 0.03492998325732514*(627633/643343)+0.3*(1-(627633/643343))) (600, 0.0351901885169239*(630576/643343)+0.3*(1-(630576/643343))) (800, 0.036317723282427654*(636336/643343)+0.3*(1-(636336/643343))) (1000, 0.03755367942079607*(641168/643343)+0.3*(1-(641168/643343)))
\end{axis}
\begin{axis}[
    xmin=10, xmax=1000,
    ymin=330, ymax=520,
    axis y line=right,
    ylabel style={anchor=center, font=\small},%overlay,at={(0.14,0.95)}, 
        	xtick={10, 100, 200, 400, 600, 800},
    ytick={},
    yticklabels={,,},%%%%%%%%%%%%%%%%%%%%%%%%%%%%%%%%%%%%REMOVE THE AXIS
    %legend pos=north west,
    %ymajorgrids=true,
     ]
        \addplot[
    color = blue,
    mark = dot,
    ]
    coordinates {
    (10, 513.71) (50, 370.23) (100, 360.82) (200, 369.75) (400, 373.49) (600, 385.46) (800, 419.67) (1000, 463.60) 
    };
    \end{axis}
\end{tikzpicture}
\end{subfigure}\hfill\hspace{1pt}%\hfill\hspace{2.1cm}
\begin{subfigure}[htbp]{0.48\columnwidth}
\begin{tikzpicture}[scale=0.52,line width=1pt]
    \begin{axis}[
	title={},
    	xlabel={Over lap ratio of each leaf ($\delta$)},
    	%axis y line=left,
    	xmin=1, xmax=2,
    	ymin=0, ymax=0.1,
    	xtick={1,1.2, 1.4, 1.6, 1.8, 2},
    	ytick={},
    	ylabel style={ anchor=center, font=\small},%overlay, at={(0.91, 0.95)},
	yticklabels={,,},%%%%%%%%%%%%%%%%%%%%%%%%%%%%%%%%%%%%REMOVE THE AXIS
    ]
        \addplot[
    color = red,
    mark = dot,
    ]
    coordinates {
    (1, 0.04320124374) (1.2, 0.04311415269) (1.4, 0.0430272118) (1.6, 0.04172228562) (1.8, 0.04039260812) (2, 0.03918730467) 
    };
%        (1, 0.041500799943230846*(639111/643343)+0.3*(1-(639111/643343))) (1.2, 0.033428212044477594*(619967/643343)+0.3*(1-(619967/643343))) (1.4, 0.03358079119759292*(620532/643343)+0.3*(1-(620532/643343))) (1.6, 0.03765007791611724*(633357/643343)+0.3*(1-(633357/643343))) (1.8, 0.0394965166040342*(641130/643343)+0.3*(1-(641130/643343))) (2, 0.03876907076944026*(642313/643343)+0.3*(1-(642313/643343)))
\end{axis}
\begin{axis}[
    xmin=1, xmax=2,
    ymin=330, ymax=520,
    axis y line=right,
    ylabel style={anchor=center, font=\small},%overlay,at={(0.14,0.95)}, 
    ylabel={(sec)},
    ytick={},
    %legend pos=north west,
    %ymajorgrids=true,
     ]
        \addplot[
    color = blue,
    mark = dot,
    ]
    coordinates {
    (1, 334.13) (1.2, 339.95) (1.4, 358.13) (1.6, 379.50) (1.8, 395.12) (2, 405.98) 
    };
    \end{axis}
\end{tikzpicture}
\end{subfigure}\\
\begin{subfigure}[htbp]{0.48\columnwidth}
\begin{tikzpicture}[scale=0.52,line width=1pt]
    \begin{axis}[
	title={},
    	xlabel={Gaussian Process noise variance},
    	%axis y line=left,
    	xmin=0, xmax=100,
    	ymin=0, ymax=0.25,%ymin=0, ymax=0.1,
    	xtick={0, 5, 10, 25, 40, 50, 80},%100
    	ytick={},
    	ylabel style={ anchor=center, font=\small},%overlay, at={(0.91, 0.95)},
    	ylabel = {(m)},
    ]
        \addplot[
    color = red,
    mark = dot,
    ]
    coordinates {
    (0, 0.0447149589) (5, 0.04349856944) (10, 0.0432992905) (25, 0.04294358839) (40, 0.04279555971) (50, 0.04276849214) (80, 0.04294045161) (100, 0.04319170428)
    };
%        (10, 0.05629933344080788*(624075/643343)+0.3*(1-(624075/643343))) (50, 0.03702509980077054*(618659/643343)+0.3*(1-(618659/643343))) (100, 0.03565059338858004*(621756/643343)+0.3*(1-(621756/643343))) (200, 0.035007101692707085*(624075/643343)+0.3*(1-(624075/643343))) (400, 0.03492998325732514*(627633/643343)+0.3*(1-(627633/643343))) (600, 0.0351901885169239*(630576/643343)+0.3*(1-(630576/643343))) (800, 0.036317723282427654*(636336/643343)+0.3*(1-(636336/643343))) (1000, 0.03755367942079607*(641168/643343)+0.3*(1-(641168/643343)))
\end{axis}
\begin{axis}[
    xmin=0, xmax=100,
    ymin=250, ymax=750, %ymin=350, ymax=520,
    axis y line=right,
    ylabel style={anchor=center, font=\small},%overlay,at={(0.14,0.95)}, 
    xtick={0, 5, 10, 25, 40, 50, 80},
    ytick={},
    %legend pos=north west,
    %ymajorgrids=true,
     yticklabels={,,},%%%%%%%%%%%%%%%%%%%%%%%%%%%%%%%%%%%%REMOVE THE AXIS
     ]
        \addplot[
    color = blue,
    mark = dot,
    ]
    coordinates {
    (0, 368.54) (5, 369.60) (10, 369.59) (25, 372.06) (40, 368.23) (50, 368.37) (80, 367.85) (100, 368.68) 
    };

    \end{axis}
\end{tikzpicture}
\end{subfigure}
\begin{subfigure}[htbp]{0.48\columnwidth}
\hfill\hspace*{-3mm}\begin{tikzpicture}[scale=0.52,line width=1pt]
    \begin{axis}[
	title={},
    	xlabel={Voxel size},
    	%axis y line=left,
    	xmin=0.05, xmax=0.2,
    	ymin=0, ymax=0.25,
    	xtick={0.05, 0.1, 0.2},
    	ytick={},
    	ylabel style={ anchor=center, font=\small},%overlay, at={(0.91, 0.95)},
	yticklabels={,,},%%%%%%%%%%%%%%%%%%%%%%%%%%%%%%%%%%%%REMOVE THE AXIS
    ]
        \addplot[
    color = red,
    mark = square,
    ]
    coordinates {
    (0.05, 0.02851870903) (0.1, 0.04294358839) (0.2, 0.06127624831) 
    };
            \addplot[
    color = green,
    mark = square,
    ]
    coordinates {
    (0.05, 0.05354238896528112) (0.1, 0.10154338521867906) (0.2, 0.21609132220118651) 
    };
%        (1, 0.041500799943230846*(639111/643343)+0.3*(1-(639111/643343))) (1.2, 0.033428212044477594*(619967/643343)+0.3*(1-(619967/643343))) (1.4, 0.03358079119759292*(620532/643343)+0.3*(1-(620532/643343))) (1.6, 0.03765007791611724*(633357/643343)+0.3*(1-(633357/643343))) (1.8, 0.0394965166040342*(641130/643343)+0.3*(1-(641130/643343))) (2, 0.03876907076944026*(642313/643343)+0.3*(1-(642313/643343)))
\end{axis}
\begin{axis}[
    legend style={anchor=center, legend columns=-1, font=\small},
    legend entries={Time (sec),Error (m), Fiesta Error (m)},
    legend to name=named,
    xtick={0.05, 0.1, 0.2},
    xmin=0.05, xmax=0.2,
    ymin=250, ymax=750,
    axis y line=right,
    ylabel style={anchor=center, font=\small},%overlay,at={(0.14,0.95)}, 
    ylabel={(sec)},
    ytick={},
    %legend pos=north west,
    %ymajorgrids=true,
     ]
        \addplot[
    color = blue,
    mark = dot,
    ]
    coordinates {
    (0.05, 748.70) (0.1, 372.06) (0.2, 293.78)
    };
        \addplot[
    color = red,
    mark = dot,
    ]
    coordinates {
    (0, 10000)
    };
         \addplot[
    color = green,
    mark = dot,
    ]
    coordinates {
    (0, 10000)
    };
    \end{axis}
\end{tikzpicture}
\end{subfigure}
\ref{named}
%
%
%
%
%
%\captionsetup{format=plain, font=small, labelfont=bf}
%\caption{A subfigure}
%        a = 7
 %       sig = 1
  %      sigma = 1
  %      ls = 1
%    self.dimRatio = 1.5;
%   self.maxInLeaf = 100;
%Measures including the time that our algorithm took in seconds, SDF error on the ground truth, and Fiesta's error.
\caption{Evaluation of the SDF reconstruction time (sec) and error (m) of our incremental sparse GP regression algorithm on the Cow and Lady dataset \cite{oleynikova2017voxblox} and in comparison with Fiesta \cite{han2019fiesta}. The errors are evaluated with respect to the ground-truth scene point cloud provided by the dataset. Training is done with 829 depth images and known camera trajectory. The default parameters for our algorithm are $max(N)=200$, $\delta=1.5$, $\sigma^2 = 25$, $voxel\ size = 0.1$, $frame\ size=5$, and SDF truncation value $3\times voxel\ size$.}
%Training is done with 829 depth images, and their synchronized pose out of Cow and Lady dataset. The left and right axises determine the SDF error (m), and time(seconds). The default parameters for our algorithm is $Max(N)=200$, $\delta=1.5$, Gaussian Process noise variance $= 25$, $voxel\ size = 0.1$, $frame\ size=5$, and truncation value is $3\times voxel\ size$. The error is verified with respect to ground truth point cloud that is provided by the dataset.}
\label{cowLady1}
\end{figure}
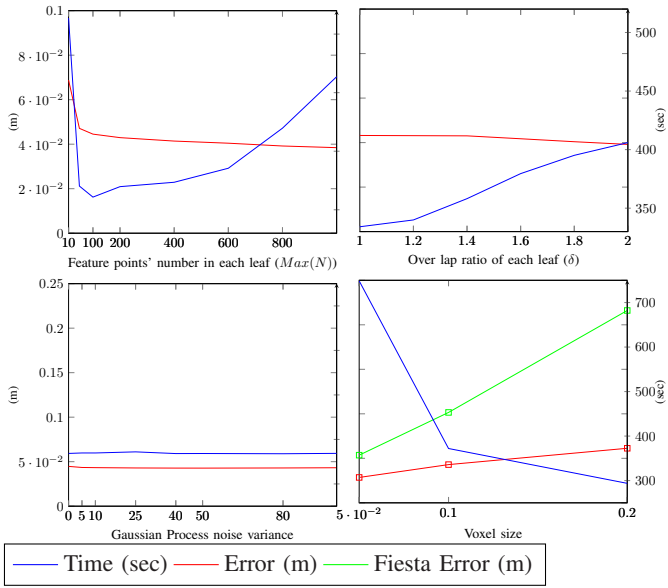

\begin{figure*}[t]
\input{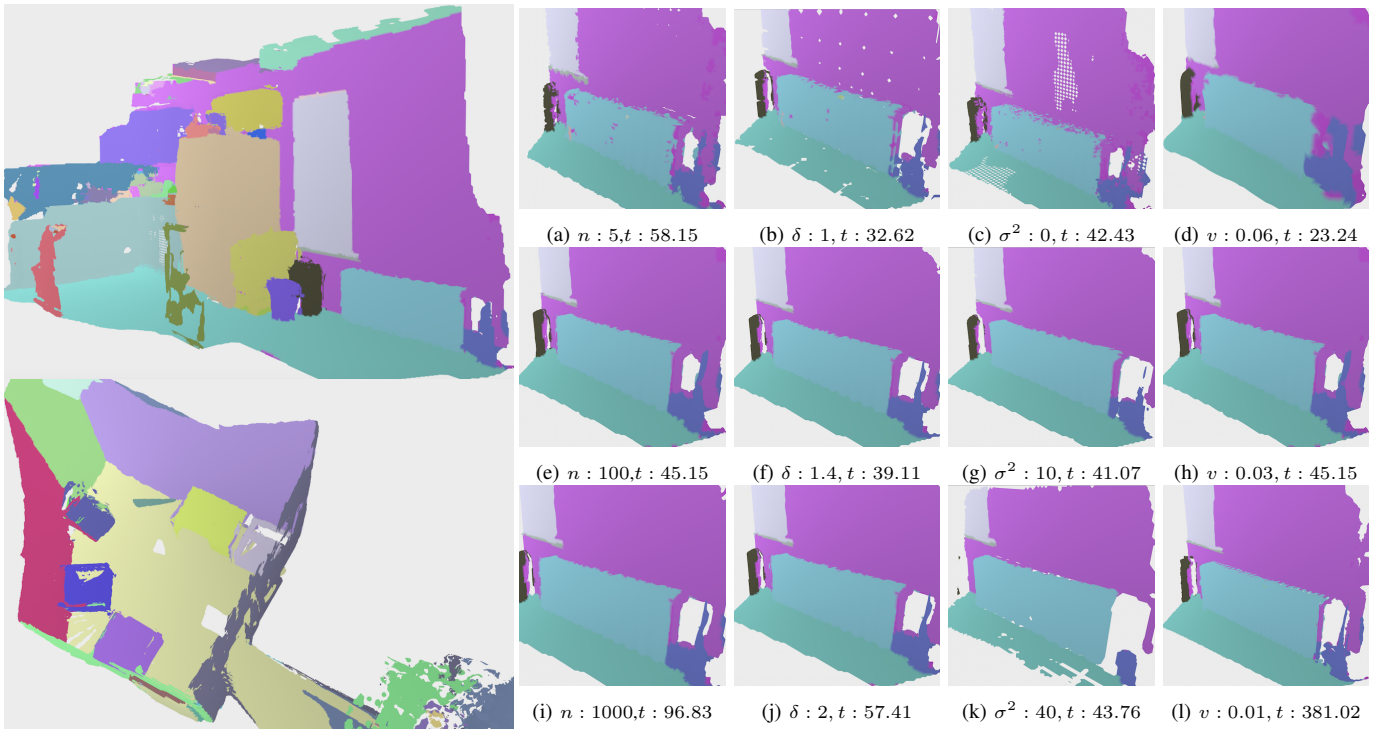}
\caption{Single-robot reconstructions of sequence $255$ (top left), containing $2450$ RGB-D images and $85$ semantic categories (in random colors), and sequence $011$ (bottom left), containing $3700$ RGB-D images and $61$ semantic categories (in random colors), of the SceneNN dataset \cite{scenenn-3dv16}. The incremental sparse GP TSDF mapping process took $1040.41$ sec. for sequence $255$ and $1885.72$ sec. for sequence $011$. The following default parameters were used for the octree: $\delta=1.5$, $n = max(N)=100$ and the GP training: $\sigma^2=3$, $v = voxel\ size = 0.03$, $frame\ size=1$. On the right we see the effect of these parameters ($t$ is time in seconds) on the metric-semantic reconstruction over $140$ RGB-D images.}
%Left up is the final reconstruction on sequence $255$, containing $2450$ RGB-D images. The number of detected classes is $85$, shown in random colors. This reconstruction took $1040.41$ seconds. Left down is the final reconstruction on sequence $011$, containing $3700$ RGB-D images. The number of detected classes is $61$. This reconstruction took $1885.72$ seconds. On the right we see the change of parameters over $140$ RGB-D images.}
\label{classPara}
\end{figure*}

\subsubsection{SceneNN Dataset}

We evaluate the classification accuracy of our method on the SceneNN dataset in Fig.~\ref{classPara}. The GP posterior is evaluated on a test grid with resolution $0.5 \times voxel\ size$. The test points with posterior variance less than a threshold are used to reconstruct a triangular mesh via the marching cubes algorithm \cite{marchingcubes}. We use Prop.~\ref{prop:class_prediction} for classification. The effect of the different parameters on the performance is illustrated in Fig.~\ref{classPara}. Increasing $max(N)$ improves both classification and TSDF reconstruction results. The improvement after $max(N) = 100$ is negligible but time increases significantly. Increasing $\delta$ improves the TSDF reconstruction significantly at first. After $\delta = 1.4$, the improvement is negligible. As seen in the 2-D simulations, choosing a correct magnitude for the GP noise variance $\sigma^2$ is very important for both the classification and TSDF reconstruction but choosing the optimal value for $\sigma^2$ is not critical.

%the wrong value of GP variance is crucial to both TSDF and the classification, but it is not sensitive to the optimal value.  

%For the 3-D reconstruction with classification evaluation of our method we apply it to the dataset SceneNN~\cite{scenenn-3dv16}. In this part we evaluate on a test grid with resolution $\frac{voxel\ size}{2}$, then we consider the points out of the new grid with confidence more than some threshold. Then we take the mesh of zero value to be the surface. For the classification we use Prop.~\ref{prop:class_prediction}. We can see the effect of different parameters on the performance in Fig.~\ref{classPara}. Increasing $Max(N)$ improves both classification and signed distance field. The improvement after $100$ is negligible, but time increases significantly. Increasing $\delta$ improves the TSDF significantly, since the shape representation improves. After $1.4$ the improvement is negligible. As we have seen in the 2-D simulation, choosing the wrong value of GP variance is crucial to both TSDF and the classification, but it is not sensitive to the optimal value.  

%%%%%%%%%%%%%%%%%%%%%%%%%%%%%%%%%%%%%%%%%%%%%%%%%%%%%%%%%%%%%%%%%%%%%%%%%%%%%%%%%%%%%%%%%%%%%%%%
%%%%%%%%%%%MultiAgent%%%%%%%%%%%%%%%%%%%%%%%%%%%%%%%%%%%%%%%%%%%%%%%%%%%%%%%%%%%%%%%%%%%%%%%%%%%%%%%%%%%%%%%%%%%%%%%%%%%%%%%%%%%%%%%%%%%%%%%%%%%%%%

\subsection{Multi-Robot 3-D Evaluation}
Finally, we evaluate our distributed GP regression on the Cow and Lady and SceneNN datasets. To imitate data collection by multiple robots, we split the RGB-D image sequences into equal parts and consider each as data obtained by a different robot. As in the 2-D simulation, we use three robots with communication structure specified by the weight matrix $\weight$ in \eqref{eq:wightmatrix}. Each robot uses the distributed update rule in \eqref{eq:final_update_rule} and communication continues for 2 rounds after the last RGB-D image from the individual robot sequences is received. The parameters of the individual robots are the same as in the single-robot experiments in Sec.~\ref{sec:3d-single}. The choice of additional rounds is due to Prop.~\ref{prop:cnvrgn}, where we showed theoretically that $T + n - 1$ rounds are needed, where $T$ is the observation sequence length and $n$ is the number of robots, for the local GP distributions to agree with that of a centralized GP estimator. As in the 2-D simulations, to verify Prop.~\ref{prop:cnvrgn} empirically, we compare the mean absolute error (MAE) in \eqref{eq:mae} between the GP mean and variance of an individual robot and the centralized estimator.

The results from the Cow and Lady dataset are reported in Fig.~\ref{oneClassMulti} and Fig.~\ref{oneClassMultiMeanVar}, while those from the SceneNN dataset are reported in Fig.~\ref{multiClassMulti} and Fig.~\ref{multiClassMultiMeanVar}. The local and centralized reconstruction results are identical in both data sets, which confirms the expected theoretical consistency. The mean and variance MAE curves also behave similarly in both data sets because the errors in the local GP regression are due to unobserved information, that has not yet been received by the robot, rather than measurement noise. As in the 2-D simulation, the peaks in the MAE curves are due to another robot in the network observing a new region that has not yet been observed by this robot. These peaks quickly decrease, which indicates the fast empirical convergence of the distributed sparse GP algorithm.

\begin{figure*}[t]
\begin{minipage}[b]{.7\textwidth}
\includegraphics[width=0.5\linewidth]{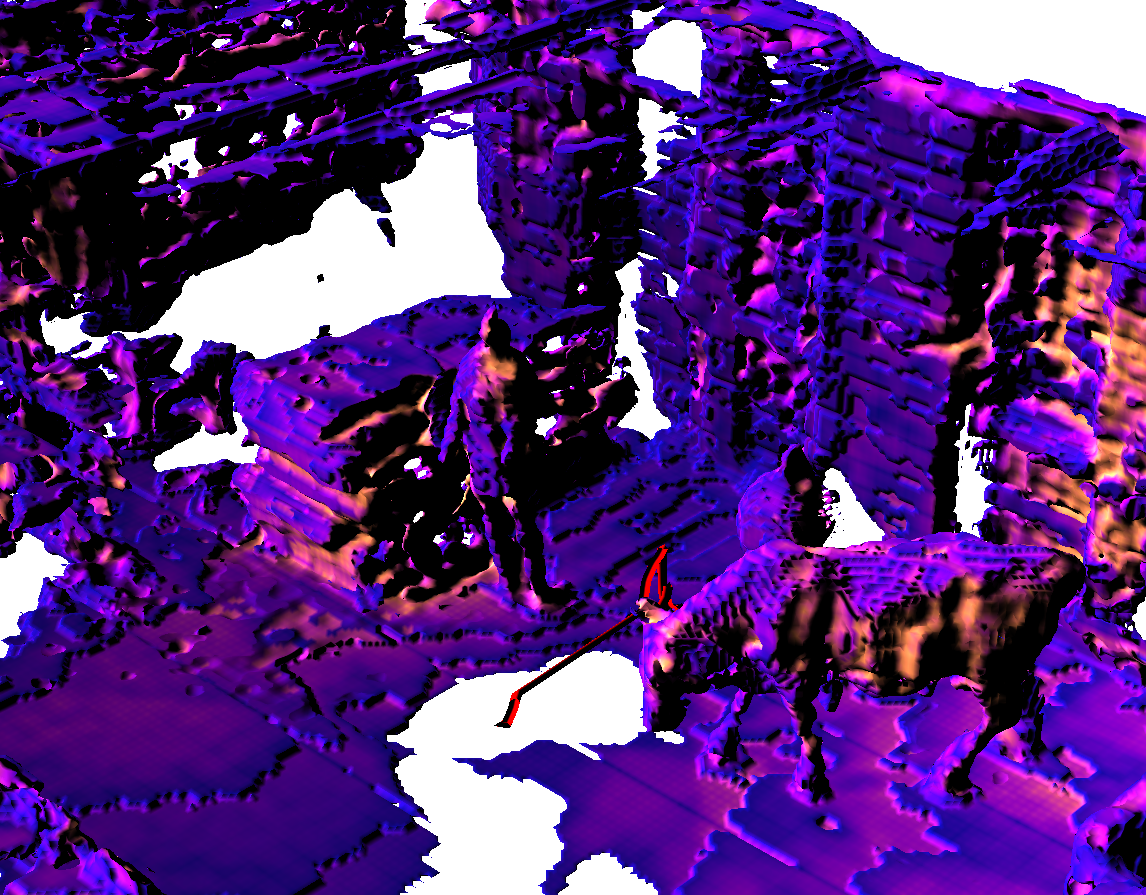}
\hfill%
\includegraphics[width=0.5\linewidth]{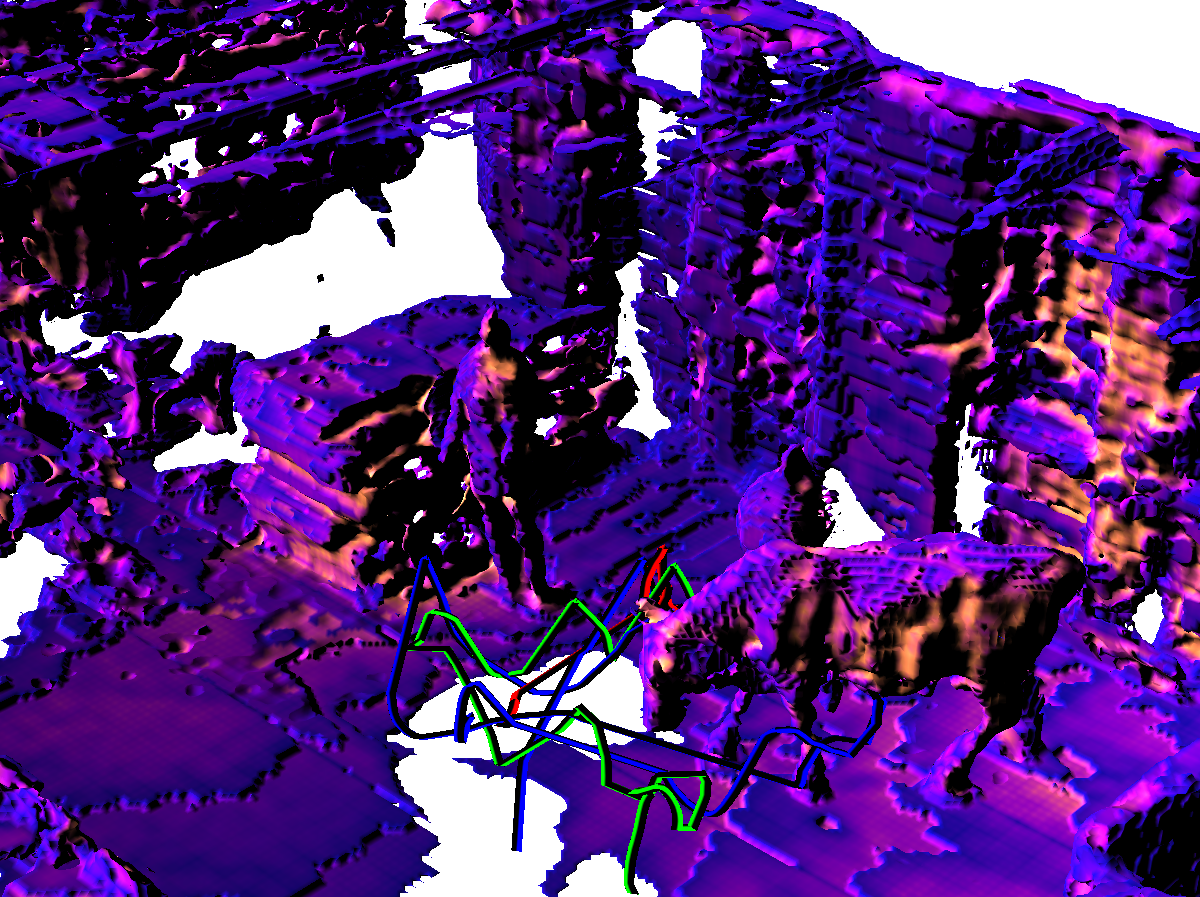}
\caption{The Cow and Lady dataset \cite{oleynikova2017voxblox} is divided into three equal sequences of about $275$ depth images, and each is considered data obtained by one robot. The three camera trajectories are shown in red, green, and blue on the right. The left plot shows the final reconstruction obtained by the first robot. The right plot shows the final reconstruction obtained by centralized GP regression using the observations of all three robots. The orange hues indicate larger variance. As expected, due to Prop.~\ref{prop:cnvrgn}, the reconstruction of robot one is identical with that of the centralized estimator. The initial GP parameters for each robot and object class were $\mu^\rb_{0,\cls}(\bbx)=0.15$ and $k^\rb_{0,\cls}(\bbx,\bbx)=5$.}
\label{oneClassMulti}
\end{minipage}%
\hfill%
\begin{minipage}[b]{.29\textwidth}
  \centering
  \includegraphics[width=\linewidth]{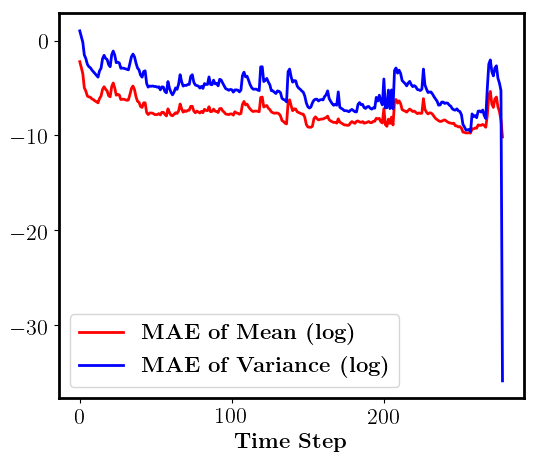}
  \caption{Log-space plot of the mean absolute error (MAE) between the mean (red) and variance (blue) predictions of robot $1$ and centralized GP regression for the sequence in Fig.~\ref{oneClassMulti}. When the data streaming stops at the end, the MAE approaches zero ($-\infty$ in log space).}
  %Two curves show the log of the MAE of the mean and variance in convergence of the robot's estimation to centralized estimation over time, in multi robot exploration of the Cow and lady dataset. At the end, errors go to zero, so their log go to -infinity.}
  \label{oneClassMultiMeanVar}
\end{minipage}
\end{figure*}

\begin{figure*}[t]
\begin{minipage}[b]{.7\linewidth}
\includegraphics[width=0.5\linewidth]{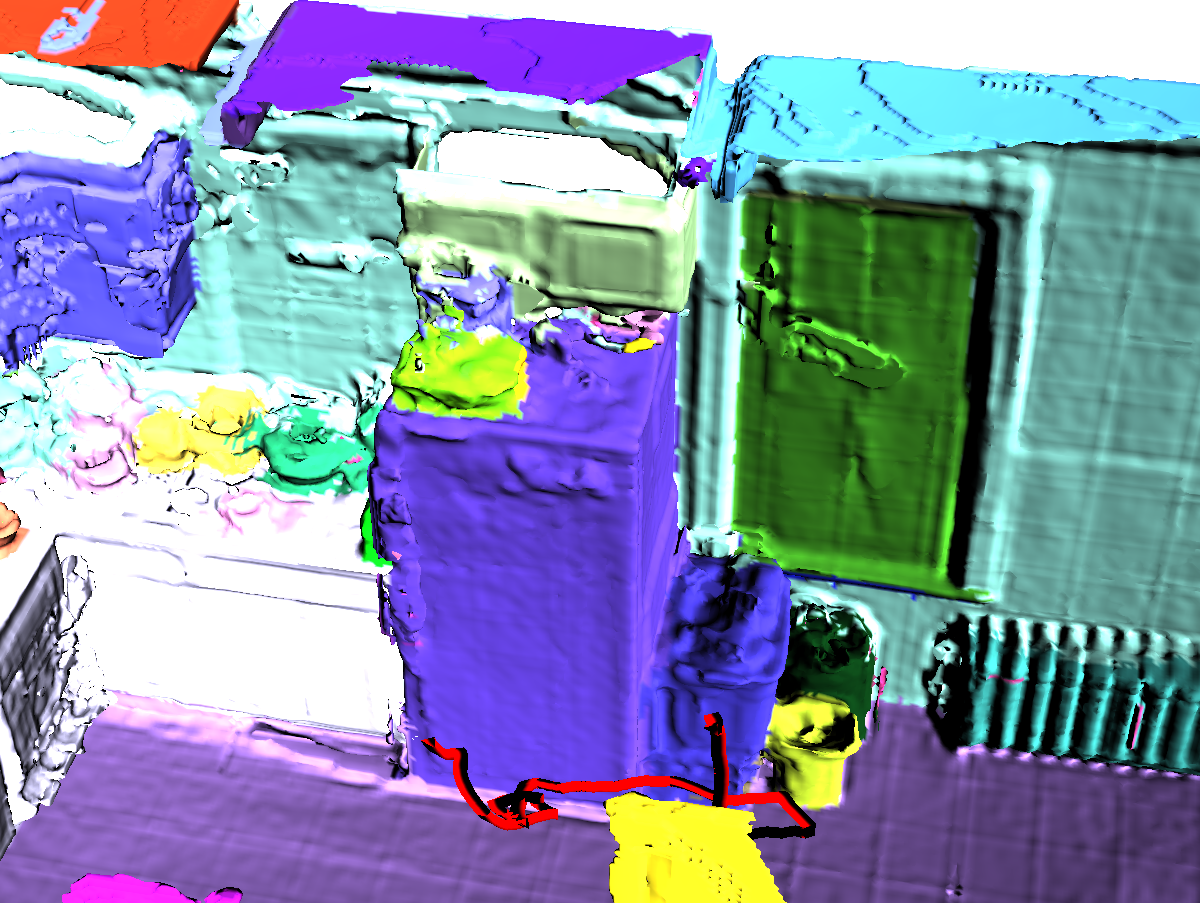}
\hfill%
\includegraphics[width=0.5\linewidth]{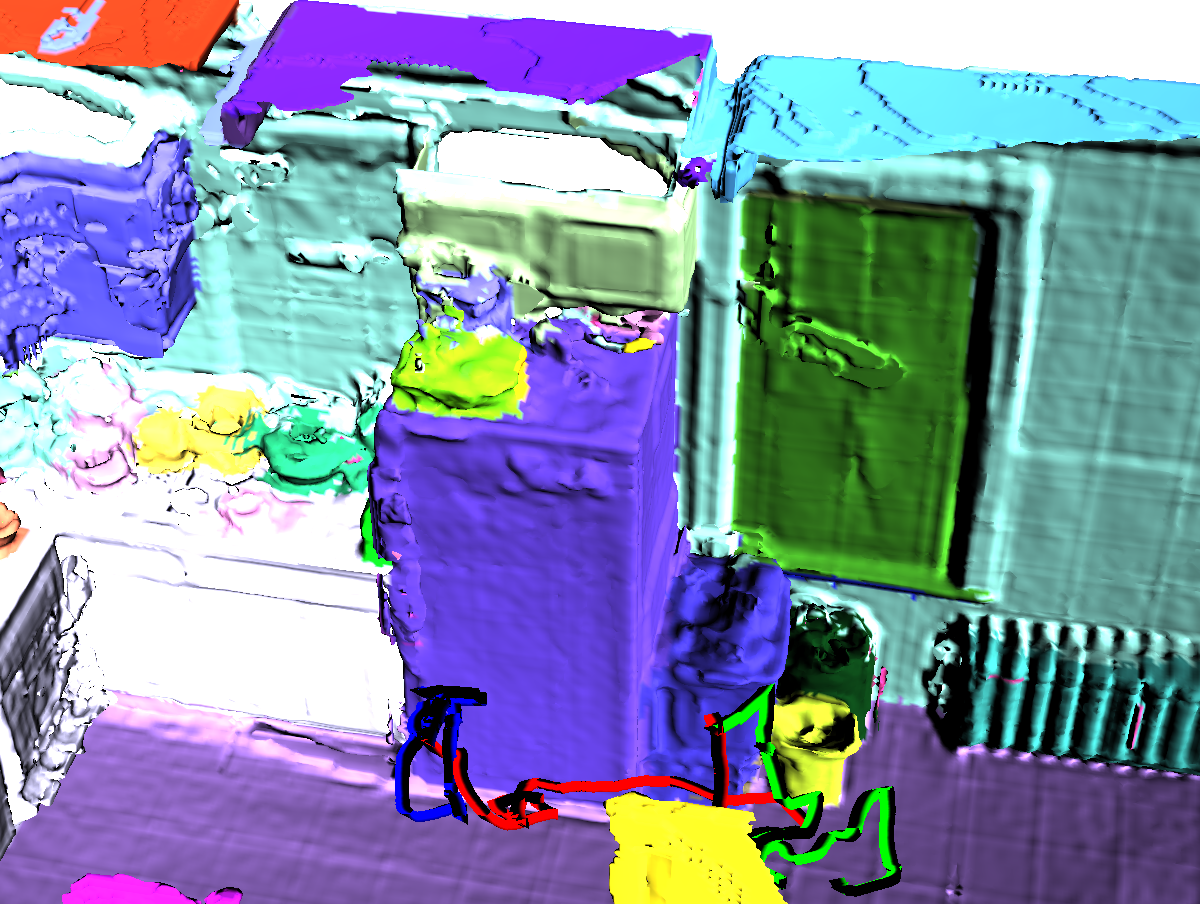}
\caption{Sequence $255$ of the SceneNN dataset \cite{scenenn-3dv16} is divided into three equal sequences of about $800$ RGB-D images, and each is considered data obtained by a different robot. The three camera trajectories are shown in red, green, and blue on the right. The left plot shows the final metric-semantic reconstruction obtained by the first robot. The right plot shows the final reconstruction obtained by centralized GP regression using the observations of all three robots. As expected, due to Prop.~\ref{prop:cnvrgn}, the reconstruction of robot one is identical with that of the centralized estimator. The initial GP parameters for each robot and object class were $\mu^\rb_{0,\cls}(\bbx)=0.09$ and $k^\rb_{0,\cls}(\bbx,\bbx)=5$.}
\label{multiClassMulti}
\end{minipage}%
\hfill%
\begin{minipage}[b]{.29\linewidth}
  \centering
  \includegraphics[width=\linewidth]{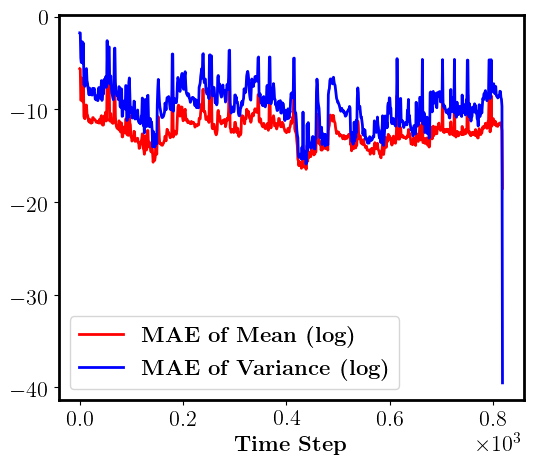}
\caption{Log-space plot of the mean absolute error (MAE) between the mean (red) and variance (blue) predictions of robot $1$ and centralized GP regression for the sequence in Fig.~\ref{multiClassMulti}. When the data streaming stops at the end, the MAE approaches zero ($-\infty$ in log space).}
\label{multiClassMultiMeanVar}
\end{minipage}
\end{figure*}

\section{Conclusion}
\label{sec:conclusion}

This paper developed a Bayesian inference method for online probabilistic metric-semantic mapping via scalable Gaussian Processes regression of semantic class signed distance functions. Our algorithm was enabled by several key ideas. First, repeated observations of the same environment locations can be compressed before training a GP regression method without any effect on the posterior distribution. This, combined with an overlapping-leaf octree data structure of pseudo points, allowed the development of an incremental sparse GP regression technique, which scales to large domains. Second, instead of explicit modeling of class likelihoods and reliance on computationally challenging GP classification techniques, the presence of distance measurements allows independent GP regression for each class. A class probability mass function can still be recovered at test time based on the distance distributions, and its accuracy was shown empirically to be resilient to increasing classification error rates. Third, distributed parameter estimation techniques based on consensus can be extended to distributed function estimation by relying on incrementally growing pseudo points. This enables distributed incremental sparse GP regression, guaranteed to converge in finite-time to the same distribution as that of a centralized estimator without relying on multi-hop communication. Our method enables robot teams to collaboratively build dense metric-semantic maps of unknown environments using streaming RGB-D measurements. This offers a promising direction for future research in semantic task specifications and uncertainty-aware task planning. 

\bibliographystyle{cls/IEEEtran.bst}
\bibliography{bib/ref.bib}

\end{document}